\documentclass{article}

% if you need to pass options to natbib, use, e.g.:
%     \PassOptionsToPackage{numbers, compress}{natbib}
% before loading neurips_2025

% ready for submission
% \usepackage{neurips_2025}

% to compile a preprint version, e.g., for submission to arXiv, add add the
% [preprint] option:
%     \usepackage[preprint]{neurips_2025}

% to compile a camera-ready version, add the [final] option, e.g.:
    \usepackage[final]{neurips_2025}

% to avoid loading the natbib package, add option nonatbib:
%    \usepackage[nonatbib]{neurips_2025}

\usepackage[utf8]{inputenc} % allow utf-8 input
\usepackage[T1]{fontenc}    % use 8-bit T1 fonts
\usepackage{hyperref}       % hyperlinks
\usepackage{url}            % simple URL typesetting
\usepackage{booktabs}       % professional-quality tables
\usepackage{amsfonts}       % blackboard math symbols
\usepackage{nicefrac}       % compact symbols for 1/2, etc.
\usepackage{microtype}      % microtypography
\usepackage{xcolor}         % colors

\title{Semantic Surgery: Zero-Shot Concept Erasure in Diffusion Models}

% The \author macro works with any number of authors. There are two commands
% used to separate the names and addresses of multiple authors: \And and \AND.
%
% Using \And between authors leaves it to LaTeX to determine where to break the
% lines. Using \AND forces a line break at that point. So, if LaTeX puts 3 of 4
% authors names on the first line, and the last on the second line, try using
% \AND instead of \And before the third author name.

% \author{%
%   Lexiang Xiong \\
%   Sichuan University \\
%   National University of Singapore \\
%   \texttt{e1520135@u.nus.edu} \\
%   \And
%   Chengyu Liu \\
%   Beijing Institute of Technology \\
%   National University of Singapore \\
%   \texttt{e1554340@u.nus.edu} \\
%   \And
%   Jingwen Ye \\
%   National University of Singapore \\
%   \texttt{jingweny@nus.edu.sg} \\
%   % \AND % <-- 这个命令强制换行
%   \And
%   Yan Liu \\
%   Sichuan University \\
%   \texttt{liuyan@scu.edu.cn} \\
%   \And
%   Yuecong Xu \\
%   National University of Singapore \\
%   \texttt{yc.xu@nus.edu.sg} \\
% }

\author{%
  Lexiang Xiong$^{1,2}$ \quad
  Chengyu Liu$^1$ \quad
  Jingwen Ye$^1$ \quad
  Yan Liu$^2$ \quad
  Yuecong Xu$^1$ \\
  \texttt{\{e1520135@u.nus.edu, e1554340@u.nus.edu, jingweny@nus.edu.sg\}} \\
  \texttt{liuyan@scu.edu.cn, yc.xu@nus.edu.sg} \\
  \vspace{2mm} \\ % Adds a small vertical space
  $^1$National University of Singapore \quad $^2$Sichuan University
}

% \author{%
%   David S.~Hippocampus\thanks{Use footnote for providing further information
%     about author (webpage, alternative address)---\emph{not} for acknowledging
%     funding agencies.} \\
%   Department of Computer Science\\
%   Cranberry-Lemon University\\
%   Pittsburgh, PA 15213 \\
%   \texttt{hippo@cs.cranberry-lemon.edu} \\
%   % examples of more authors
%   % \And
%   % Coauthor \\
%   % Affiliation \\
%   % Address \\
%   % \texttt{email} \\
%   % \AND
%   % Coauthor \\
%   % Affiliation \\
%   % Address \\
%   % \texttt{email} \\
%   % \And
%   % Coauthor \\
%   % Affiliation \\
%   % Address \\
%   % \texttt{email} \\
%   % \And
%   % Coauthor \\
%   % Affiliation \\
%   % Address \\
%   % \texttt{email} \\
% }

\usepackage[table]{xcolor}
\usepackage{amsthm}

  % 可选：按章节编号
\newtheorem{theorem}{Theorem}  % "Theorem" is the label 
\newtheorem{corollary}{Corollary}[section]  % 可选：按章节编号prefix (e.g., "Theorem 1")
\newtheorem{assumption}{Assumption}[section]

\definecolor{myred}{RGB}{255,200,200}
\usepackage{amsmath}
\usepackage{enumitem} % 添加这一行来引入enumitem宏包
\usepackage{graphicx}
\usepackage{booktabs} % For \toprule, \midrule, \bottomrule, \cmidrule
\usepackage{multirow} % For \multirow
\usepackage{amssymb}  % For \checkmark

\usepackage{tikz}
\usetikzlibrary{shapes.geometric, arrows.meta, positioning, calc, backgrounds}
\usepackage{pifont} % For circled numbers

\bibliographystyle{plain}  % 选择参考文献样式

\begin{document}

\hbadness=2000000000
\vbadness=2000000000
\hfuzz=100pt

\setlength{\abovedisplayskip}{0pt}
\setlength{\belowdisplayskip}{0pt}
\setlength{\floatsep}{3pt plus 1.0pt minus 1.0pt}
\setlength{\intextsep}{3pt plus 1.0pt minus 1.0pt}
\setlength{\textfloatsep}{3pt plus 1.0pt minus 1.0pt}
\setlength{\parskip}{0pt}
\setlength{\abovedisplayshortskip}{0pt}
\setlength{\belowdisplayshortskip}{0pt}

\maketitle

\begin{abstract}
With the growing power of text-to-image diffusion models, their potential to generate harmful or biased content has become a pressing concern, motivating the development of concept erasure techniques. 
Existing approaches, whether relying on retraining or not, frequently compromise the generative capabilities of the target model in achieving concept erasure.
Here, we introduce \textbf{Semantic Surgery}, a novel training-free framework for zero-shot concept erasure.
Semantic Surgery directly operates on text embeddings \textit{before} the diffusion process, aiming to neutralize undesired concepts at their semantic origin with dynamism to enhance both erasure completeness and the locality of generation.
Specifically, Semantic Surgery dynamically estimates the presence of target concepts in an input prompt, based on which it performs a calibrated, scaled vector subtraction to neutralize their influence at the source.
The overall framework consists of a Co-Occurrence Encoding module for robust multi-concept erasure and a visual feedback loop to address latent concept persistence, thereby reinforcing erasure throughout the subsequent denoising process.
Our proposed Semantic Surgery requires no model retraining and adapts dynamically 
to the specific concepts and their intensity detected in each input prompt, ensuring precise and context-aware interventions.
Extensive experiments are conducted on object, explicit content, artistic style, and multi-celebrity erasure tasks, demonstrating that our method significantly outperforms state-of-the-art approaches.
That is, our proposed concept erasure framework achieves superior completeness and robustness while preserving locality and general image quality(e.g., achieving a 93.58 H-score in object erasure, reducing explicit content to just 1 instance with a 12.2 FID, and attaining an 8.09 $H_a$ in style erasure with no MS-COCO FID/CLIP degradation).
Crucially, this robustness enables our framework to function as a built-in threat detection system by monitoring concept presence scores, offering a highly effective and practical solution for safer text-to-image generation. 
Our code is publicly available at \url{https://github.com/Lexiang-Xiong/Semantic-Surgery}.
\end{abstract}

% \begin{abstract}
%   This paper introduces Semantic Surgery (SS), an innovative dual-stage zero-shot framework inspired by the unique properties of embeddings in NLP and their zero-shot generalization ability. SS is designed to efficiently erase predefined concepts in Text-to-Image (T2I) models by strategically intervening at the embedding level between text encoders and diffusion models. This approach enables targeted concept removal while preserving non-target semantic information, ensuring high image quality and naturalness. Additionally, the framework features mechanisms for identifying and erasing specific concepts, providing feedback to eliminate residual unwanted elements, and supporting simultaneous erasing of multiple concepts. By leveraging these advanced embedding techniques, SS achieves precise content control without requiring model retraining or fine-tuning, offering a novel solution for ethical considerations in T2I applications. It demonstrates effectiveness through the generation of high-quality images free from designated concepts, underscoring its potential for enhancing transparency, fairness, and privacy in visual content creation.
    
\section{Introduction}
\label{sec:introduction}

In recent years, Text-to-Image (T2I) diffusion models~\cite{ho2020denoising,  ho2021classifier, nichol2021improved, rombach2022high, zhang2023adding} offer remarkable image generation capabilities but also risk producing harmful or infringing content (e.g., explicit material, copyrighted styles)~\cite{ schramowski2023safe, shan2023glaze,schuhmannlaion, somepalli2023diffusion}. Initial mitigations like dataset filtering~\cite{stable-diffusion-v2} or post-hoc checkers~\cite{stable-diffusion-safety-checker} are often costly or offer limited protection~\cite{ schramowski2023safe, tsairing, yang2024mma}.

A primary challenge in concept erasure is achieving both high \textbf{completeness} (thorough removal of target concepts) and \textbf{locality} (minimal impact on unrelated content). \textbf{Parameter-Modifying Methods}\cite{gandikota2023erasing,gandikota2024unified, gong2024reliable, huang2024receler, kim2024race, lu2024mace, lyu2024one, zhang2024defensive, zhang2024forget} which modify model parameters to ``unlearn'' concepts, often excel in demonstrating erasure potential but inherently struggle with this trade-off~\cite{ huang2024receler, lu2024mace, schramowski2023safe}. Effective unlearning frequently leads to catastrophic forgetting, degrading general capabilities. Such modifications are also computationally expensive per new concept and establish \textit{pre-defined static defenses}. \textbf{Their static, passive defenses struggle with concept variants beyond training/editing samples, hindering full completeness.} While some approaches incorporate adversarial training techniques to bolster robustness against such variations~\cite{huang2024receler, zhang2024defensive}, this typically increases computational demands without fully overcoming the inherent inflexibility of static defenses or the risk of \textit{permanently altering} the model's versatility, especially when dealing with cumulative interference from multiple concepts~\cite{lu2024mace}.

This motivates the exploration of solutions that operate at inference time without altering the base model, often termed \textbf{inference-time methods}. While early guidance techniques~\cite{schramowski2023safe} lack precision\cite{huang2024receler, wang2024precise}, more recent inference-time approaches have emerged. For instance, some methods operate by projecting selected token embeddings~\cite{yoonsafree} or attention values~\cite{wang2024precise} during the diffusion process. While offering flexibility and preserving the original model, these often intervene at a token-specific level or mid-diffusion. Yet, these methods might prove inadequate, as self-attention\cite{ashish2017attention} in text encoder can spread the target concept's semantics across the entire token sequence, enabling its reconstruction from unrelated tokens' residual information~\cite{lu2024mace}. The core challenge thus remains: how to achieve robust completeness and locality but with the adaptability and model-preservation benefits of a zero-shot, inference-time strategy that addresses concepts at a more fundamental, global semantic level.
% Such modifications are also computationally expensive per new concept, establish \textit{pre-defined static defenses} ill-suited for dynamic safety needs, and risk \textit{permanently altering} the model's versatility, especially when dealing with cumulative interference from multiple concepts~\cite{lu2024mace}.

We contend that the key lies in a \textit{globally-aware, pre-diffusion intervention} directly on the text embedding using principled vector arithmetic. Our method, \textbf{Semantic Surgery}, operates as a zero-shot, inference-time framework. It leverages the linear structure of language embeddings~\cite{mikolov2013distributed, mikolov2013linguistic}, inspired by "activation engineering" in LLMs~\cite{rimsky2024steering, todd2024function}, to \textbf{dynamically assess the presence of target concepts within the global prompt semantics\footnote{i.e., the complete sequence of token embeddings representing the entire input prompt from the text encoder.} and, based on this assessment, selectively neutralizes their influence on the \textit{entire} text embedding before it guides the diffusion process.} This targeted, pre-diffusion modification of the global semantic input aims to directly enhance erasure completeness and preserve locality, offering a flexible alternative to the static and often costly parameter-modifying methods.

Furthermore, Semantic Surgery incorporates \textit{Co-Occurrence Encoding} to systematically manage the complex interactions during multi-concept erasure (Eq.~\ref{eq:co_occurrence})—a scenario particularly problematic for methods relying on cumulative parameter modifications. We also address \textit{Latent Concept Persistence (LCP)}, where U-Net priors cause concept resurgence, via an optional \textit{Visual Feedback Adjustment} (Eq.~\ref{eq:final_surgery_lcp_final_v3}) that refines the \textit{textual embedding}.

Our contributions are:
\begin{itemize}[leftmargin=1.5em]
    \item \textbf{Novel Global Embedding-Space Erasure via Semantic Arithmetic:} We propose Semantic Surgery, a zero-shot, inference-time method that uniquely applies calibrated vector subtraction to the \textit{entirety} of the text embedding. This offers a direct and adaptable approach to neutralize concepts at their semantic source, aiming to overcome the completeness-locality trade-offs and static limitations prevalent in methods that modify model parameters.
    \item \textbf{Principled Solutions to Advanced Erasure Scenarios:} Our Co-Occurrence Encoding provides a structured approach to robust multi-concept erasure, and our textual-refinement solution for LCP addresses a key challenge in achieving comprehensive visual safety, advancing beyond simpler intervention strategies.
    \item \textbf{Demonstrating Inference-Time Efficacy Against Strong Parameter-Modifying Baselines:} Extensive experiments show Semantic Surgery achieves highly competitive, and in several aspects superior, performance against robust parameter-modifying methods in erasure completeness, locality, and image quality, highlighting the potential of sophisticated inference-time techniques to offer practical and effective solutions.
\end{itemize}
\section{Related Work}
\paragraph{Concept Erasure in Diffusion Models}
Controlling unwanted concepts in T2I models is critical for safety and alignment, aiming to improve erasure \textbf{completeness} and generation \textbf{locality}. Early approaches like dataset filtering~\cite{stable-diffusion-v2} are prohibitively expensive, while post-hoc image checkers~\cite{bedapudinudenet, nsfw_model, schramowski2023safe} are often easily circumvented~\cite{schramowski2023safe, tsairing, yang2024mma}. A major line of work involves \textbf{modifying model parameters}, encompassing retraining~\cite{stable-diffusion-v2}, fine-tuning~\cite{gandikota2023erasing, heng2023selective, huang2024receler,  kumari2023ablating, lu2024mace}, and direct model editing~\cite{gandikota2024unified, gong2024reliable}. While potentially effective, these methods inherently struggle with the completeness-locality trade-off, often causing catastrophic forgetting~\cite{gandikota2023erasing, lu2024mace}. They typically establish static defenses requiring costly updates for new concepts and may permanently alter model capabilities. Some improve prior preservation via regularization~\cite{ gong2024reliable, huang2024receler, lu2024mace} or adversarial training~\cite{huang2024receler, zhang2024defensive}, but fundamental limitations in adaptability and locality often persist. Alternatively, \textbf{inference-time methods} operate without altering the base model, offering flexibility. Basic guidance techniques~\cite{ schramowski2023safe} often lack precision~\cite{huang2024receler, wang2024precise}. More recent methods manipulate internal representations during diffusion, for instance, by projecting selected token embeddings~\cite{yoonsafree} or attention values~\cite{wang2024precise} token-wisely. However, these localized, mid-diffusion interventions face challenges. Firstly, token-wise modifications may be insufficient as self-attention can diffuse the target concept's semantics across the entire embedding sequence, allowing reconstruction from residual information in unrelated tokens~\cite{lu2024mace}. Secondly, these approaches typically do not explicitly address the potential for concepts to resurface due to model priors, a phenomenon we term Latent Concept Persistence (LCP). Our work, Semantic Surgery, introduces a distinct inference-time approach operating globally on the initial text embedding to address these limitations.

\paragraph{Semantic Geometry for Concept Control}
The principle that vector arithmetic can manipulate semantic meaning, famously demonstrated by word2vec analogies~\cite{ mikolov2013distributed, mikolov2013linguistic} ($\mathbf{e}_{\text{king}} - \mathbf{e}_{\text{man}} + \mathbf{e}_{\text{woman}} \approx \mathbf{e}_{\text{queen}}$), suggests the potential for algebraic control over concepts embedded in vector spaces. This concept has been powerfully exploited in Large Language Models (LLMs) through "activation engineering," where adding or subtracting specific vectors derived from activations can causally steer model behavior or induce specific functionalities~\cite{liu2024context, rimsky2024steering, todd2024function}. Similar ideas have been explored in the context of Text-to-Image (T2I) models, such as manipulating image embeddings algebraically within CLIP space~\cite{ramesh2022hierarchical} or proposals involving noise manipulation for concept control~\cite{wang2023concept}. Our work, \textbf{Semantic Surgery}, builds upon these foundations but distinctively applies the principle of semantic vector arithmetic directly to the \textit{initial text embedding} for the specific task of targeted concept \textit{erasure} with dynamic concept identification. We leverage the geometric properties of the text embedding space to perform a calibrated subtraction, aiming to neutralize unwanted concepts at their semantic source before the diffusion process begins. We further integrate a visual feedback mechanism into the concept elimination process to address the Latent Concept Persistence (LCP) problem.
\section{Method}
\label{sec:method}

Let a text-to-image diffusion model be defined as a generative process $\mathcal{G}_\theta: \mathcal{P} \to \mathcal{I}$, where $\mathcal{P}$ is the prompt space and $\mathcal{I}$ is the image space. The model first encodes an input prompt $p \in \mathcal{P}$ into a semantic embedding $e = \phi(p) \in \mathbb{R}^k$ via a text encoder $\phi(\cdot)$, then generates an image $I \sim p_\theta(I|e)$ through iterative denoising of latent variables $\{z_t\}_{t=1}^T$.

\paragraph{Problem Formulation.}
\label{sec:problem_formulation}
We define a \textit{concept} as a distinct semantic factor of variation (e.g., object, style, attribute) \cite{wang2023concept} influencing the generated image $I$. Let $\text{Concepts}(I) \subseteq \mathcal{U}$ denote the subset of all possible concepts present in $I$, where $\mathcal{U}$ is the universal concept set. Given an input embedding $e = \phi(p)$ and a target set of concepts $\mathcal{C}_{\text{erase}} \subset \mathcal{U}$ designated for removal, our objective is to design an \textit{embedding surgery operator} $\mathcal{T}: \mathbb{R}^k \to \mathbb{R}^k$. This operator produces a modified embedding $e' = \mathcal{T}(e)$.

The core challenge lies in designing $\mathcal{T}$ to simultaneously satisfy two potentially competing desiderata concerning the statistical properties of images $I \sim p_\theta(I|e')$ generated from the modified embedding:
\begin{itemize}[leftmargin=1.5em]
    \item \textbf{Completeness:} The operator must effectively remove the target concepts $\mathcal{C}_{\text{erase}}$. Formally, the expected presence of $\mathcal{C}_{\text{erase}}$ in generated images should be bounded by a safety threshold $\epsilon_{\text{safe}}$:
    \begin{equation}
        \mathbb{E}_{I \sim p_\theta(I|e')} \left[ \mathbb{I}(\mathcal{C}_{\text{erase}} \subseteq \text{Concepts}(I)) \right] \leq \epsilon_{\text{safe}}.
        \label{eq:main_obj}
    \end{equation}
    \item \textbf{Locality (Fidelity):} The modification should minimally affect non-target concepts $c \notin \mathcal{C}_{\text{erase}}$. The change in their expected presence probability, compared to generation from the original embedding $e$, should be limited by a tolerance $\epsilon_{\text{tol}}$:
    \begin{equation}
        \forall c \notin \mathcal{C}_{\text{erase}}, \quad \left| \mathbb{E}_{I \sim p_\theta(I|e')} [\mathbb{I}(c \in \text{Concepts}(I))] - \mathbb{E}_{I \sim p_\theta(I|e)} [\mathbb{I}(c \in \text{Concepts}(I))] \right| \leq \epsilon_{\text{tol}}.
        \label{eq:fidelity}
    \end{equation}
    \item \textbf{Robustness:} The operator must be stable against minor prompt variations (e.g., paraphrasing). Formally, $\mathcal{T}$ must be locally Lipschitz continuous. There must exist a constant $L > 0$ such that for any embedding $e$ and a sufficiently small perturbation $\delta e$:
    \begin{equation}
        \|\mathcal{T}(e + \delta e) - \mathcal{T}(e)\| \leq L \|\delta e\|.
        \label{eq:robustness}
    \end{equation}
\end{itemize}

Here, $\mathbb{I}(\cdot)$ is the indicator function. Achieving high completeness (low $\epsilon_{\text{safe}}$) while maintaining high locality (low $\epsilon_{\text{tol}}$) represents the central trade-off addressed in this work. The parameters $\epsilon_{\text{safe}} \in [0,1]$ and $\epsilon_{\text{tol}} \in [0,1]$ quantify the target performance levels, whose attainment by our proposed method is evaluated empirically.

\begin{figure*}[t!]
\centering
\scalebox{0.8}{ % Slightly adjusted scale to accommodate new node
\begin{tikzpicture}[
    node distance=8mm and 10mm, % Adjusted distances for tighter layout
    block/.style={rectangle, draw, thick, text width=3cm, align=center, rounded corners, minimum height=1.2cm},
    io/.style={trapezium, trapezium left angle=70, trapezium right angle=110, draw, thick, minimum width=2cm, align=center},
    op/.style={circle, draw, thick, minimum size=8mm},
    data/.style={rectangle, draw, dashed, thick, text width=3.5cm, align=center, rounded corners, minimum height=1.2cm, fill=blue!5}, % New style for data node
    arrow/.style={-Latex, thick},
    dashed_arrow/.style={-Latex, thick, dashed, color=red!80!black}
]

% --- ACT 1: Semantic Analysis ---
\node[io] (prompt) {Prompt $p$};
\node[block, right=of prompt, text width=2cm] (encoder) {Text Encoder (CLIP)};
\node[io, right=of encoder, xshift=2mm] (e_in) {$e_{\text{input}}$};

\node[block, below=of encoder, text width=3.5cm, yshift=-5mm] (biopsy) {
    \textbf{\ding{172} Semantic Biopsy} \\
    $\alpha_c = \cos(e_{\text{input}}, \Delta e_c)$ \\
    $\hat{\rho}_i = \sigma((\alpha_c - \beta)/\gamma)$
};

% NEW NODE for detected concepts
\node[data, below=of biopsy] (detected_concepts) {
    Detected Concepts \& Scores \\
    $\mathcal{C}_{\text{active}} = \{c_i \mid \hat{\rho}_i \ge \tau\}$
};

\node[block, below=of detected_concepts, text width=3.5cm] (co_occur) {
    \textbf{\ding{173} Co-Occurrence Encoding} \\
    $p_{\text{co}} = \bigoplus_{c_i \in \mathcal{C}_{\text{active}}} p_{c_i}$ \\
    $\Delta e_{\text{co}} = \phi(p_{\text{co}}) - e_{\text{n}}$
};

% Arrows for Act 1
\draw[arrow] (prompt) -- (encoder);
\draw[arrow] (encoder) -- (e_in);
\draw[arrow] (e_in) |- (biopsy);
\draw[arrow] (biopsy) -- (detected_concepts);
\draw[arrow] (detected_concepts) -- (co_occur);

% --- ACT 2: Core Surgery & Initial Generation ---
\node[op, right=of e_in, yshift=-10mm] (surgery) {$-$};
\node[block, right=of surgery, text width=3cm] (diffusion) {\textbf{Diffusion Generator}}; % MODIFIED NAME
\node[io, right=of diffusion] (img_initial) {Initial Image};

% Arrows for Act 2
\draw[arrow] (e_in) -- (surgery);
\draw[arrow] (co_occur) -| node[pos=0.2, above, font=\small] {$\hat{\rho}_{\text{joint}} \Delta e_{\text{co}}$} (surgery);
\draw[arrow] (surgery) -- node[above, font=\small] {$e'_s$} (diffusion);
\draw[arrow] (diffusion) -- (img_initial);

% --- ACT 3: Optional Visual Feedback Loop ---
\begin{scope}[on background layer]
    \node[block, below=of diffusion, text width=3cm, yshift=-5mm, fill=red!5] (detector) {
        \textbf{\ding{175} Visual Feedback (LCP)} \\
        Vision Detector
    };
    \node[io, right=of detector, fill=green!5] (img_final) {Final Image};

    % Dashed arrows for the loop - RE-ROUTED
    \draw[dashed_arrow] (img_initial) -- (detector) node[pos=0.4, right, font=\small, color=red!80!black] {LCP Check};
    \draw[dashed_arrow] (detector) -| node[pos=0.25, above, font=\small, color=red!80!black] {Feedback Signal} (detected_concepts); % MODIFIED TARGET
    \draw[dashed_arrow] (diffusion) -- (img_final);
\end{scope}

% --- Labels for main components ---
\node[above=3mm of biopsy, text width=4.5cm, align=center, font=\bfseries] {(A) Semantic Analysis};
\node[above=3mm of diffusion, text width=8cm, align=center, font=\bfseries] {(B) Core Surgery \& Generation};

\end{tikzpicture}
} % End of \scalebox
\caption{
    \textbf{Overview of the Semantic Surgery Workflow.} 
    The process begins with \textbf{(A) Semantic Analysis}: \textbf{\ding{172} Semantic Biopsy} analyzes the initial embedding $e_{\text{input}}$ to produce concept presence scores $\{\hat{\rho}_i\}$. These scores determine the set of active concepts, $\mathcal{C}_{\text{active}}$. 
    \textbf{\ding{173} Co-Occurrence Encoding} then takes this set to form a unified removal direction $\Delta e_{\text{co}}$. 
    This leads to \textbf{(B) Core Surgery}: the scaled direction vector is subtracted from $e_{\text{input}}$ to produce a sanitized embedding $e'_s$, which is passed to the \textbf{Diffusion Generator} (composed of a U-Net and VAE decoder) to generate an initial image. 
    For critical safety tasks, an optional \textbf{\ding{175} Visual Feedback Loop} (dashed red path) uses a vision detector to check for Latent Concept Persistence (LCP). If a concept is visually detected, a feedback signal updates the set of active concepts, triggering a refined, stronger surgery that leads to the final, clean image. (Note: The surgery step itself is implicitly step \ding{174}).
    This entire framework is supported by a rigorous theoretical analysis (see Appendix~\ref{app:theoretical_guarantees} for proofs), where we formally prove guarantees for \textbf{Completeness} (Thm.~\ref{thm:Completeness}), \textbf{Locality} (Thm.~\ref{thm:Locality}), and \textbf{Robustness} (Thm.~\ref{thm:Robustness})
}
\label{fig:workflow_overview}
\end{figure*}
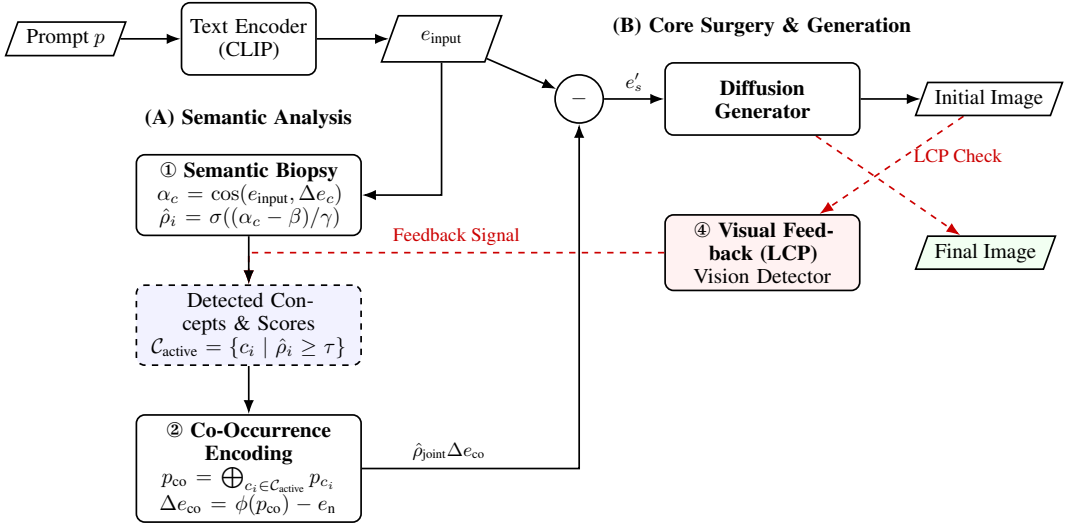
% MODIFICATION END

\subsection{Semantic Modeling in Text Embedding Space}
\label{subsec:semantic_modeling}

The semantic manipulation capability of our method stems from the intrinsic linear structure of CLIP's joint text-image embedding space \cite{radford2021learning}. This structure allows us to formalize concept removal through geometric operations that preserve semantic integrity.

\paragraph{Linear Analogy Basis for Single-Concept Erasure.} 
The foundational insight originates from CLIP's ability to encode semantic relationships as vector displacements as a language model\cite{mikolov2013linguistic}. For example, the analogy $\phi("\text{king}") - \phi("\text{man}") \approx \phi("\text{queen}") - \phi("\text{woman}")$ demonstrates that semantic transformations can be modeled through vector arithmetic. Formally, for concept pairs $(a,b)$ and $(c,d)$ sharing analogous relationships (e.g., $\text{king}:\text{man} \sim \text{queen}:\text{woman}$), their embeddings satisfy:
\begin{equation}
    \phi(a) - \phi(b) \approx \phi(c) - \phi(d), \label{eq:linear_relationship}
\end{equation}
establishing the linear structure essential for semantic manipulation.

Let $e_{\text{n}} = \phi("")$ denote the neutral reference embedding. Given an input prompt embedding $e_{\text{input}} = \phi(p_{\text{input}})$ and a target concept $c$ with embedding $e_{\text{erase}} = \phi(p_{\text{erase}})$, to generalize Eq.~\eqref{eq:linear_relationship} for concept removal, we introduce a binary presence indicator $\rho \in \{0,1\}$ that explicitly encodes whether the target concept exists in the input. This allows unified representation of both concept-containing ($\rho=1$) and concept-free ($\rho=0$) scenarios. Specifically, the analogical projection becomes:
% \begin{equation}
%     \underbrace{e_{\text{input}} - \rho e_{\text{erase}}}_{\text{(Input - Concept)}} \approx \underbrace{\phi(p_{\text{input} \setminus c}) - \rho e_{\text{n}}}_{\text{(Sanitized Input - Neutral)}}, 
% \end{equation}
\begin{equation}
    e_{\text{input}} - \rho e_{\text{erase}} \approx \phi(p_{\text{input} \setminus c}) - \rho e_{\text{n}}, 
\end{equation}
where $p_{\text{input} \setminus c}$ denotes the concept-free prompt. 
% The equivalence holds because:
% \begin{itemize}
%     \item When $\rho=1$ (concept present), the projection removes $\Delta e_{\text{erase}} = e_{\text{erase}} - e_{\text{n}}$ 
%     \item When $\rho=0$ (concept absent), both sides collapse to $\phi(p_{\text{input} \setminus c}) = e_{\text{input}}$ 
% \end{itemize}

Rearranging terms yields the semantic surgery operator that maps $e_{\text{input}}$ to the concept-removed space:
\begin{equation}
    e'_{\text{input}} = e_{\text{input}} - \rho(\underbrace{e_{\text{erase}} - e_{\text{n}}}_{\Delta e_{\text{erase}}})
    \label{eq:single_removal0}
\end{equation}
which projects the input embedding $e_{\text{input}}$ into the semantic subspace excluding concept $c$, satisfying $e'_{\text{input}} \approx \phi(p_{\text{input} \setminus c})$.

\paragraph{Co-Occurrence Encoding for Multi-Concept Erasure.} 
When extending single-concept removal to multiple targets $\{c_i\}_{i=1}^n$, a naive approach would linearly superimpose individual concept offsets $\sum_{i=1}^n \rho_{c_i}\Delta e_{c_i}$ following Eq.~\eqref{eq:single_removal0}'s paradigm. However, this strategy fails to account for semantic overlaps which will lead to uncontrollable semantic elimination. For instance, removing both "gull" and "sparrows" via separate offsets would excessively diminish avian features due to shared bird semantics. A qualitative comparison in Appendix~\ref{app:cooccurrence_analysis_condensed} (Fig.~\ref{fig:cooccurrence_vs_naive_qualitative_condensed_supp}) visually demonstrates the superiority of this approach over naive vector summation, which leads to significant image degradation.

The core solution lies in CLIP's capacity to model composite semantics. We construct the co-erasure direction through existence-ordered composition:
\begin{equation}
    \Delta e_{\text{co}} = \phi(p_{\text{co}}) - e_{\text{n}}, \quad 
    p_{\text{co}} = \mathop{\bigoplus}\limits_{i=1}^n\left(\{p_{c_i} | c_i \in \mathcal{C}_{\text{erase}}\}\right),
    \label{eq:co_occurrence}
\end{equation}
where $\mathop{\bigoplus}$ denotes the concept concatenation operator of existing concepts ($\rho_i=1$ cases). For simplicity, we define it as comma-separated string concatenation. The composite prompt $p_{\text{co}}$ leverages CLIP's contextual embeddings \cite{radford2021learning} to resolve semantic overlaps through phrase-level interaction, avoiding redundant subtraction of shared components.

Let $\rho_{\text{joint}} \in [0,1]$ denote the joint presence of all concepts selected in $p_{\text{co}}$:
% \begin{equation}
%     \rho_{\text{joint}} = \mathbb{I}(\exists c_i \in \mathcal{C}_{\text{erase}}, \rho_i=1)
% \end{equation}
\begin{equation}
    \rho_{\text{joint}} = \mathbb{I}\left( \bigwedge_{c_i \in \mathcal{S}} (\rho_i = 1) \right) = \begin{cases} 
    1 & \text{if } \forall c_i \in \mathcal{S}, \rho_i=1 \\
    0 & \text{otherwise}
    \end{cases}
\end{equation}
where $\mathcal{S} \subseteq \mathcal{C}_{\text{erase}}$ is the concept subset actually selected in $p_{\text{co}}$. 
The final operation preserves the form:
\begin{equation}
    e'_{\text{input}} = e_{\text{input}} - \rho_{\text{joint}} \Delta e_{\text{co}}.
    \label{eq:multi_removal}
\end{equation}

\subsection{Semantic Surgery}
\label{subsec:surgery}

\paragraph{Projection Decomposition.} 
The theoretical foundation stems from the geometric relationship in Eq.~\eqref{eq:single_removal0}. We formalize concept intensity estimation through:

\begin{theorem}[Concept Presence Projection]
\label{thm:projection}
For input embedding $e_{\text{input}}$ and concept direction $\Delta e_{\text{erase}}$, the presence intensity $\rho$ satisfies:
\begin{equation}
    \rho = \frac{\langle e_{\text{input}}, \Delta e_{\text{erase}} \rangle - \langle e_{\text{input}}', \Delta e_{\text{erase}} \rangle}{\|\Delta e_{\text{erase}}\|^2}
\end{equation}
where $e_{\text{input}}'$ represents the ideal sanitized embedding, which is generally unobservable during inference and serves as a theoretical construct for this geometric interpretation.
\end{theorem}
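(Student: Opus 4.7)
The plan is to derive the identity as a direct algebraic consequence of the single-concept surgery equation stated in Eq.~\eqref{eq:single_removal0}. That equation posits the relation $e'_{\text{input}} = e_{\text{input}} - \rho\,\Delta e_{\text{erase}}$ between the input embedding, the idealized sanitized embedding, the concept direction, and the (now continuously-extended) presence intensity $\rho$. The theorem asks us to invert this relation and solve for $\rho$, which is a one-line projection argument.

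Concretely, I would first rearrange Eq.~\eqref{eq:single_removal0} as $e_{\text{input}} - e'_{\text{input}} = \rho\,\Delta e_{\text{erase}}$. Then I would take the Euclidean inner product of both sides with $\Delta e_{\text{erase}}$, obtaining
\begin{equation}
\langle e_{\text{input}}, \Delta e_{\text{erase}}\rangle - \langle e'_{\text{input}}, \Delta e_{\text{erase}}\rangle = \rho\,\|\Delta e_{\text{erase}}\|^2.
\end{equation}
Dividing through by $\|\Delta e_{\text{erase}}\|^2$ gives the formula in the theorem. This step is valid provided $\|\Delta e_{\text{erase}}\|>0$, which holds whenever $p_{\text{erase}}$ differs semantically from the null prompt — a condition I would note explicitly as a mild nondegeneracy assumption on the concept being erased.

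The main subtlety, rather than a technical obstacle, is that Eq.~\eqref{eq:single_removal0} was originally derived from a CLIP linear-analogy ansatz that holds only approximately, and that $\rho$ was initially introduced as a binary indicator $\{0,1\}$. The theorem implicitly promotes $\rho$ to a real-valued projection coefficient, and treats $e'_{\text{input}}$ as the \emph{exact} concept-free counterpart whose existence the surgery model posits. I would therefore frame the proof as taking Eq.~\eqref{eq:single_removal0} as an exact defining relation between $e_{\text{input}}$, $e'_{\text{input}}$, and $\rho$ along the one-dimensional subspace $\mathrm{span}(\Delta e_{\text{erase}})$, after which the projection identity is a pure linear-algebraic consequence.

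Finally, I would remark on the interpretational role of the theorem: since $e'_{\text{input}}$ is unobservable at inference time, the formula cannot be used directly, but it motivates the practical estimator $\hat\rho_c = \sigma((\alpha_c-\beta)/\gamma)$ with $\alpha_c = \cos(e_{\text{input}},\Delta e_c)$ used in the Semantic Biopsy module, by showing that $\rho$ is, up to the unknown baseline $\langle e'_{\text{input}},\Delta e_{\text{erase}}\rangle/\|\Delta e_{\text{erase}}\|^2$, a monotone affine function of the projection of $e_{\text{input}}$ onto $\Delta e_{\text{erase}}$. This justifies replacing the unobservable baseline by a calibrated threshold $\beta$ and a temperature $\gamma$, which is the key conceptual payoff of the proof beyond the trivial algebra.
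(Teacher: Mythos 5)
Your proof is correct and follows exactly the same route as the paper's own argument: rearrange $e'_{\text{input}} = e_{\text{input}} - \rho\,\Delta e_{\text{erase}}$, take the inner product with $\Delta e_{\text{erase}}$, and divide by $\|\Delta e_{\text{erase}}\|^2$ (under the same nondegeneracy assumption $\Delta e_{\text{erase}}\neq 0$). Your added remarks on promoting $\rho$ from a binary indicator to a projection coefficient and on the estimator motivation are reasonable commentary, but the core derivation is identical.
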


\begin{corollary}[Angular Formulation]
For $\ell_2$-normalized encoders with $\|e_{\text{input}}\| \approx \|e_{\text{input}}'\|$, let $k = \|e_{\text{input}}\|/\|\Delta e_{\text{erase}}\|$. Defining $\alpha_c = \cos(e_{\text{input}}, \Delta e_{\text{erase}})$ and $\alpha' = \cos(e_{\text{input}}', \Delta e_{\text{erase}})$, we derive:
\begin{equation}
    \rho \approx k(\alpha_c - \alpha')
\end{equation}
\end{corollary}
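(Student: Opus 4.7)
The plan is to derive the corollary directly from Theorem~\ref{thm:projection} by rewriting each inner product in the numerator in its polar form (magnitude times cosine of the included angle), and then exploiting the stated norm-preservation assumption to collapse the expression to the claimed product $k(\alpha_c - \alpha')$. Since the theorem already gives an exact identity for $\rho$, the corollary is essentially a change of coordinates from Cartesian inner products to angular (cosine-similarity) quantities, so the proof is short and relies on three ingredients: the definition of cosine similarity, a common factor $\|\Delta e_{\text{erase}}\|$ that cancels with one power in the denominator, and the approximation $\|e_{\text{input}}\|\approx\|e_{\text{input}}'\|$.

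Concretely, I would proceed in three steps. First, I would substitute $\langle e_{\text{input}},\Delta e_{\text{erase}}\rangle = \|e_{\text{input}}\|\,\|\Delta e_{\text{erase}}\|\,\alpha_c$ and $\langle e_{\text{input}}',\Delta e_{\text{erase}}\rangle = \|e_{\text{input}}'\|\,\|\Delta e_{\text{erase}}\|\,\alpha'$ into the identity from Theorem~\ref{thm:projection}. Second, I would factor $\|\Delta e_{\text{erase}}\|$ out of the numerator and cancel it against one power of $\|\Delta e_{\text{erase}}\|^{2}$ in the denominator, obtaining
\begin{equation*}
\rho \;=\; \frac{\|e_{\text{input}}\|\,\alpha_c \;-\; \|e_{\text{input}}'\|\,\alpha'}{\|\Delta e_{\text{erase}}\|}.
\end{equation*}
Third, I would invoke the hypothesis $\|e_{\text{input}}\|\approx\|e_{\text{input}}'\|$ to pull the common magnitude out of the difference, yielding $\rho \approx \big(\|e_{\text{input}}\|/\|\Delta e_{\text{erase}}\|\big)(\alpha_c-\alpha') = k(\alpha_c-\alpha')$ by the definition of $k$.

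The only non-mechanical issue is justifying the norm-preservation approximation $\|e_{\text{input}}\|\approx\|e_{\text{input}}'\|$, and this is the one step I expect to need a careful word of explanation rather than calculation. For CLIP-style encoders that $\ell_2$-normalize their pooled output the two norms are in fact exactly equal, so the approximation becomes an equality; for the per-token embeddings used elsewhere in the paper, one must argue that the surgery of Eq.~\eqref{eq:single_removal0} moves $e_{\text{input}}$ by a displacement that is small in the tangential direction relative to $\|e_{\text{input}}\|$, so that first-order norm changes are negligible. A brief remark to this effect, or a bound of the form $\big|\|e_{\text{input}}\|-\|e_{\text{input}}'\|\big|\le \rho\|\Delta e_{\text{erase}}\|$ followed by noting that $\rho$ is small when the concept is only weakly present, suffices to control the error and to convert the ``$\approx$'' into the stated approximate equality. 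Everything else is algebraic bookkeeping already captured in the theorem.
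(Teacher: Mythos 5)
Your proof is correct and follows essentially the same route as the paper's: substitute the polar forms of the two inner products into the identity from Theorem~\ref{thm:projection}, cancel one factor of $\|\Delta e_{\text{erase}}\|$, and invoke $\|e_{\text{input}}\| \approx \|e_{\text{input}}'\|$ to factor out the common magnitude and obtain $\rho \approx k(\alpha_c - \alpha')$. Your additional remark bounding $\bigl|\|e_{\text{input}}\|-\|e_{\text{input}}'\|\bigr| \le \rho\|\Delta e_{\text{erase}}\|$ is a small and sensible refinement that the paper does not bother to make, since it takes the approximation as a stated hypothesis of the corollary.
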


\paragraph{Semantic Biopsy.} 
\label{sec:semantic_biopsy}
Since the sanitized embedding $e_{\text{input}}'$ is unobservable during inference, the residual similarity $\alpha'$ cannot be directly computed. The core objective of semantic biopsy is to estimate the concept intensity $\rho$ solely from the observable $\alpha_c$. As established by our empirical findings (Assumption~\ref{assump:alpha_separation_final}), the value of $\alpha_c$ itself is highly discriminative of whether the prompt implies concept presence or absence. We leverage this discriminability to calibrate an estimator $\hat{\rho}(\alpha_c)$ that maps $\alpha_c$ to a probabilistic presence score:

\begin{assumption}[Statistical $\alpha_c$-Separability]\label{assump:alpha_separation_final}
Let $\mathcal{D}_1$ denote the distribution of $\alpha_c(p)$ values when prompt $p$ contains the target concept, and $\mathcal{D}_0$ when $p$ does not contain the concept. There exist a threshold $\beta \in \mathbb{R}$, a separation margin $\epsilon > 0$, and a small tail probability $\delta_{\mathrm{sep}} \in (0,1/2)$ such that:
\begin{equation}
    \mathbb{P}_{\alpha_c \sim \mathcal{D}_1}[\alpha_c \geq \beta + \epsilon] \geq 1 - \delta_{\mathrm{sep}} \quad \text{and} \quad
    \mathbb{P}_{\alpha_c \sim \mathcal{D}_0}[\alpha_c \leq \beta - \epsilon] \geq 1 - \delta_{\mathrm{sep}}.
    \label{eq:assump_alpha_sep_final}
\end{equation}
This assumption, supported by empirical observations (Fig.~\ref{fig:appendix_alphac_distribution_example_final_new}, Appendix~\ref{app:empirical_evidence_alphac_final_detailed}), states that with high probability, $\alpha_c$ values from the two classes are separated by at least $2\epsilon$.
\end{assumption}

\begin{theorem}[Sigmoid Calibration with High Confidence]\label{thm:sigmoid_calibration_high_confidence}
Under Assumption~\ref{assump:alpha_separation_final}, for any target error bound $\delta_{err} \in (0,1/2)$, if we choose the sigmoid steepness parameter as

\begin{equation}
    \gamma = \frac{\epsilon}{\text{logit}(1-\delta_{err})} = \frac{\epsilon}{\ln\left(\frac{1-\delta_{err}}{\delta_{err}}\right)},
    \label{eq:gamma_choice_final}
\end{equation}
then the calibrated estimator $\hat{\rho}(\alpha_c) = \sigma_{\text{sigmoid}}\left(\frac{\alpha_c - \beta}{\gamma}\right)$ satisfies:
\begin{itemize}[leftmargin=1.5em]
    \item If $\alpha_c \sim \mathcal{D}_1$ (concept present) and $\alpha_c \geq \beta + \epsilon$, with probability $\geq 1-\delta_{\mathrm{sep}}$: $|\hat{\rho}(\alpha_c) - 1| \leq \delta_{\mathrm{err}}$.
    \item If $\alpha_c \sim \mathcal{D}_0$ (concept absent) and $\alpha_c \leq \beta - \epsilon$, with probability $\geq 1-\delta_{\mathrm{sep}}$: $|\hat{\rho}(\alpha_c) - 0| \leq \delta_{\mathrm{err}}$.
\end{itemize}
% \begin{itemize}[leftmargin=1.5em]
%     \item If $\alpha_c \sim \mathcal{D}_1$ (concept present) and $\alpha_c \geq \beta + \epsilon$ (occurs with probability $\ge 1-\delta_{\mathrm{sep}}$):
%     $|\hat{\rho}(\alpha_c) - 1| \leq \delta_{err}$.
%     \item If $\alpha_c \sim \mathcal{D}_0$ (concept absent) and $\alpha_c \leq \beta - \epsilon$ (occurs with probability $\ge 1-\delta_{\mathrm{sep}}$):
%     $|\hat{\rho}(\alpha_c) - 0| \leq \delta_{err}$.
% \end{itemize}
This means that for inputs falling outside the uncertainty region $(\beta-\epsilon, \beta+\epsilon)$, which happens with probability at least $1-\delta_{\mathrm{sep}}$ for each class, the estimator is $\delta_{err}$-close to the ideal binary presence.
\end{theorem}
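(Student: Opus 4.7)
The plan is to reduce the theorem to a straightforward monotonicity argument for the logistic function, with the probability factor lifted directly from Assumption~\ref{assump:alpha_separation_final}. First I would unpack the choice of $\gamma$ in Eq.~\eqref{eq:gamma_choice_final}. By construction, $\gamma = \epsilon / \operatorname{logit}(1-\delta_{\mathrm{err}})$, so $\epsilon/\gamma = \ln((1-\delta_{\mathrm{err}})/\delta_{\mathrm{err}})$, which means $\sigma(\epsilon/\gamma) = 1-\delta_{\mathrm{err}}$ and, by symmetry of the sigmoid, $\sigma(-\epsilon/\gamma) = \delta_{\mathrm{err}}$. This algebraic identity is exactly what calibrates the estimator so that the decision boundary margin $\epsilon$ in $\alpha_c$-space maps to the desired confidence margin in $\hat{\rho}$-space.

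Next I would handle the two deterministic implications separately, each conditional on the high-probability event supplied by the assumption. For the concept-present case, conditioning on $\alpha_c \geq \beta + \epsilon$ (an event of probability at least $1-\delta_{\mathrm{sep}}$ under $\mathcal{D}_1$), monotonicity of $\sigma$ gives $\hat{\rho}(\alpha_c) = \sigma((\alpha_c-\beta)/\gamma) \geq \sigma(\epsilon/\gamma) = 1-\delta_{\mathrm{err}}$, whence $|\hat{\rho}(\alpha_c)-1| \leq \delta_{\mathrm{err}}$. Symmetrically, for the concept-absent case, conditioning on $\alpha_c \leq \beta - \epsilon$ (again with probability $\geq 1-\delta_{\mathrm{sep}}$ under $\mathcal{D}_0$), $\hat{\rho}(\alpha_c) \leq \sigma(-\epsilon/\gamma) = \delta_{\mathrm{err}}$, hence $|\hat{\rho}(\alpha_c)-0| \leq \delta_{\mathrm{err}}$.

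Finally I would combine the deterministic pointwise bound with the probabilistic separation statement: since the pointwise bound holds on the event guaranteed by Assumption~\ref{assump:alpha_separation_final}, the overall guarantee inherits the probability $\geq 1-\delta_{\mathrm{sep}}$ for each class. There is no real obstacle here beyond bookkeeping; the only subtle point is ensuring that the probability is attached to the correct event (the $\alpha_c$ lying outside the uncertainty band $(\beta-\epsilon, \beta+\epsilon)$) rather than to the sigmoid computation, which is purely deterministic once $\alpha_c$ is fixed. The argument uses nothing beyond monotonicity of $\sigma$ and the algebraic identity $\sigma(\operatorname{logit}(q)) = q$, so the proof will be short and the main narrative effort will simply be highlighting why the calibration in Eq.~\eqref{eq:gamma_choice_final} is the unique choice that makes the margin $\epsilon$ translate exactly into the target error $\delta_{\mathrm{err}}$.
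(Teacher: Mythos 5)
Your proposal is correct and follows essentially the same approach as the paper's proof: both split into the two conditional cases $\alpha_c \geq \beta+\epsilon$ and $\alpha_c \leq \beta-\epsilon$, use monotonicity of the sigmoid together with the identity $\epsilon/\gamma = \operatorname{logit}(1-\delta_{\mathrm{err}})$ (equivalently $\sigma(\pm\epsilon/\gamma) = 1-\delta_{\mathrm{err}}$ or $\delta_{\mathrm{err}}$) to obtain the deterministic pointwise bound, and then inherit the probability factor directly from Assumption~\ref{assump:alpha_separation_final}. The only cosmetic difference is that the paper phrases the key step via $\operatorname{logit}(\delta_{\mathrm{err}}) = -\operatorname{logit}(1-\delta_{\mathrm{err}})$ rather than your explicit sigmoid symmetry $\sigma(-x) = 1-\sigma(x)$, which is the same fact.
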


\paragraph{Practical Parameter Estimation.}
The practical estimation of the decision threshold $\beta$ and sensitivity parameter $\gamma$ is crucial for calibration. We determine $\beta$ empirically by analyzing the separability of $\alpha_c$ distributions and fine-tune $\gamma$ to balance classification sharpness and robustness. A detailed discussion of this process is provided in Appendix~\ref{app:hyperparameter_sensitivity_condensed}.

\paragraph{Semantic Operation.}
Given the set of calibrated presence scores $\{ \hat{\rho}_i \}$ for each concept $c_i \in \mathcal{C}_{\text{erase}}$ (estimated via semantic biopsy, see Section~\ref{sec:semantic_biopsy}), we proceed with the multi-concept removal. First, we determine the subset of actively present concepts by applying a threshold $\tau$:
\begin{equation}
    \mathcal{C}_{\text{active}} = \{ c_i \in \mathcal{C}_{\text{erase}} \mid \hat{\rho}_i \ge \tau \}.
\end{equation}
If $\mathcal{C}_{\text{active}}$ is non-empty, we construct the corresponding co-occurrence prompt $p_{\text{co}} = \mathop{\bigoplus}_{c_i \in \mathcal{C}_{\text{active}}} p_{c_i}$ (optionally ordering concepts by descending $\hat{\rho}_i$) and compute its joint semantic direction $\Delta e_{\text{co}} = \phi(p_{\text{co}}) - e_{\text{n}}$ as defined in Eq.~\eqref{eq:co_occurrence}.

To modulate the removal strength, we approximate the joint presence factor $\rho_{\text{joint}}$ (from Eq.~\eqref{eq:multi_removal}) using the maximum estimated score among the active concepts:
\begin{equation}
    \hat{\rho}_{\text{joint}} = \max_{c_i \in \mathcal{C}_{\text{active}}} \{ \hat{\rho}_i \}. \label{eq:rho_joint_approx_concise}
\end{equation}
The final semantic surgery operation is then performed using this approximated strength and the joint direction:
\begin{equation}
    \hat{e}_{\text{input}}' = e_{\text{input}} - \hat{\rho}_{\text{joint}} \Delta e_{\text{co}}. \label{eq:multi_suture_final_concise}
\end{equation}
If $\mathcal{C}_{\text{active}}$ is empty, the operation reduces to the identity, $\hat{e}_{\text{input}}' = e_{\text{input}}$. This formulation naturally handles single concepts as a special case where $|\mathcal{C}_{\text{active}}| \le 1$.

\subsection{Visual Feedback for Latent Concept Persistence (LCP) Mitigation}
\label{subsec:lcp_mitigation_final_v3}

\paragraph{Latent Concept Persistence (LCP).}
Despite the initial semantic surgery (Eq.~\eqref{eq:multi_suture_final_concise}), yielding $\hat{e}_{\text{s}}'$, where target concepts $\mathcal{C}_{\text{erase}}$ are ideally absent from its direct semantics, these concepts may still be generated due to the U-Net's visual priors being triggered by other concepts $c_{\text{im}}$ in $\hat{e}_{\text{s}}'$ (e.g., "road" implying "trees"). The LCP risk, $\mathcal{R}(e)$, quantifies this unintended visual resurgence for a set of visually detected persistent concepts $\mathcal{C}_{\text{target}} \subseteq \mathcal{C}_{\text{erase}}$ (see Appendix~\ref{app:lcp_theory_final_v3} for formal definition).

\paragraph{LCP-Aware Concept Activation and Final Surgery via Visual Feedback.}
To mitigate LCP, images generated using $\hat{e}_{\text{s}}'$ are evaluated by a vision detector $\mathcal{D}$, yielding visual presence scores $\{\hat{\rho}_{\text{im}}^{(k)}\}$ for $c_k \in \mathcal{C}_{\text{erase}}$. Concepts with $\hat{\rho}_{\text{im}}^{(k)} \ge \tau_{\text{vis}}$ form the visually active set $\mathcal{C}_{\text{vis}}$, which constitutes our LCP target set $\mathcal{C}_{\text{target}}$.
The final set of concepts for removal, $\mathcal{C}^* = \mathcal{C}_{\text{sem}} \cup \mathcal{C}_{\text{vis}}$ (where $\mathcal{C}_{\text{sem}}$ is from initial semantic biopsy, Sec.~\ref{subsec:surgery}), determines the LCP-specific co-occurrence prompt $p_{\text{co}}^* = \bigoplus_{c_j \in \mathcal{C}^*} p_{c_j}$, from which the direction $\Delta e_{\text{co}}^*$ is derived.
The joint removal strength, $\hat{\rho}_{\text{joint}}^*$, is determined by the maximum effective confidence among concepts in $\mathcal{C}^*$. For $c_j \in \mathcal{C}^*$, its effective score is $\hat{\rho}_j$ if $c_j \in \mathcal{C}_{\text{sem}}$ (original semantic biopsy score), or $\lambda_{\text{vis}}\hat{\rho}_{\text{im}}^{(j)}$ if $c_j \in \mathcal{C}_{\text{vis}}$ (visually detected score scaled by a factor $\lambda_{\text{vis}} \in (0,1]$). If $c_j$ is in both sets, its effective score is the maximum of these two.
\begin{equation}
    \hat{\rho}_{\text{joint}}^* = \max_{c_j \in \mathcal{C}^*} \{ \text{effective\_score}(c_j) \}, \quad \text{or } 0 \text{ if } \mathcal{C}^* \text{ is empty}.
    \label{eq:rho_joint_star_v3}
\end{equation}
The final, LCP-mitigated embedding $\hat{e}_{\text{final}}'$ is then computed by applying a comprehensive surgery to the original $e_{\text{input}}$:
\begin{equation}
    \hat{e}_{\text{final}}' = e_{\text{input}} - \hat{\rho}_{\text{joint}}^* \Delta e_{\text{co}}^*.
    \label{eq:final_surgery_lcp_final_v3}
\end{equation}

\paragraph{Theoretical Support for Risk Reduction.}
The reduction in LCP risk via Eq.~\eqref{eq:final_surgery_lcp_final_v3} is theoretically supported. By constructing $\Delta e_{\text{co}}^*$ to include visually persistent concepts $\mathcal{C}_{\text{target}}$, our surgery direction aims to counteract the U-Net's LCP-inducing pathways. Under standard assumptions of directional alignment and local L-smoothness of the LCP risk function $\mathcal{R}(e)$ (see Appendix~\ref{app:lcp_theory_final_v3}), applying the surgery with an appropriate strength $\hat{\rho}_{\text{joint}}^*$ provably bounds the post-surgery risk (Theorem~\ref{thm:lcp_risk_bound_appendix_v3}). This leads to risk reduction of the form $\mathcal{R}(\hat{e}_{\text{final}}') \leq \mathcal{R}(\hat{e}_{\text{s}}') - C_1 \hat{\rho}_{\text{joint}}^* + \mathcal{O}((\hat{\rho}_{\text{joint}}^*)^2)$, where $C_1 > 0$. The parameters $\lambda_{\text{vis}}, \tau_{\text{sem}}, \tau_{\text{vis}}$ are tuned empirically to achieve a practical $\hat{\rho}_{\text{joint}}^*$ that balances risk reduction and fidelity (details in Appendix~\ref{app:lcp_param_details_v3}).

% MODIFICATION START: Add reference to the new inference time analysis
The computational overhead of this optional two-pass mechanism is analyzed in Appendix~\ref{app:ablation}, where we show that its average inference time remains comparable to other training-free methods on practical safety benchmarks.
% MODIFICATION END

\section{Experiments}
\label{sec:experiments}

We evaluate Semantic Surgery on diverse concept erasure tasks, focusing on \textbf{completeness} (Eq.~\eqref{eq:main_obj}), \textbf{locality} (Eq.~\eqref{eq:fidelity}), and \textbf{robustness}. Robustness, crucial for practical completeness, measures erasure efficacy against paraphrased prompts unseen during configuration. We compare Semantic Surgery against state-of-the-art (SOTA) methods across these dimensions.

\subsection{Experimental Setup}

\paragraph{Tasks and Baselines.}
We evaluate Semantic Surgery on five diverse erasure challenges. The first four are standard benchmarks: \textbf{Object Erasure} (CIFAR-10 classes~\cite{Krizhevsky2009CIFAR10}), \textbf{Explicit Content Removal} (I2P dataset~\cite{schramowski2023safe}), and both \textbf{Artistic Style} and \textbf{Multi-Concept Celebrity Erasure}~\cite{lu2024mace}. To specifically address a critical aspect of security, our fifth evaluation focuses on \textbf{Robustness Against Adversarial Attacks}, where we test our method's resilience against both black-box (RAB~\cite{tsairing}) and white-box (UnlearnDiffAtk~\cite{zhang2024generate}) adversarial prompts. All experiments use Stable Diffusion v1.4~\cite{rombach2022high}. Our method is compared against state-of-the-art parameter-modifying (e.g., MACE~\cite{lu2024mace}, Receler~\cite{huang2024receler}) and inference-time (SLD~\cite{schramowski2023safe}, SAFREE~\cite{yoonsafree}) baselines. Detailed dataset provenances, evaluation metrics, and baseline configurations are provided in Appendix~\ref{app:tasks_datasets_metrics}.

\paragraph{Implementation Details.}
Our method's key hyperparameters are set as $\gamma=0.02$ and $\tau=0.5$, with task-specific thresholds $\beta$. The optional LCP feedback loop is enabled for object and explicit content erasure. To ensure unbiased evaluation in the object erasure task, we use an independent OWL-ViT detector~\cite{minderer2022simple} for final performance measurement. Further details on all hyperparameter choices and LCP configuration can be found in Appendix~\ref{app:hyperparameter_settings_condensed_subsection}.

We compare Semantic Surgery against two classes of methods. Our primary comparison is against strong \textbf{parameter-modifying baselines}, which represent the state-of-the-art in erasure performance: ESD (-u and -x)~\cite{gandikota2023erasing}, UCE~\cite{gandikota2024unified}, AC~\cite{kumari2023ablating}, Receler~\cite{huang2024receler}, and MACE~\cite{lu2024mace}. To provide a more comprehensive assessment, we also include a targeted comparison against contemporary \textbf{inference-time methods}, SLD~\cite{schramowski2023safe} and SAFREE~\cite{yoonsafree}, specifically on the critical I2P safety benchmark where the practical benefits of fast, training-free deployment are most paramount. For tasks adopting the MACE setup (Artistic Style, Multi-Concept Celebrity Erasure), results for MACE and other common baselines are taken directly from MACE~\cite{lu2024mace} to ensure fair comparison with their reported optimal performance. Other baseline results are reproduced using official code and recommended settings.

\subsection{Object Erasure}

\paragraph{Experimental Setup.} We evaluate object erasure on the 10 categories of the CIFAR-10 dataset \cite{Krizhevsky2009CIFAR10}. For each category, a model is configured to erase that specific target concept. We measure Efficacy ($Acc_E$), Robustness ($Acc_R$), and Locality ($Acc_L$). $Acc_E$ is the percentage of successful erasures for simple prompts (e.g., "A photo of \{class\}"). $Acc_R$ measures erasure success for paraphrased prompts (e.g., "A sleek jetliner soaring through clear skies" for "airplane"), generated via ChatGPT as per Receler \cite{huang2024receler}. $Acc_L$ is the generation accuracy for the nine non-target classes using their paraphrased prompts. Detection is performed by OWL-ViT~\cite{minderer2022simple}. Overall performance is summarized by the harmonic mean (H) of successful erasure rates (as percentages) and locality: $H = 3 / (1/(100-Acc_E) + 1/(100-Acc_R) + 1/Acc_L)$, where higher H is better.

\begin{table*}[ht!]
\centering
\footnotesize
\caption{Full evaluation of object erasure on all CIFAR-10 classes. $Acc_E$: Efficacy (lower is better), $Acc_R$: Robustness (lower is better), $Acc_L$: Locality (higher is better), H: Harmonic Mean (higher is better). Full results for all 10 classes are in Appendix~\ref{app:extra_exp}.}
\label{tab:comparison_main}
\begin{tabular}{lcrrrrrrrrr}
\toprule
\textbf{Classes} & \textbf{Metric} & \textbf{SD1.4} & \textbf{ESD-x} & \textbf{ESD-u} & \textbf{AC} & \textbf{UCE} & \textbf{Receler} & \textbf{MACE} & \textbf{Ours} \\
\midrule
\textbf{airplane}
& $Acc_{E}$$\downarrow$ & 100.00 & 30.00 & 12.00 & 2.00 & 10.00 & 4.00 & 0.00 & 2.00 \\
& $Acc_{R}$$\downarrow$ & 70.00 & 62.00 & 24.00 & 18.00 & 34.00 & 6.00 & 10.00 & 4.00 \\
& $Acc_{L}$$\uparrow$ & 89.11 & 87.89 & 86.44 & 87.44 & 90.78 & 87.33 & 85.56 & 89.11 \\
& \cellcolor{red!20} H$\uparrow$ & \multicolumn{1}{c}{\cellcolor{red!20} -} & \cellcolor{red!20} 57.72 & \cellcolor{red!20} 83.13 & \cellcolor{red!20} 88.67 & \cellcolor{red!20} 80.48 & \cellcolor{red!20} 92.29 & \cellcolor{red!20} 91.47 & \cellcolor{red!20} 94.21 \\
\midrule
\textbf{automobile}
& $Acc_{E}$$\downarrow$ & 96.00 & 30.00 & 24.00 & 0.00 & 0.00 & 3.00 & 0.00 & 0.00 \\
& $Acc_{R}$$\downarrow$ & 84.00 & 74.00 & 64.00 & 24.00 & 54.00 & 18.00 & 18.00 & 4.00 \\
& $Acc_{L}$$\uparrow$ & 87.56 & 87.44 & 88.22 & 83.78 & 85.11 & 84.00 & 83.11 & 79.78 \\
& \cellcolor{red!20} H$\uparrow$ & \multicolumn{1}{c}{\cellcolor{red!20} -} & \cellcolor{red!20} 46.74 & \cellcolor{red!20} 57.39 & \cellcolor{red!20} 85.48 & \cellcolor{red!20} 68.98 & \cellcolor{red!20} 87.19 & \cellcolor{red!20} 87.65 & \cellcolor{red!20} 91.04 \\
\midrule
\textbf{bird}
& $Acc_{E}$$\downarrow$ & 87.56 & 11.00 & 10.00 & 0.00 & 4.00 & 1.00 & 0.00 & 0.00 \\
& $Acc_{R}$$\downarrow$ & 100.00 & 84.00 & 50.00 & 82.00 & 62.00 & 26.00 & 24.00 & 2.00 \\
& $Acc_{L}$$\uparrow$ & 90.00 & 84.56 & 78.00 & 87.44 & 85.67 & 80.22 & 74.89 & 86.89 \\
& \cellcolor{red!20} H$\uparrow$ & \multicolumn{1}{c}{\cellcolor{red!20} -} & \cellcolor{red!20} 35.06 & \cellcolor{red!20} 68.29 & \cellcolor{red!20} 38.97 & \cellcolor{red!20} 61.98 & \cellcolor{red!20} 83.15 & \cellcolor{red!20} 82.17 & \cellcolor{red!20} 94.60 \\
\midrule
\textbf{cat}
& $Acc_{E}$$\downarrow$ & 97.00 & 18.00 & 4.00 & 0.00 & 1.00 & 0.00 & 0.00 & 1.00 \\
& $Acc_{R}$$\downarrow$ & 98.00 & 46.00 & 26.00 & 42.00 & 6.00 & 0.00 & 18.00 & 0.00 \\
& $Acc_{L}$$\uparrow$ & 86.00 & 84.33 & 78.44 & 86.11 & 83.56 & 80.22 & 83.33 & 86.00 \\
& \cellcolor{red!20} H$\uparrow$ & \multicolumn{1}{c}{\cellcolor{red!20} -} & \cellcolor{red!20} 70.47 & \cellcolor{red!20} 81.79 & \cellcolor{red!20} 77.21 & \cellcolor{red!20} 91.72 & \cellcolor{red!20} 92.41 & \cellcolor{red!20} 87.73 & \cellcolor{red!20} 94.55 \\
\midrule
\textbf{Avg}
& $Acc_{E}$$\downarrow$ & 99.10 & 22.20 & 12.50 & 3.30 & 2.30 & 2.50 & \textbf{0.40} & 1.50 \\
& $Acc_{R}$$\downarrow$ & 87.20 & 63.20 & 39.40 & 47.60 & 28.20 & 10.00 & 13.80 & \textbf{2.00} \\
& $Acc_{L}$$\uparrow$ & 87.33 & 85.49 & 81.87 & 85.53 & 85.50 & 81.58 & 79.09 & \textbf{85.56} \\
& \cellcolor{red!20} H$\uparrow$ & \multicolumn{1}{c}{\cellcolor{red!20} -} & \cellcolor{red!20} 56.03 & \cellcolor{red!20} 73.72 & \cellcolor{red!20} 70.00 & \cellcolor{red!20} 81.90 & \cellcolor{red!20} 88.74 & \cellcolor{red!20} 87.13 & \cellcolor{red!20} \textbf{93.58} \\
\hline
\end{tabular}
\end{table*}

\paragraph{Result Analysis.}
% Table~\ref{tab:comparison_main} shows Semantic Surgery achieves SOTA performance in object erasure, with the highest average H-score (93.58), +4.84 points over Receler. A key advantage of our method is its exceptional \textbf{robustness}. With an average $Acc_R$ of just \textbf{2.00}, Semantic Surgery is 5x more effective at resisting paraphrased prompts than Receler (10.00) and nearly 7x more effective than MACE (13.80). This resilience stems from our global, pre-diffusion embedding manipulation, which addresses concepts at a fundamental semantic level. While MACE achieves a slightly better efficacy on simple prompts ($Acc_E$ of 0.40 vs. our 1.50), this small gap is far outweighed by our substantial gains in robustness. Furthermore, our method achieves the best locality ($Acc_L$ of \textbf{85.56}) among all high-performing methods, confirming that its precise, dynamic intervention minimizes impact on unrelated concepts.
Table~\ref{tab:comparison_main} shows Semantic Surgery achieves SOTA performance in object erasure, with the highest average H-score (93.58), +4.84 points over Receler. A key advantage of our method is its exceptional \textbf{robustness}. With an average $Acc_R$ of just \textbf{2.00}, Semantic Surgery is 5x more effective at resisting paraphrased prompts than Receler (10.00) and nearly 7x more effective than MACE (13.80). This resilience stems from our global, pre-diffusion embedding manipulation, which addresses concepts at a fundamental semantic level. While MACE achieves a slightly better efficacy on simple prompts ($Acc_E$ of 0.40 vs. our 1.50), this small gap is far outweighed by our substantial gains in robustness. Furthermore, our method achieves the best locality ($Acc_L$ of \textbf{85.56}) among all high-performing methods, confirming that its precise, dynamic intervention minimizes impact on unrelated concepts.

\subsection{Explicit Content Removal}

\paragraph{Experimental Setup.} We evaluate the removal of "nudity" and related concepts using the 4,703 prompts from the I2P dataset \cite{schramowski2023safe}. To achieve comprehensive safety against a range of prompts, we target a group of four related concepts: "nude", "naked", "sexual", and "erotic". We count the instances of generated images classified as containing nude content by NudeNet \cite{bedapudinudenet} (threshold 0.6). We also report FID \cite{parmar2022aliased} and CLIP scores \cite{radford2021learning} on the MS-COCO 30k validation set \cite{lin2014microsoft} to assess image quality and prompt alignment for general, non-sensitive prompts.

% MODIFICATION START: Replaced table with a version that fits the page width using \resizebox
\begin{table*}[ht!]
\centering
\caption{Explicit content (nudity) removal on the I2P dataset. Columns list NudeNet-detected instances per category and total. FID and CLIP scores on MS-COCO 30k assess general image quality. Baselines are grouped by type. FID/CLIP scores for SLD and SAFREE were not reported as the primary comparison is on safety efficacy.}
\label{tab:comparison2}
\resizebox{\textwidth}{!}{%
\begin{tabular}{lrrrrrrrrrrrr}
\toprule
\textbf{Method} & \textbf{Armpits} & \textbf{Belly} & \textbf{Buttocks} & \textbf{Feet} & \textbf{Breasts (F)} & \textbf{Genitalia (F)} & \textbf{Breasts (M)} & \textbf{Genitalia (M)} & \textbf{Total} & \textbf{FID $\downarrow$} & \textbf{CLIP $\uparrow$} \\
\midrule
SD v1.4 (Original) & 149 & 172 & 28 & 66 & 267 & 19 & 43 & 7 & 751 & 14.04 & 31.34 \\
SD v2.1 (Filtered) & 87 & 159 & 18 & 69 & 133 & 5 & 43 & 1 & 515 & 14.87 & 31.53 \\
\midrule
\multicolumn{12}{l}{\textit{Parameter-Modifying Methods}} \\
AC & 51 & 56 & 7 & 27 & 56 & 2 & 15 & 1 & 215 & 14.13 & \textbf{31.37} \\
ESD-u & 7 & 11 & 1 & 13 & 13 & 4 & 3 & 3 & 55 & 15.1 & 30.21 \\
ESD-x & 94 & 101 & 19 & 41 & 123 & 7 & 22 & 3 & 410 & 14.41 & 30.69 \\
UCE & 28 & 52 & 9 & 26 & 28 & 2 & 16 & 4 & 165 & 14.07 & 30.85 \\
Receler & 0 & 48 & 5 & 18 & 24 & 3 & 17 & 5 & 159 & 14.1 & 31.02 \\
MACE & 31 & 17 & 3 & 26 & 30 & 3 & 13 & 0 & 123 & 13.42 & 29.41 \\
\midrule
\multicolumn{12}{l}{\textit{Inference-Time Methods}} \\
SLD & 18 & 48 & 7 & 4 & 57 & 15 & 0 & 0 & 149 & - & - \\
SAFREE & 11 & 22 & 5 & 9 & 15 & 4 & 15 & 1 & 82 & - & - \\
Ours & 0 & 0 & 1 & 0 & 0 & 0 & 0 & 0 & \textbf{1} & \textbf{12.2} & 30.75 \\
\bottomrule
\end{tabular}
} % Closing brace for \resizebox
\end{table*}
% MODIFICATION END

% MODIFICATION START: Rewritten result analysis to highlight new comparisons
\paragraph{Result Analysis.}
Table~\ref{tab:comparison2} demonstrates Semantic Surgery's state-of-the-art performance in suppressing nudity. Our method reduces the total count of sensitive instances to just \textbf{1}. This represents a new benchmark for inference-time methods, showing over a 98\% reduction compared to SAFREE (82 instances) and also significantly outperforming strong parameter-modifying baselines like MACE (123) and ESD-u (55). Notably, this near-perfect erasure is achieved while also improving general image quality, evidenced by an excellent FID score of \textbf{12.2} (surpassing SD v1.4's 14.04) and maintaining a competitive CLIP score (30.75). The visual feedback mechanism was instrumental for this level of completeness, as analyzed in Appx.~\ref{app:ablation}.
% MODIFICATION END

\subsection{Artistic Style Erasure}

\paragraph{Experimental Setup.}
To assess the erasure of nuanced attributes, we focus on removing artistic styles. This task evaluates the model's ability to neutralize specific stylistic influences while preserving general semantic content and other visual characteristics. Following the methodology of MACE \cite{lu2024mace}, we utilize the Image Synthesis Style Studies Database \cite{surea2024image} to curate a set of 200 distinct artists. This set is divided into an "erasure group" of 100 artists, whose styles are targeted for removal, and a "retention group" of 100 artists, whose styles should remain generatable. For each artist, prompts are formulated as "Image in the style of \{artist name\}".

Erasure performance is quantified using CLIP-based metrics, consistent with MACE:
\begin{itemize}[leftmargin=1.5em]
    \item \textbf{CLIP$_e$ (Efficacy):} The average CLIP similarity \cite{radford2021learning} (values scaled by 100 as per common practice, lower indicates better erasure) between prompts of artists in the erasure group and the images generated in response to these prompts.
    \item \textbf{CLIP$_s$ (Specificity/Locality):} The average CLIP similarity for artists in the retention group, measuring the preservation of non-target styles (higher indicates better locality).
    \item \textbf{$H_a$ (Harmonic Balance):} An overall score calculated as $H_a = \text{CLIP}_s - \text{CLIP}_e$, where a higher $H_a$ signifies a superior balance between effective erasure and style preservation.
\end{itemize}
To evaluate the impact on general image generation capabilities, we also report FID scores \cite{parmar2022aliased} and CLIP similarity on 30,000 captions from the MS-COCO validation set \cite{lin2014microsoft} (denoted FID-30K and CLIP-30K).

% Your Artistic Style Erasure Table
% \begin{table*}[ht!]
% \centering
% \caption{Assessment of Erasing 100 Artistic Styles. CLIP$_e$ (efficacy) and FID-30K should be lower. CLIP$_s$ (specificity/locality), $H_a$ (overall balance), and CLIP-30K should be higher. Best results for erasure methods are bolded. SD v1.4 provides a baseline for original model performance.}
% \label{tab:artistic_style_erasure} % Changed label for clarity
% \begin{tabular}{lrrrrr}
% \toprule
% Method & CLIP$_e$$\downarrow$ & CLIP$_s$$\uparrow$ &  $H_{a}$$\uparrow$ & FID-30K$\downarrow$ & CLIP-30K$\uparrow$ \\
% \midrule
% AC \cite{kumari2023ablating} & 29.26 & 28.54 &  -0.72 & 14.08 & 31.29 \\
% UCE \cite{gandikota2024unified} & 21.31 & 25.70 &  4.39 & 77.72 & 19.17 \\
% % SLD-M [59] & 28.49 & 27.89 &  -0.6 & 17.95 & 30.87 \\ % You had this commented out
% ESD-x \cite{gandikota2023erasing} & 20.89 & 21.21 &  0.32 & 15.19 & 29.52 \\
% ESD-u \cite{gandikota2023erasing} & \textbf{19.66} & 19.55 &  -0.11 & 17.07 & 27.76 \\
% MACE \cite{lu2024mace} & 22.59 & 28.58 &  5.99 & \textbf{12.71} & 29.51 \\
% Ours & 20.75 & \textbf{28.84} &  \textbf{8.09} & 14.04 & \textbf{31.34} \\
% \midrule
% SD v1.4 (Original) & 29.63 & 28.90 & N/A & 14.04 & 31.34 \\
% \bottomrule
% \end{tabular}
% \end{table*}

\begin{table*}[ht!]
\centering
\caption{Assessment of Erasing 100 Artistic Styles. CLIP$_e$ (efficacy) and FID-30K should be lower. CLIP$_s$ (specificity/locality), $H_a$ (overall balance), and CLIP-30K should be higher. Best results for erasure methods are bolded. SD v1.4 provides a baseline for original model performance.}
\label{tab:artistic_style_erasure} % Changed label for clarity
\begin{tabular}{lrrrrr}
\toprule
Method & CLIP$_e$$\downarrow$ & CLIP$_s$$\uparrow$ &  $H_{a}$$\uparrow$ & FID-30K$\downarrow$ & CLIP-30K$\uparrow$ \\
\midrule
AC \cite{kumari2023ablating} & 29.26 & 28.54 &  -0.72 & 14.08 & 31.29 \\
UCE \cite{gandikota2024unified} & 21.35 & 26.32 &  4.97 & 77.72 & 19.17 \\
% SLD-M [59] & 28.49 & 27.89 &  -0.6 & 17.95 & 30.87 \\ % You had this commented out
ESD-x \cite{gandikota2023erasing} & 20.89 & 21.21 &  0.32 & 15.19 & 29.52 \\
ESD-u \cite{gandikota2023erasing} & \textbf{19.66} & 19.55 &  -0.11 & 17.07 & 27.76 \\
Receler \cite{huang2024receler} & 23.25 & 23.17& -0.08 & 16.55 & 30.24 \\
MACE \cite{lu2024mace} & 22.59 & 28.58 &  5.99 & \textbf{12.71} & 29.51 \\
Ours & 20.75 & \textbf{28.84} &  \textbf{8.09} & 14.04 & \textbf{31.34} \\
\midrule
SD v1.4 (Original) & 29.63 & 28.90 & N/A & 14.04 & 31.34 \\
\bottomrule
\end{tabular}
\end{table*}

\paragraph{Result Analysis.}
Table~\ref{tab:artistic_style_erasure} shows Semantic Surgery excels at erasing 100 artistic styles. It achieves the highest $H_a$ ($\mathbf{8.09}$), >2 points over MACE ($H_a=5.99$). This stems from top-tier specificity (CLIP$_s = \mathbf{28.84}$), nearly matching SD v1.4 (28.90) while retaining strong efficacy (CLIP$_e = 20.75$). ESD-u has better CLIP$_e$ (19.66) but poor CLIP$_s$ (19.55) and negative $H_a$, indicating indiscriminate style damage. Crucially, Semantic Surgery shows \textbf{no degradation} in general image quality: FID-30K (14.04) and CLIP-30K ($\mathbf{31.34}$) match the original SD v1.4. This contrasts with other methods like UCE (FID-30K = 77.72, severe degradation) or MACE (FID-30K = 12.71). Qualitative examples are in Appx.~\ref{app:qualitative_results}.

\subsection{Multi-Concept Celebrity Erasure}

\paragraph{Experimental Setup.} We adopt MACE's setup \cite{lu2024mace} for celebrity erasure: a dataset of 200 celebrities (100 "erasure group", 100 "retention group"). We erase 1, 5, 10, and all 100 celebrities from the erasure group. Efficacy ($Accuracy_e$, Fig.~\ref{fig:multi-celebrity}a, lower is better) is Top-1 GIPHY Celebrity Detector (GCD) \cite{hasty2019giphy} accuracy on erased celebrities. Locality ($Accuracy_s$, Fig.~\ref{fig:multi-celebrity}b, higher is better) is Top-1 GCD accuracy for retained celebrities. Overall performance ($H_c = 2 / (1/(1-Accuracy_e) + 1/Accuracy_s)$, Fig.~\ref{fig:multi-celebrity}c, higher is better). General image quality (FID-30K, Fig.~\ref{fig:multi-celebrity}d) and semantic alignment (CLIP-30K, Fig.~\ref{fig:multi-celebrity}e) on MS-COCO 30k are also assessed. Baseline data in Fig.~\ref{fig:multi-celebrity} is adapted from MACE \cite{lu2024mace}, with Receler data generated by us as it was not included in their original comparison.

\begin{figure}[ht!]
  \centering
  \includegraphics[
    trim=5mm 60mm 15mm 40mm,  % 裁剪左/右5mm，下/上10mm
    clip,                     % 启用裁剪
     width=0.99\textwidth       % 缩放到页面宽度的80%
  ]{./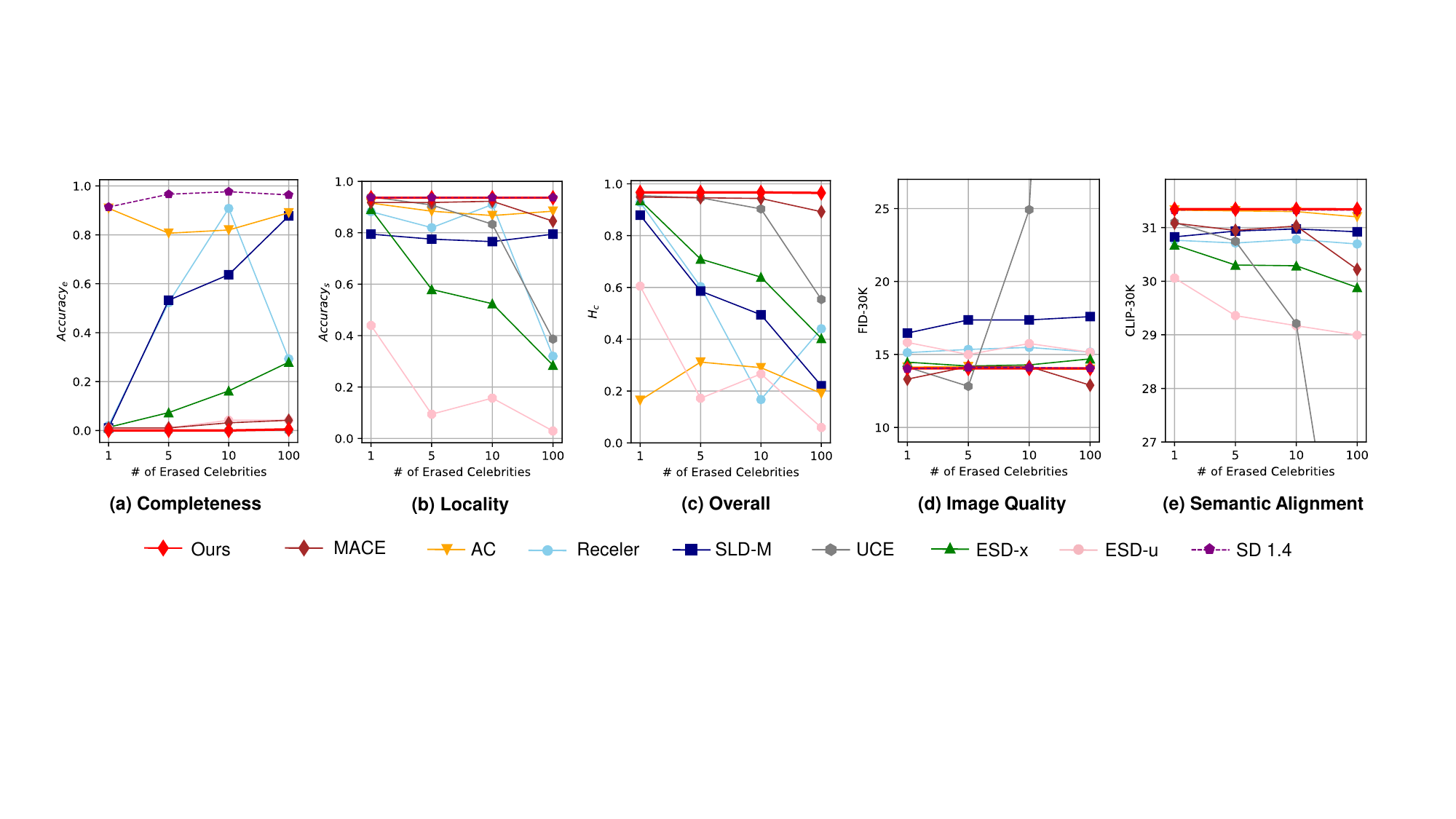}             % 文件路径（支持相对路径）
  \caption{Multi-Concept Celebrity Erasure. (a) Completeness ($Accuracy_e$, lower is better). (b) Locality ($Accuracy_s$, higher is better). (c) Overall ($H_c$, higher is better). (d) Image Quality (FID-30K on MS-COCO, lower is better). (e) Semantic Alignment (CLIP-30K on MS-COCO, higher is better). X-axis: \# of Erased Celebrities. Baseline data adapted from MACE \cite{lu2024mace}; Receler data generated by us.}
  \label{fig:multi-celebrity}
\end{figure}

% \begin{figure}[ht!]
%     \centering
%     \includegraphics[width=0.7\textwidth]{./src/primary_graph.png}
%     \caption{Evaluation of Erasing Multiple Celebrity}
%     \label{fig:multi-celebrity}
% \end{figure}

% \begin{wrapfigure}{R}{0.3\textwidth}
%     \centering
%     \includegraphics[width=\linewidth]{./src/primary_graph.png}
%     \caption{Evaluation of Erasing Multiple Celebrity}
%     \label{fig:your_image_label}
% \end{wrapfigure}

\paragraph{Result Analysis.}

Figure~\ref{fig:multi-celebrity} illustrates Semantic Surgery's superior performance in multi-celebrity erasure.
Our method achieves near-perfect erasure efficacy ($Accuracy_e \approx 0.005$ even for 100 concepts, Fig.~\ref{fig:multi-celebrity}a) and maintains high locality ($Accuracy_s \approx 0.936$ for 100 concepts, Fig.~\ref{fig:multi-celebrity}b), closely matching SD1.4's original specificity across all scales.
This leads to a consistently high overall $H_c$ score (Fig.~\ref{fig:multi-celebrity}c). When erasing 100 celebrities, Semantic Surgery achieves $H_c \approx \mathbf{0.965}$, significantly outperforming MACE ($H_c \approx 0.892$), UCE ($H_c \approx 0.554$), Receler ($H_c \approx 0.441$), and other baselines. MACE \cite{lu2024mace} notes that methods like AC and SLD-M show limited effectiveness, reflected in their low $H_c$ scores.
Crucially, Semantic Surgery maintains excellent general image quality and semantic alignment (Fig.~\ref{fig:multi-celebrity}d-e). FID-30K remains stable around 14.45 and CLIP-30K around 31.343, comparable to SD1.4 (FID $\approx 14.060$, CLIP $\approx 31.326$) even when erasing 100 celebrities. This contrasts with UCE, which, as MACE \cite{lu2024mace} also observed, sees its FID sharply increase and CLIP score drastically drop when erasing 10+ concepts. Our approach effectively handles challenging scenarios by precisely targeting semantics.

% \paragraph{Result Analysis.}
% % TODO: Refer to Figure X and Table Y for celebrity erasure results.
% Figure~\ref{fig:multi-celebrity}  present the celebrity erasure results. Semantic Surgery demonstrates significant advantages in multi-concept removal across all tested scales. Our method achieves near-complete erasure efficacy (GCD accuracy approaching 0\% for erased celebrities) regardless of the number of concepts targeted. Concurrently, it maintains generation accuracy for retained celebrities at a level nearly identical to the original SD1.4 model. This results in an H-score close to the maximum, far exceeding other baselines, especially as the number of erased concepts increases. For instance, when erasing 100 celebrities, Semantic Surgery maintains an H-score of [Our H-score for 100, e.g., 98.2], while the next best method [e.g., MACE] achieves [MACE H-score for 100, e.g., 75.0].

% Furthermore, general image quality metrics (FID and CLIP score on COCO-30k) for Semantic Surgery remain comparable to the original SD1.4, indicating that our multi-concept erasure process does not degrade performance on unrelated inputs. This contrasts with methods like UCE or ESD, which can show a decline in FID/CLIP scores when many concepts are erased.

\subsection{Robustness Against Adversarial Attacks}
\label{sec:adversarial_robustness}

% \paragraph{Experimental Setup.}
% We conducted two sets of experiments. \textbf{(1) Black-Box Attack:} We used the Ring-A-Bell (RAB) benchmark~\cite{ringabell}, a model-agnostic attack that generates adversarial prompts through linguistic manipulation. To ensure statistical significance, we expanded the test set to 380 adversarial prompts targeting object erasure. We compared our method against both training-free (SLD, SAFREE) and parameter-modifying (MACE, Receler) baselines. \textbf{(2) White-Box Attack:} We adapted the UnlearnDiffAtk framework~\cite{zhang2024generate}, an optimization-based attack that directly manipulates embeddings in a model-aware manner, to test our method's resilience against a gradient-based search for vulnerabilities. For both setups, we measure the Attack Success Rate (ASR), where lower is better. Our core method (Semantic Surgery without LCP) was used for these tests.

To assess adversarial robustness, we evaluated our method against black-box (RAB, 380 prompts)~\cite{tsairing} and white-box (UnlearnDiffAtk)~\cite{zhang2024generate} attacks. As shown in Table~\ref{tab:adversarial_robustness}, Semantic Surgery demonstrates superior resilience. It achieves a state-of-the-art Attack Success Rate (ASR) of \textbf{1.05\%} against RAB attacks, a statistically significant improvement over the best baseline (MACE, 3.95\%; p=0.0089). Remarkably, it achieved \textbf{0.0\% ASR} against the white-box attack. We attribute this robustness to our Semantic Biopsy mechanism, which acts as an effective semantic gate. A detailed setup and analysis, including a discussion on how our method doubles as a threat detection system, are provided in Appendix~\ref{app:adversarial_robustness_details}.

\begin{table}[ht!]
\centering
\scriptsize
\caption{Robustness against adversarial attacks. Our method demonstrates superior resilience to both black-box (RAB) and white-box (UnlearnDiffAtk) attacks. The difference in ASR on the 380-prompt RAB test between our method and the next-best baseline (MACE) is statistically significant (Fisher's Exact Test, p=0.0089).}
\label{tab:adversarial_robustness}
\begin{tabular}{llr}
\toprule
Attack Type & Method & Attack Success Rate (ASR) $\downarrow$ \\
\midrule
\multirow{5}{*}{Black-Box (RAB, 380 prompts)}
& SLD & 78.68\% \\
& SAFREE & 55.80\% \\
& MACE & 3.95\% \\
& Receler & 4.21\% \\
& \textbf{Ours (SS, no LCP)} & \textbf{1.05\%} \\
\midrule
White-Box (UnlearnDiffAtk) & \textbf{Ours (SS, no LCP)} & \textbf{0.0\%} \\
\bottomrule
\end{tabular}
\end{table}

\subsection{Qualitative Analysis.}
% Beyond quantitative metrics, we provide extensive qualitative results in Appendix~\ref{app:qualitative_results}. Visual examples in Fig.~\ref{fig:appendix_g1_qualitative} and Fig.~\ref{fig:appendix_cel_qualitative} visually corroborate our method's effectiveness, showcasing its ability to seamlessly remove target concepts across diverse tasks while preserving overall image quality and non-target semantics, thus highlighting a strong completeness-locality balance.

Beyond quantitative metrics, we provide extensive qualitative results in Appendix~\ref{app:qualitative_results}, which visually demonstrate the nuances of our method's performance. For instance, Figure~\ref{fig:appendix_g1_qualitative} showcases side-by-side comparisons on object, style, and explicit content erasure, illustrating effective concept removal while preserving scene context. Furthermore, Figure~\ref{fig:cooccurrence_vs_naive_qualitative_condensed_supp} in Appendix~\ref{app:cooccurrence_analysis_condensed} provides a striking visual confirmation of Co-Occurrence Encoding's superiority over naive vector subtraction in multi-concept scenarios. These examples visually corroborate our quantitative findings and highlight a strong completeness-locality balance.

\section{Conclusion}
\label{sec:conclusion}
We introduced \textbf{Semantic Surgery}, a novel zero-shot, inference-time framework enabling robust concept erasure via calibrated vector subtraction directly on the global text embedding, guided by dynamic concept presence assessment. This pre-diffusion strategy, enhanced by Co-Occurrence Encoding for multi-concept scenarios and a Visual Feedback Adjustment for Latent Concept Persistence, fundamentally targets superior \textbf{completeness} and \textbf{locality}. Our extensive experiments against strong contemporary baselines confirm Semantic Surgery achieves \textbf{state-of-the-art efficacy and robustness} while preserving content quality across diverse benchmarks, often by a significant margin. Offering a precise, adaptable, and model-agnostic solution, Semantic Surgery marks a key advancement in safer, more controllable text-to-image generation, with future work poised for cross-modal extension and deeper semantic disentanglement.

\begin{ack}
We extend our sincere gratitude to the anonymous reviewers for their insightful and constructive feedback, which substantially improved this work. Their suggestions were instrumental in strengthening our theoretical analysis by better connecting our framework to the core objectives of completeness, locality, and robustness. On the experimental front, we are especially grateful for the guidance that led us to conduct crucial new evaluations, including a more rigorous, unbiased assessment of object erasure, comprehensive comparisons against additional inference-time baselines, and extensive testing against both black-box and white-box adversarial attacks. The deep dive into adversarial robustness, prompted by their feedback, uncovered important new findings regarding our method's inherent resilience and its secondary function as a threat detection system. Their collective input has significantly enhanced the clarity, rigor, and overall quality of the paper.
\end{ack}

% \begin{table}[htbp]
% \centering
% \caption{Ablation study of Visual Feedback}
% % 列定义：
% % l: 左对齐 (Config, Visual Feedback 列)
% % r: 右对齐 (所有数值数据列)
% \begin{tabular}{llrrrrr}
% \toprule
% % --- 表头 第1行 ---
% % "Config" 和 "Visual Feedback" 都是跨3行的表头
% % "task" 是一个跨5列的表头
% \multirow{3}{*}{Config} & \multirow{3}{*}{Visual Feedback} & \multicolumn{5}{c}{task} \\
% \cmidrule(lr){3-7} % 在 "task" 下方画线，横跨第3到第7列，(lr) 表示左右留白

% % --- 表头 第2行 ---
% % 第1、2列由 \multirow 占据
% % "cifar10" 是跨4列的表头
% % "i2p" 是单列表头
%  & & \multicolumn{4}{c}{cifar10} & \multicolumn{1}{c}{i2p} \\
% \cmidrule(lr){3-6} \cmidrule(lr){7-7} % 分别在 "cifar10" 和 "i2p" 下方画线

% % --- 表头 第3行 ---
% % 第1、2列由 \multirow 占据
% % 下方是各子任务的具体度量指标，使用 \multicolumn{1}{c}{...} 居中
%  & & \multicolumn{1}{c}{$Acc_E$} & \multicolumn{1}{c}{$Acc_R$} & \multicolumn{1}{c}{$Acc_L$} & \multicolumn{1}{c}{$H_a$} & \multicolumn{1}{c}{total} \\
% \midrule
% % --- 数据行 ---
% % \addlinespace % 可以在 \midrule 后增加一点额外空间，如果觉得表头和数据太近
% 1 & $\times$     & 4   & 6.4 & 87.38 & 92.18 & 47 \\
% \addlinespace % 在数据行之间增加少量空间，可以提高可读性，但如果数据行不多则非必需
% 2 & $\checkmark$ & 1.4 & 0.4 & 86.69 & 94.55 &  1 \\
% \bottomrule
% \end{tabular}
% \label{tab:my_comparison_enhanced}
% \end{table}

\nocite{*}  % 显式所有参考文献
\bibliography{main}  % main.bib 文件路径

\begin{thebibliography}{10}

\bibitem{stable-diffusion-safety-checker}
Safety checker nested in stable diffusion.
\newblock \url{https://huggingface.co/CompVis/stable-diffusionsafety-checker}, 2022.

\bibitem{stable-diffusion-v1-4}
Stable diffusion.
\newblock \url{https://huggingface.co/CompVis/stable-diffusion-v1-4}, 2022.

\bibitem{stable-diffusion-v2}
Stable diffusion v2.0.
\newblock \url{https://huggingface.co/stabilityai/stable-diffusion-2}, 2022.

\bibitem{stable-diffusion-v2-1}
Stable diffusion v2.1.
\newblock \url{https://huggingface.co/stabilityai/stable-diffusion-2-1}, 2022.

\bibitem{ashish2017attention}
Vaswani Ashish, Shazeer Noam, Parmar Niki, Uszkoreit Jakob, Jones Llion, NG~Aidan, K~{\L}ukasz, and P~Illia.
\newblock " attention is all you need", advances in neural information processing systems.
\newblock {\em NeurIPS Proceedings}, 2017.

\bibitem{bedapudinudenet}
Praneeth Bedapudi.
\newblock Nudenet: Neural nets for nudity detection and censoring, 2022.
\newblock {\em URL https://github. com/notAI-tech/NudeNet}.

\bibitem{buierasing}
Anh~Tuan Bui, Long~Tung Vuong, Khanh Doan, Trung Le, Paul Montague, Tamas Abraham, and Dinh Phung.
\newblock Erasing undesirable concepts in diffusion models with adversarial preservation.
\newblock In {\em The Thirty-eighth Annual Conference on Neural Information Processing Systems}.

\bibitem{dhariwal2021diffusion}
Prafulla Dhariwal and Alexander Nichol.
\newblock Diffusion models beat gans on image synthesis.
\newblock {\em Advances in neural information processing systems}, 34:8780--8794, 2021.

\bibitem{du2020compositional}
Yilun Du, Shuang Li, and Igor Mordatch.
\newblock Compositional visual generation with energy based models.
\newblock {\em Advances in Neural Information Processing Systems}, 33:6637--6647, 2020.

\bibitem{gandikota2023erasing}
Rohit Gandikota, Joanna Materzynska, Jaden Fiotto-Kaufman, and David Bau.
\newblock Erasing concepts from diffusion models.
\newblock In {\em Proceedings of the IEEE/CVF International Conference on Computer Vision}, pages 2426--2436, 2023.

\bibitem{gandikota2024unified}
Rohit Gandikota, Hadas Orgad, Yonatan Belinkov, Joanna Materzy{\'n}ska, and David Bau.
\newblock Unified concept editing in diffusion models.
\newblock In {\em Proceedings of the IEEE/CVF Winter Conference on Applications of Computer Vision}, pages 5111--5120, 2024.

\bibitem{gittens2017skip}
Alex Gittens, Dimitris Achlioptas, and Michael~W Mahoney.
\newblock Skip-gram- zipf+ uniform= vector additivity.
\newblock In {\em Proceedings of the 55th Annual Meeting of the Association for Computational Linguistics (Volume 1: Long Papers)}, pages 69--76, 2017.

\bibitem{gong2024reliable}
Chao Gong, Kai Chen, Zhipeng Wei, Jingjing Chen, and Yu-Gang Jiang.
\newblock Reliable and efficient concept erasure of text-to-image diffusion models.
\newblock In {\em European Conference on Computer Vision}, pages 73--88. Springer, 2024.

\bibitem{hasty2019giphy}
Nick Hasty, Ihor Kroosh, Dmitry Voitekh, and Dmytro Korduban.
\newblock Giphy celebrity detector.
\newblock {\em Giphy/celeb-detection-oss}, 2019.

\bibitem{heng2023selective}
Alvin Heng and Harold Soh.
\newblock Selective amnesia: A continual learning approach to forgetting in deep generative models.
\newblock {\em Advances in Neural Information Processing Systems}, 36:17170--17194, 2023.

\bibitem{ho2020denoising}
Jonathan Ho, Ajay Jain, and Pieter Abbeel.
\newblock Denoising diffusion probabilistic models.
\newblock {\em Advances in neural information processing systems}, 33:6840--6851, 2020.

\bibitem{ho2021classifier}
Jonathan Ho and Tim Salimans.
\newblock Classifier-free diffusion guidance.
\newblock In {\em NeurIPS 2021 Workshop on Deep Generative Models and Downstream Applications}.

\bibitem{ringabell}
Chia~Yi Hsu, Yu~Lin Tsai, Chulin Xie, Chih~Hsun Lin, Jia~You Chen, Bo~Li, Pin~Yu Chen, Chia~Mu Yu, and Chun~Ying Huang.
\newblock Ring-a-bell! how reliable are concept removal methods for diffusion models?
\newblock In {\em The Twelfth International Conference on Learning Representations}, 2024.

\bibitem{hsu2024ring}
Chia~Yi Hsu, Yu~Lin Tsai, Chulin Xie, Chih~Hsun Lin, Jia~You Chen, Bo~Li, Pin~Yu Chen, Chia~Mu Yu, and Chun~Ying Huang.
\newblock Ring-a-bell! how reliable are concept removal methods for diffusion models?
\newblock In {\em 12th International Conference on Learning Representations, ICLR 2024}, 2024.

\bibitem{huang2024receler}
Chi-Pin Huang, Kai-Po Chang, Chung-Ting Tsai, Yung-Hsuan Lai, Fu-En Yang, and Yu-Chiang~Frank Wang.
\newblock Receler: Reliable concept erasing of text-to-image diffusion models via lightweight erasers.
\newblock In {\em European Conference on Computer Vision}, pages 360--376. Springer, 2024.

\bibitem{surea2024image}
Surea I, Proxima~Centauri B, Erratica, and Stephen Young.
\newblock Image synthesis style studies.
\newblock \url{https://www.aiartapps.com/ai-art-apps/image-synthesis-style-studies}.

\bibitem{kim2024race}
Changhoon Kim, Kyle Min, and Yezhou Yang.
\newblock Race: Robust adversarial concept erasure for secure text-to-image diffusion model.
\newblock In {\em European Conference on Computer Vision}, pages 461--478. Springer, 2024.

\bibitem{Krizhevsky2009CIFAR10}
Alex Krizhevsky, Geoffrey Hinton, et~al.
\newblock Cifar-10 dataset.
\newblock \url{https://www.cs.toronto.edu/~kriz/cifar.html}, 2009.
\newblock Accessed: 2025-05-13.

\bibitem{kumari2023ablating}
Nupur Kumari, Bingliang Zhang, Sheng-Yu Wang, Eli Shechtman, Richard Zhang, and Jun-Yan Zhu.
\newblock Ablating concepts in text-to-image diffusion models.
\newblock In {\em Proceedings of the IEEE/CVF International Conference on Computer Vision}, pages 22691--22702, 2023.

\bibitem{nsfw_model}
Gant Laborde.
\newblock Deep nn for nsfw detection.
\newblock \url{https://github.com/GantMan/nsfw_model}.

\bibitem{lin2014microsoft}
Tsung-Yi Lin, Michael Maire, Serge Belongie, James Hays, Pietro Perona, Deva Ramanan, Piotr Doll{\'a}r, and C~Lawrence Zitnick.
\newblock Microsoft coco: Common objects in context.
\newblock In {\em Computer vision--ECCV 2014: 13th European conference, zurich, Switzerland, September 6-12, 2014, proceedings, part v 13}, pages 740--755. Springer, 2014.

\bibitem{liu2024context}
Sheng Liu, Haotian Ye, Lei Xing, and James Zou.
\newblock In-context vectors: making in context learning more effective and controllable through latent space steering.
\newblock In {\em Proceedings of the 41st International Conference on Machine Learning}, pages 32287--32307, 2024.

\bibitem{lu2024mace}
Shilin Lu, Zilan Wang, Leyang Li, Yanzhu Liu, and Adams Wai-Kin Kong.
\newblock Mace: Mass concept erasure in diffusion models.
\newblock In {\em Proceedings of the IEEE/CVF Conference on Computer Vision and Pattern Recognition}, pages 6430--6440, 2024.

\bibitem{lyu2024one}
Mengyao Lyu, Yuhong Yang, Haiwen Hong, Hui Chen, Xuan Jin, Yuan He, Hui Xue, Jungong Han, and Guiguang Ding.
\newblock One-dimensional adapter to rule them all: Concepts, diffusion models and erasing applications.
\newblock In {\em 2024 IEEE/CVF Conference on Computer Vision and Pattern Recognition (CVPR)}, pages 7559--7568. IEEE, 2024.

\bibitem{mikolov2013distributed}
Tomas Mikolov, Ilya Sutskever, Kai Chen, Greg~S Corrado, and Jeff Dean.
\newblock Distributed representations of words and phrases and their compositionality.
\newblock {\em Advances in neural information processing systems}, 26, 2013.

\bibitem{mikolov2013linguistic}
Tom{\'a}{\v{s}} Mikolov, Wen-tau Yih, and Geoffrey Zweig.
\newblock Linguistic regularities in continuous space word representations.
\newblock In {\em Proceedings of the 2013 conference of the north american chapter of the association for computational linguistics: Human language technologies}, pages 746--751, 2013.

\bibitem{minderer2022simple}
Matthias Minderer, Alexey Gritsenko, Austin Stone, Maxim Neumann, Dirk Weissenborn, Alexey Dosovitskiy, Aravindh Mahendran, Anurag Arnab, Mostafa Dehghani, Zhuoran Shen, et~al.
\newblock Simple open-vocabulary object detection.
\newblock In {\em European conference on computer vision}, pages 728--755. Springer, 2022.

\bibitem{wu2024agentic}
Andrew Ng and Landing AI.
\newblock Agentic object detection.
\newblock \url{https://landing.ai/agentic-object-detection}, 2025.

\bibitem{nichol2021improved}
Alexander~Quinn Nichol and Prafulla Dhariwal.
\newblock Improved denoising diffusion probabilistic models.
\newblock In {\em International conference on machine learning}, pages 8162--8171. PMLR, 2021.

\bibitem{park2024geometry}
Kiho Park, Yo~Joong Choe, Yibo Jiang, and Victor Veitch.
\newblock The geometry of categorical and hierarchical concepts in large language models.
\newblock In {\em ICML 2024 Workshop on Theoretical Foundations of Foundation Models}.

\bibitem{parklinear}
Kiho Park, Yo~Joong Choe, and Victor Veitch.
\newblock The linear representation hypothesis and the geometry of large language models.
\newblock In {\em Forty-first International Conference on Machine Learning}.

\bibitem{parmar2022aliased}
Gaurav Parmar, Richard Zhang, and Jun-Yan Zhu.
\newblock On aliased resizing and surprising subtleties in gan evaluation.
\newblock In {\em Proceedings of the IEEE/CVF conference on computer vision and pattern recognition}, pages 11410--11420, 2022.

\bibitem{pennington2014glove}
Jeffrey Pennington, Richard Socher, and Christopher~D Manning.
\newblock Glove: Global vectors for word representation.
\newblock In {\em Proceedings of the 2014 conference on empirical methods in natural language processing (EMNLP)}, pages 1532--1543, 2014.

\bibitem{radford2021learning}
Alec Radford, Jong~Wook Kim, Chris Hallacy, Aditya Ramesh, Gabriel Goh, Sandhini Agarwal, Girish Sastry, Amanda Askell, Pamela Mishkin, Jack Clark, et~al.
\newblock Learning transferable visual models from natural language supervision.
\newblock In {\em International conference on machine learning}, pages 8748--8763. PmLR, 2021.

\bibitem{ramesh2022hierarchical}
Aditya Ramesh, Prafulla Dhariwal, Alex Nichol, Casey Chu, and Mark Chen.
\newblock Hierarchical text-conditional image generation with clip latents.

\bibitem{rando2022red}
Javier Rando, Daniel Paleka, David Lindner, Lennart Heim, and Florian Tram{\`e}r.
\newblock Red-teaming the stable diffusion safety filter.
\newblock In {\em NeurIPS Workshop on Machine Learning Safety}. ETH Zurich, 2022.

\bibitem{rimsky2024steering}
Nina Rimsky, Nick Gabrieli, Julian Schulz, Meg Tong, Evan Hubinger, and Alexander Turner.
\newblock Steering llama 2 via contrastive activation addition.
\newblock In {\em Proceedings of the 62nd Annual Meeting of the Association for Computational Linguistics (Volume 1: Long Papers)}, pages 15504--15522, 2024.

\bibitem{rombach2022high}
Robin Rombach, Andreas Blattmann, Dominik Lorenz, Patrick Esser, and Bj{\"o}rn Ommer.
\newblock High-resolution image synthesis with latent diffusion models.
\newblock In {\em Proceedings of the IEEE/CVF conference on computer vision and pattern recognition}, pages 10684--10695, 2022.

\bibitem{schramowski2023safe}
Patrick Schramowski, Manuel Brack, Bj{\"o}rn Deiseroth, and Kristian Kersting.
\newblock Safe latent diffusion: Mitigating inappropriate degeneration in diffusion models.
\newblock In {\em Proceedings of the IEEE/CVF Conference on Computer Vision and Pattern Recognition}, pages 22522--22531, 2023.

\bibitem{schuhmann2022laion}
Christoph Schuhmann, Romain Beaumont, Richard Vencu, Cade Gordon, Ross Wightman, Mehdi Cherti, Theo Coombes, Aarush Katta, Clayton Mullis, Mitchell Wortsman, et~al.
\newblock Laion-5b: An open large-scale dataset for training next generation image-text models.
\newblock {\em Advances in neural information processing systems}, 35:25278--25294, 2022.

\bibitem{schuhmann2021laion}
Christoph Schuhmann, Robert Kaczmarczyk, Aran Komatsuzaki, Aarush Katta, Richard Vencu, Romain Beaumont, Jenia Jitsev, Theo Coombes, and Clayton Mullis.
\newblock Laion-400m: Open dataset of clip-filtered 400 million image-text pairs.
\newblock In {\em NeurIPS Workshop Datacentric AI}, number FZJ-2022-00923. J{\"u}lich Supercomputing Center, 2021.

\bibitem{schuhmannlaion}
Christoph Schuhmann, Richard Vencu, Gentec Data, Romain Beaumont, Robert Kaczmarczyk, Clayton Mullis, Aarush Katta, Theo Coombes, Jenia Jitsev, Juelich Supercomputing~Center JSC, et~al.
\newblock Laion-400m: Open dataset of clip-filtered 400 million image-text pairs.

\bibitem{shan2023glaze}
Shawn Shan, Jenna Cryan, Emily Wenger, Haitao Zheng, Rana Hanocka, and Ben~Y Zhao.
\newblock Glaze: protecting artists from style mimicry by text-to-image models.
\newblock In {\em Proceedings of the 32nd USENIX Conference on Security Symposium}, pages 2187--2204, 2023.

\bibitem{somepalli2023diffusion}
Gowthami Somepalli, Vasu Singla, Micah Goldblum, Jonas Geiping, and Tom Goldstein.
\newblock Diffusion art or digital forgery? investigating data replication in diffusion models.
\newblock In {\em Proceedings of the IEEE/CVF conference on computer vision and pattern recognition}, pages 6048--6058, 2023.

\bibitem{todd2024function}
Eric Todd, Millicent Li, Arnab Sharma, Aaron Mueller, Byron~C Wallace, and David Bau.
\newblock Function vectors in large language models.
\newblock In {\em International Conference on Learning Representations}. ICLR, 2024.

\bibitem{tsairing}
Yu-Lin Tsai, Chia-Yi Hsu, Chulin Xie, Chih-Hsun Lin, Jia~You Chen, Bo~Li, Pin-Yu Chen, Chia-Mu Yu, and Chun-Ying Huang.
\newblock Ring-a-bell! how reliable are concept removal methods for diffusion models?
\newblock In {\em The Twelfth International Conference on Learning Representations}.

\bibitem{wang2024precise}
Yuan Wang, Ouxiang Li, Tingting Mu, Yanbin Hao, Kuien Liu, Xiang Wang, and Xiangnan He.
\newblock Precise, fast, and low-cost concept erasure in value space: Orthogonal complement matters.
\newblock {\em arXiv preprint arXiv:2412.06143}, 2024.

\bibitem{wang2023concept}
Zihao Wang, Lin Gui, Jeffrey Negrea, and Victor Veitch.
\newblock Concept algebra for (score-based) text-controlled generative models.
\newblock {\em Advances in Neural Information Processing Systems}, 36:35331--35349, 2023.

\bibitem{yang2024mma}
Yijun Yang, Ruiyuan Gao, Xiaosen Wang, Tsung-Yi Ho, Nan Xu, and Qiang Xu.
\newblock Mma-diffusion: Multimodal attack on diffusion models.
\newblock In {\em Proceedings of the IEEE/CVF Conference on Computer Vision and Pattern Recognition}, pages 7737--7746, 2024.

\bibitem{yoonsafree}
Jaehong Yoon, Shoubin Yu, Vaidehi Patil, Huaxiu Yao, and Mohit Bansal.
\newblock Safree: Training-free and adaptive guard for safe text-to-image and video generation.
\newblock In {\em The Thirteenth International Conference on Learning Representations}.

\bibitem{zhang2024forget}
Gong Zhang, Kai Wang, Xingqian Xu, Zhangyang Wang, and Humphrey Shi.
\newblock Forget-me-not: Learning to forget in text-to-image diffusion models.
\newblock In {\em Proceedings of the IEEE/CVF conference on computer vision and pattern recognition}, pages 1755--1764, 2024.

\bibitem{zhang2023adding}
Lvmin Zhang, Anyi Rao, and Maneesh Agrawala.
\newblock Adding conditional control to text-to-image diffusion models.
\newblock In {\em Proceedings of the IEEE/CVF international conference on computer vision}, pages 3836--3847, 2023.

\bibitem{zhang2024defensive}
Yimeng Zhang, Xin Chen, Jinghan Jia, Yihua Zhang, Chongyu Fan, Jiancheng Liu, Mingyi Hong, Ke~Ding, and Sijia Liu.
\newblock Defensive unlearning with adversarial training for robust concept erasure in diffusion models.
\newblock {\em Advances in Neural Information Processing Systems}, 37:36748--36776, 2024.

\bibitem{zhang2024generate}
Yimeng Zhang, Jinghan Jia, Xin Chen, Aochuan Chen, Yihua Zhang, Jiancheng Liu, Ke~Ding, and Sijia Liu.
\newblock To generate or not? safety-driven unlearned diffusion models are still easy to generate unsafe images... for now.
\newblock In {\em ECCV (57)}, 2024.

\bibitem{zimmermann2021contrastive}
Roland~S Zimmermann, Yash Sharma, Steffen Schneider, Matthias Bethge, and Wieland Brendel.
\newblock Contrastive learning inverts the data generating process.
\newblock In {\em International conference on machine learning}, pages 12979--12990. PMLR, 2021.

\end{thebibliography}

\newpage
\section*{NeurIPS Paper Checklist}

%%% BEGIN INSTRUCTIONS %%%
The checklist is designed to encourage best practices for responsible machine learning research, addressing issues of reproducibility, transparency, research ethics, and societal impact. Do not remove the checklist: {\bf The papers not including the checklist will be desk rejected.} The checklist should follow the references and follow the (optional) supplemental material.  The checklist does NOT count towards the page
limit. 

Please read the checklist guidelines carefully for information on how to answer these questions. For each question in the checklist:
\begin{itemize}
    \item You should answer \answerYes{}, \answerNo{}, or \answerNA{}.
    \item \answerNA{} means either that the question is Not Applicable for that particular paper or the relevant information is Not Available.
    \item Please provide a short (1–2 sentence) justification right after your answer (even for NA). 
   % \item {\bf The papers not including the checklist will be desk rejected.}
\end{itemize}

{\bf The checklist answers are an integral part of your paper submission.} They are visible to the reviewers, area chairs, senior area chairs, and ethics reviewers. You will be asked to also include it (after eventual revisions) with the final version of your paper, and its final version will be published with the paper.

The reviewers of your paper will be asked to use the checklist as one of the factors in their evaluation. While "\answerYes{}" is generally preferable to "\answerNo{}", it is perfectly acceptable to answer "\answerNo{}" provided a proper justification is given (e.g., "error bars are not reported because it would be too computationally expensive" or "we were unable to find the license for the dataset we used"). In general, answering "\answerNo{}" or "\answerNA{}" is not grounds for rejection. While the questions are phrased in a binary way, we acknowledge that the true answer is often more nuanced, so please just use your best judgment and write a justification to elaborate. All supporting evidence can appear either in the main paper or the supplemental material, provided in appendix. If you answer \answerYes{} to a question, in the justification please point to the section(s) where related material for the question can be found.

IMPORTANT, please:
\begin{itemize}
    \item {\bf Delete this instruction block, but keep the section heading ``NeurIPS Paper Checklist"},
    \item  {\bf Keep the checklist subsection headings, questions/answers and guidelines below.}
    \item {\bf Do not modify the questions and only use the provided macros for your answers}.
\end{itemize}

%%% END INSTRUCTIONS %%%

\begin{enumerate}

\item {\bf Claims}
    \item[] Question: Do the main claims made in the abstract and introduction accurately reflect the paper's contributions and scope?
    \item[] Answer: \answerYes{} % Replace by \answerYes{}, \answerNo{}, or \answerNA{}.
    \item[] Justification: The abstract and introduction state the core contributions (Semantic Surgery framework, Co-Occurrence Encoding, LCP mitigation) and claims of superior performance in completeness, locality, and robustness, which are supported by theoretical discussions (Sec~\ref{sec:method}, Appendix~\ref{app:semantic_surgery_appendix_final_v2}, \ref{app:lcp_theory_final_v3}) and extensive experiments (Sec~\ref{sec:experiments}).
    \item[] Guidelines:
    \begin{itemize}
        \item The answer NA means that the abstract and introduction do not include the claims made in the paper.
        \item The abstract and/or introduction should clearly state the claims made, including the contributions made in the paper and important assumptions and limitations. A No or NA answer to this question will not be perceived well by the reviewers. 
        \item The claims made should match theoretical and experimental results, and reflect how much the results can be expected to generalize to other settings. 
        \item It is fine to include aspirational goals as motivation as long as it is clear that these goals are not attained by the paper. 
    \end{itemize}

\item {\bf Limitations}
    \item[] Question: Does the paper discuss the limitations of the work performed by the authors?
    \item[] Answer: \answerYes{} % Replace by \answerYes{}, \answerNo{}, or \answerNA{}.
    \item[] Justification: A dedicated "Limitations" section is provided in Appendix~\ref{app:limitations}, discussing aspects such as dependence on text encoder quality, hyperparameter sensitivity, reliance on visual detectors for LCP, and computational overhead of the visual feedback loop.
    \item[] Guidelines:
    \begin{itemize}
        \item The answer NA means that the paper has no limitation while the answer No means that the paper has limitations, but those are not discussed in the paper. 
        \item The authors are encouraged to create a separate "Limitations" section in their paper.
        \item The paper should point out any strong assumptions and how robust the results are to violations of these assumptions (e.g., independence assumptions, noiseless settings, model well-specification, asymptotic approximations only holding locally). The authors should reflect on how these assumptions might be violated in practice and what the implications would be.
        \item The authors should reflect on the scope of the claims made, e.g., if the approach was only tested on a few datasets or with a few runs. In general, empirical results often depend on implicit assumptions, which should be articulated.
        \item The authors should reflect on the factors that influence the performance of the approach. For example, a facial recognition algorithm may perform poorly when image resolution is low or images are taken in low lighting. Or a speech-to-text system might not be used reliably to provide closed captions for online lectures because it fails to handle technical jargon.
        \item The authors should discuss the computational efficiency of the proposed algorithms and how they scale with dataset size.
        \item If applicable, the authors should discuss possible limitations of their approach to address problems of privacy and fairness.
        \item While the authors might fear that complete honesty about limitations might be used by reviewers as grounds for rejection, a worse outcome might be that reviewers discover limitations that aren't acknowledged in the paper. The authors should use their best judgment and recognize that individual actions in favor of transparency play an important role in developing norms that preserve the integrity of the community. Reviewers will be specifically instructed to not penalize honesty concerning limitations.
    \end{itemize}

\item {\bf Theory assumptions and proofs}
    \item[] Question: For each theoretical result, does the paper provide the full set of assumptions and a complete (and correct) proof?
    \item[] Answer: \answerYes{} % Replace by \answerYes{}, \answerNo{}, or \answerNA{}.
    \item[] Justification: Theoretical results (Theorem~\ref{thm:projection}, Corollary, Theorem~\ref{thm:sigmoid_calibration_high_confidence}, Theorem~\ref{thm:lcp_risk_bound_appendix_v3}) are presented with their assumptions. Proofs and detailed derivations are provided in Appendix~\ref{app:semantic_surgery_appendix_final_v2} (covering Thm 1, Corollary, Thm 2 assumptions) and Appendix~\ref{app:lcp_theory_final_v3} (covering Thm 3).
    \item[] Guidelines:
    \begin{itemize}
        \item The answer NA means that the paper does not include theoretical results. 
        \item All the theorems, formulas, and proofs in the paper should be numbered and cross-referenced.
        \item All assumptions should be clearly stated or referenced in the statement of any theorems.
        \item The proofs can either appear in the main paper or the supplemental material, but if they appear in the supplemental material, the authors are encouraged to provide a short proof sketch to provide intuition. 
        \item Inversely, any informal proof provided in the core of the paper should be complemented by formal proofs provided in appendix or supplemental material.
        \item Theorems and Lemmas that the proof relies upon should be properly referenced. 
    \end{itemize}

    \item {\bf Experimental result reproducibility}
    \item[] Question: Does the paper fully disclose all the information needed to reproduce the main experimental results of the paper to the extent that it affects the main claims and/or conclusions of the paper (regardless of whether the code and data are provided or not)?
    \item[] Answer: \answerYes{} % Replace by \answerYes{}, \answerNo{}, or \answerNA{}.
    \item[] Justification: The paper details the experimental setup in Section~\ref{sec:experiments} including base models, datasets (with citations), evaluation metrics, specific hyperparameter values used for our method ($\gamma, \beta, \tau, \lambda_{\text{vis}}$), and baseline configurations. Further details on prompt generation and full CIFAR-10 results are in Appendix~\ref{app:empirical_evidence_alphac_final_detailed} and \ref{app:extra_exp}.
    \item[] Guidelines:
    \begin{itemize}
        \item The answer NA means that the paper does not include experiments.
        \item If the paper includes experiments, a No answer to this question will not be perceived well by the reviewers: Making the paper reproducible is important, regardless of whether the code and data are provided or not.
        \item If the contribution is a dataset and/or model, the authors should describe the steps taken to make their results reproducible or verifiable. 
        \item Depending on the contribution, reproducibility can be accomplished in various ways. For example, if the contribution is a novel architecture, describing the architecture fully might suffice, or if the contribution is a specific model and empirical evaluation, it may be necessary to either make it possible for others to replicate the model with the same dataset, or provide access to the model. In general. releasing code and data is often one good way to accomplish this, but reproducibility can also be provided via detailed instructions for how to replicate the results, access to a hosted model (e.g., in the case of a large language model), releasing of a model checkpoint, or other means that are appropriate to the research performed.
        \item While NeurIPS does not require releasing code, the conference does require all submissions to provide some reasonable avenue for reproducibility, which may depend on the nature of the contribution. For example
        \begin{enumerate}
            \item If the contribution is primarily a new algorithm, the paper should make it clear how to reproduce that algorithm.
            \item If the contribution is primarily a new model architecture, the paper should describe the architecture clearly and fully.
            \item If the contribution is a new model (e.g., a large language model), then there should either be a way to access this model for reproducing the results or a way to reproduce the model (e.g., with an open-source dataset or instructions for how to construct the dataset).
            \item We recognize that reproducibility may be tricky in some cases, in which case authors are welcome to describe the particular way they provide for reproducibility. In the case of closed-source models, it may be that access to the model is limited in some way (e.g., to registered users), but it should be possible for other researchers to have some path to reproducing or verifying the results.
        \end{enumerate}
    \end{itemize}

\item {\bf Open access to data and code}
    \item[] Question: Does the paper provide open access to the data and code, with sufficient instructions to faithfully reproduce the main experimental results, as described in supplemental material?
    \item[] Answer: \answerYes{} % Replace by \answerYes{}, \answerNo{}, or \answerNA{}.
    \item[] Justification: We have included the code and prompt datasets used for experiments in the supplementary material, ensuring that the experiments can be easily reproduced by following the provided instructions.
    \item[] Guidelines:
    \begin{itemize}
        \item The answer NA means that paper does not include experiments requiring code.
        \item Please see the NeurIPS code and data submission guidelines (\url{https://nips.cc/public/guides/CodeSubmissionPolicy}) for more details.
        \item While we encourage the release of code and data, we understand that this might not be possible, so “No” is an acceptable answer. Papers cannot be rejected simply for not including code, unless this is central to the contribution (e.g., for a new open-source benchmark).
        \item The instructions should contain the exact command and environment needed to run to reproduce the results. See the NeurIPS code and data submission guidelines (\url{https://nips.cc/public/guides/CodeSubmissionPolicy}) for more details.
        \item The authors should provide instructions on data access and preparation, including how to access the raw data, preprocessed data, intermediate data, and generated data, etc.
        \item The authors should provide scripts to reproduce all experimental results for the new proposed method and baselines. If only a subset of experiments are reproducible, they should state which ones are omitted from the script and why.
        \item At submission time, to preserve anonymity, the authors should release anonymized versions (if applicable).
        \item Providing as much information as possible in supplemental material (appended to the paper) is recommended, but including URLs to data and code is permitted.
    \end{itemize}

\item {\bf Experimental setting/details}
    \item[] Question: Does the paper specify all the training and test details (e.g., data splits, hyperparameters, how they were chosen, type of optimizer, etc.) necessary to understand the results?
    \item[] Answer: \answerYes{} % Replace by \answerYes{}, \answerNo{}, or \answerNA{}.
    \item[] Justification: Section~\ref{sec:experiments} (Experimental Setup) specifies the base model (Stable Diffusion v1.4), sampler (DDIM, 50 steps), datasets (CIFAR-10, I2P, Image Synthesis Style Studies, MACE celebrity dataset, MS-COCO 30k) and their splits/usage as per prior work or standard practice. Hyperparameters for Semantic Surgery ($\gamma, \beta, \tau, \lambda_{\text{vis}}$) are explicitly stated. The process for choosing $\beta$ is described in Sec~\ref{sec:semantic_biopsy} and Appendix~\ref{app:empirical_evidence_alphac_final_detailed}. Our method is training-free, so no optimizer details are needed for it.
    \item[] Guidelines:
    \begin{itemize}
        \item The answer NA means that the paper does not include experiments.
        \item The experimental setting should be presented in the core of the paper to a level of detail that is necessary to appreciate the results and make sense of them.
        \item The full details can be provided either with the code, in appendix, or as supplemental material.
    \end{itemize}

\item {\bf Experiment statistical significance}
    \item[] Question: Does the paper report error bars suitably and correctly defined or other appropriate information about the statistical significance of the experiments?
    \item[] Answer: \answerYes{} % Replace by \answerYes{}, \answerNo{}, or \answerNA{}.
    \item[] Justification: 
    % The current experiments report mean performance values. Due to the extensive nature of evaluations across multiple tasks and baselines, error bars from multiple runs were not computed for this submission due to computational constraints. However, the reported performance differences are often substantial.
    Our adversarial robustness evaluation on 380 prompts includes a Fisher's Exact Test to confirm the statistical significance (p=0.0089) of our method's superiority. This is presented in Section 4.6 and detailed in Appendix I.
    \item[] Guidelines:
    \begin{itemize}
        \item The answer NA means that the paper does not include experiments.
        \item The authors should answer "Yes" if the results are accompanied by error bars, confidence intervals, or statistical significance tests, at least for the experiments that support the main claims of the paper.
        \item The factors of variability that the error bars are capturing should be clearly stated (for example, train/test split, initialization, random drawing of some parameter, or overall run with given experimental conditions).
        \item The method for calculating the error bars should be explained (closed form formula, call to a library function, bootstrap, etc.)
        \item The assumptions made should be given (e.g., Normally distributed errors).
        \item It should be clear whether the error bar is the standard deviation or the standard error of the mean.
        \item It is OK to report 1-sigma error bars, but one should state it. The authors should preferably report a 2-sigma error bar than state that they have a 96\% CI, if the hypothesis of Normality of errors is not verified.
        \item For asymmetric distributions, the authors should be careful not to show in tables or figures symmetric error bars that would yield results that are out of range (e.g. negative error rates).
        \item If error bars are reported in tables or plots, The authors should explain in the text how they were calculated and reference the corresponding figures or tables in the text.
    \end{itemize}

\item {\bf Experiments compute resources}
    \item[] Question: For each experiment, does the paper provide sufficient information on the computer resources (type of compute workers, memory, time of execution) needed to reproduce the experiments?
    \item[] Answer: \answerYes{} % Replace by \answerYes{}, \answerNo{}, or \answerNA{}.
    \item[] Justification: 
    % Specific details on compute workers, memory, and exact execution times per experiment are not detailed in the current version. Our method is inference-time only; the primary computational cost involves text encoding, diffusion steps (standard), and optional visual detection. The overhead of Semantic Surgery itself (vector operations) is minimal compared to the diffusion process. 
    We provide a detailed inference time analysis in Appendix B (Table 7), comparing our method's computational overhead against baselines and analyzing the cost of the optional LCP module.
    \item[] Guidelines:
    \begin{itemize}
        \item The answer NA means that the paper does not include experiments.
        \item The paper should indicate the type of compute workers CPU or GPU, internal cluster, or cloud provider, including relevant memory and storage.
        \item The paper should provide the amount of compute required for each of the individual experimental runs as well as estimate the total compute. 
        \item The paper should disclose whether the full research project required more compute than the experiments reported in the paper (e.g., preliminary or failed experiments that didn't make it into the paper). 
    \end{itemize}
    
\item {\bf Code of ethics}
    \item[] Question: Does the research conducted in the paper conform, in every respect, with the NeurIPS Code of Ethics \url{https://neurips.cc/public/EthicsGuidelines}?
    \item[] Answer: \answerYes{} % Replace by \answerYes{}, \answerNo{}, or \answerNA{}.
    \item[] Justification: The research aims to improve AI safety by removing undesirable content, aligning with ethical AI development. We have reviewed the NeurIPS Code of Ethics and believe our work conforms to it. Broader impacts, including potential misuse, are discussed in Appendix~\ref{app:broader_impact}.
    \item[] Guidelines:
    \begin{itemize}
        \item The answer NA means that the authors have not reviewed the NeurIPS Code of Ethics.
        \item If the authors answer No, they should explain the special circumstances that require a deviation from the Code of Ethics.
        \item The authors should make sure to preserve anonymity (e.g., if there is a special consideration due to laws or regulations in their jurisdiction).
    \end{itemize}

\item {\bf Broader impacts}
    \item[] Question: Does the paper discuss both potential positive societal impacts and negative societal impacts of the work performed?
    \item[] Answer: \answerYes{} % Replace by \answerYes{}, \answerNo{}, or \answerNA{}.
    \item[] Justification: A dedicated "Broader Impact" section is provided in Appendix~\ref{app:broader_impact}, discussing potential positive impacts (harmful content reduction, IP protection, bias mitigation) and negative impacts (over-erasure, adversarial attacks, false sense of security), along with potential mitigation strategies.
    \item[] Guidelines:
    \begin{itemize}
        \item The answer NA means that there is no societal impact of the work performed.
        \item If the authors answer NA or No, they should explain why their work has no societal impact or why the paper does not address societal impact.
        \item Examples of negative societal impacts include potential malicious or unintended uses (e.g., disinformation, generating fake profiles, surveillance), fairness considerations (e.g., deployment of technologies that could make decisions that unfairly impact specific groups), privacy considerations, and security considerations.
        \item The conference expects that many papers will be foundational research and not tied to particular applications, let alone deployments. However, if there is a direct path to any negative applications, the authors should point it out. For example, it is legitimate to point out that an improvement in the quality of generative models could be used to generate deepfakes for disinformation. On the other hand, it is not needed to point out that a generic algorithm for optimizing neural networks could enable people to train models that generate Deepfakes faster.
        \item The authors should consider possible harms that could arise when the technology is being used as intended and functioning correctly, harms that could arise when the technology is being used as intended but gives incorrect results, and harms following from (intentional or unintentional) misuse of the technology.
        \item If there are negative societal impacts, the authors could also discuss possible mitigation strategies (e.g., gated release of models, providing defenses in addition to attacks, mechanisms for monitoring misuse, mechanisms to monitor how a system learns from feedback over time, improving the efficiency and accessibility of ML).
    \end{itemize}
    
\item {\bf Safeguards}
    \item[] Question: Does the paper describe safeguards that have been put in place for responsible release of data or models that have a high risk for misuse (e.g., pretrained language models, image generators, or scraped datasets)?
    \item[] Answer: \answerNA{} % Replace by \answerYes{}, \answerNo{}, or \answerNA{}.
    \item[] Justification: This paper introduces a method for concept erasure, which is itself a form of safeguard. We do not release new models or datasets that pose a high risk of misuse. The method operates on existing models and aims to make them safer.
    \item[] Guidelines:
    \begin{itemize}
        \item The answer NA means that the paper poses no such risks.
        \item Released models that have a high risk for misuse or dual-use should be released with necessary safeguards to allow for controlled use of the model, for example by requiring that users adhere to usage guidelines or restrictions to access the model or implementing safety filters. 
        \item Datasets that have been scraped from the Internet could pose safety risks. The authors should describe how they avoided releasing unsafe images.
        \item We recognize that providing effective safeguards is challenging, and many papers do not require this, but we encourage authors to take this into account and make a best faith effort.
    \end{itemize}

\item {\bf Licenses for existing assets}
    \item[] Question: Are the creators or original owners of assets (e.g., code, data, models), used in the paper, properly credited and are the license and terms of use explicitly mentioned and properly respected?
    \item[] Answer: \answerYes{} % Replace by \answerYes{}, \answerNo{}, or \answerNA{}.
    \item[] Justification: All datasets used (CIFAR-10, I2P, MS-COCO, Image Synthesis Style Studies, MACE celebrity dataset) are publicly available and are cited with references to their original publications in Section~\ref{sec:experiments}. The base model (Stable Diffusion v1.4) is also cited. We assume adherence to their respective licenses as per standard academic practice. Specific license names are not listed for each dataset within the paper but can be found via the provided citations.
    \item[] Guidelines:
    \begin{itemize}
        \item The answer NA means that the paper does not use existing assets.
        \item The authors should cite the original paper that produced the code package or dataset.
        \item The authors should state which version of the asset is used and, if possible, include a URL.
        \item The name of the license (e.g., CC-BY 4.0) should be included for each asset.
        \item For scraped data from a particular source (e.g., website), the copyright and terms of service of that source should be provided.
        \item If assets are released, the license, copyright information, and terms of use in the package should be provided. For popular datasets, \url{paperswithcode.com/datasets} has curated licenses for some datasets. Their licensing guide can help determine the license of a dataset.
        \item For existing datasets that are re-packaged, both the original license and the license of the derived asset (if it has changed) should be provided.
        \item If this information is not available online, the authors are encouraged to reach out to the asset's creators.
    \end{itemize}

\item {\bf New assets}
    \item[] Question: Are new assets introduced in the paper well documented and is the documentation provided alongside the assets?
    \item[] Answer: \answerNA{} % Replace by \answerYes{}, \answerNo{}, or \answerNA{}.
    \item[] Justification: The paper does not introduce or release new datasets, models, or other such assets. The primary contribution is a novel method.
    \item[] Guidelines:
    \begin{itemize}
        \item The answer NA means that the paper does not release new assets.
        \item Researchers should communicate the details of the dataset/code/model as part of their submissions via structured templates. This includes details about training, license, limitations, etc. 
        \item The paper should discuss whether and how consent was obtained from people whose asset is used.
        \item At submission time, remember to anonymize your assets (if applicable). You can either create an anonymized URL or include an anonymized zip file.
    \end{itemize}

\item {\bf Crowdsourcing and research with human subjects}
    \item[] Question: For crowdsourcing experiments and research with human subjects, does the paper include the full text of instructions given to participants and screenshots, if applicable, as well as details about compensation (if any)? 
    \item[] Answer: \answerNA{} % Replace by \answerYes{}, \answerNo{}, or \answerNA{}.
    \item[] Justification: The research presented in this paper does not involve crowdsourcing experiments or direct research with human subjects.
    \item[] Guidelines:
    \begin{itemize}
        \item The answer NA means that the paper does not involve crowdsourcing nor research with human subjects.
        \item Including this information in the supplemental material is fine, but if the main contribution of the paper involves human subjects, then as much detail as possible should be included in the main paper. 
        \item According to the NeurIPS Code of Ethics, workers involved in data collection, curation, or other labor should be paid at least the minimum wage in the country of the data collector. 
    \end{itemize}

\item {\bf Institutional review board (IRB) approvals or equivalent for research with human subjects}
    \item[] Question: Does the paper describe potential risks incurred by study participants, whether such risks were disclosed to the subjects, and whether Institutional Review Board (IRB) approvals (or an equivalent approval/review based on the requirements of your country or institution) were obtained?
    \item[] Answer: \answerNA{} % Replace by \answerYes{}, \answerNo{}, or \answerNA{}.
    \item[] Justification: The research does not involve human subjects directly, therefore IRB approval was not applicable.
    \item[] Guidelines:
    \begin{itemize}
        \item The answer NA means that the paper does not involve crowdsourcing nor research with human subjects.
        \item Depending on the country in which research is conducted, IRB approval (or equivalent) may be required for any human subjects research. If you obtained IRB approval, you should clearly state this in the paper. 
        \item We recognize that the procedures for this may vary significantly between institutions and locations, and we expect authors to adhere to the NeurIPS Code of Ethics and the guidelines for their institution. 
        \item For initial submissions, do not include any information that would break anonymity (if applicable), such as the institution conducting the review.
    \end{itemize}

\item {\bf Declaration of LLM usage}
    \item[] Question: Does the paper describe the usage of LLMs if it is an important, original, or non-standard component of the core methods in this research? Note that if the LLM is used only for writing, editing, or formatting purposes and does not impact the core methodology, scientific rigorousness, or originality of the research, declaration is not required.
    %this research? 
    \item[] Answer: \answerYes{} % Replace by \answerYes{}, \answerNo{}, or \answerNA{}.
    \item[] Justification: ChatGPT was used to generate paraphrased prompts for the object erasure task (Section~\ref{sec:experiments}, specifically subsection Object Erasure), which is a component of our experimental methodology for evaluating robustness. This usage is explicitly mentioned.
    \item[] Guidelines:
    \begin{itemize}
        \item The answer NA means that the core method development in this research does not involve LLMs as any important, original, or non-standard components.
        \item Please refer to our LLM policy (\url{https://neurips.cc/Conferences/2025/LLM}) for what should or should not be described.
    \end{itemize}

\end{enumerate}

%%%%%%%%%%%%%%%%%%%%%%%%%%%%%%%%%%%%%%%%%%%%%%%%%%%%%%%%%%%%
\clearpage
\appendix

\section{Detailed Experimental Setup}
\label{app:detailed_experimental_setup_condensed}

This section provides a comprehensive overview of the experimental configurations, complementing the summary in the main paper (Sec.~\ref{sec:experiments}).

\subsection{Tasks, Datasets, and Metrics}
\label{app:tasks_datasets_metrics}

We evaluate our method across four distinct concept erasure challenges:

\begin{itemize}[leftmargin=1.5em]
    \item \textbf{Object Erasure:} We test the removal of common nouns using the 10 object classes from the CIFAR-10 dataset~\cite{Krizhevsky2009CIFAR10}. Following Receler~\cite{huang2024receler}, we use simple prompts (e.g., "A photo of \{class\}") for efficacy ($Acc_E$) and paraphrased prompts generated via ChatGPT for robustness ($Acc_R$). Locality ($Acc_L$) is the generation accuracy for the nine non-target classes. Object presence is determined by an independent OWL-ViT detector~\cite{minderer2022simple} for unbiased final evaluation.

    \item \textbf{Explicit Content Removal:} This safety-critical task focuses on eliminating nudity. We use the 4,703 prompts from the Inappropriate Image Prompt (I2P) dataset~\cite{schramowski2023safe}. Efficacy is the total count of images flagged by NudeNet~\cite{bedapudinudenet}. General image quality is assessed via FID~\cite{parmar2022aliased} and CLIP scores~\cite{radford2021learning} on MS-COCO 30k~\cite{lin2014microsoft}.

    \item \textbf{Attribute/Style Erasure:} We target the removal of artistic styles using the setup from MACE~\cite{lu2024mace} and the Image Synthesis Style Studies database~\cite{surea2024image}. Performance is measured with CLIP-based metrics: $\text{CLIP}_e$ (efficacy, lower is better), $\text{CLIP}_s$ (specificity, higher is better), and the harmonic balance $H_a = \text{CLIP}_s - \text{CLIP}_e$.

    \item \textbf{Multi-Concept Erasure:} We assess scalability by erasing multiple celebrities, following MACE~\cite{lu2024mace}. We use a celebrity dataset to erase subsets of 1, 5, 10, and 100 celebrities. Efficacy and locality are measured by the GCD detector~\cite{hasty2019giphy} accuracy on erased and retained groups, respectively, summarized by a harmonic mean $H_c$.

    % MODIFICATION START: Add the 5th task for adversarial robustness
    \item \textbf{Robustness Against Adversarial Attacks:} To specifically evaluate security resilience, we test our method against adversarial prompts. We measure the Attack Success Rate (ASR) using two prominent benchmarks: the black-box, model-agnostic Ring-A-Bell (RAB) attack~\cite{tsairing} on an expanded set of 380 prompts, and the white-box, optimization-based UnlearnDiffAtk~\cite{zhang2024generate}. This task directly evaluates the method's ability to withstand sophisticated circumvention attempts.
    % MODIFICATION END
    
\end{itemize}

\subsection{Hyperparameter Settings and Visual Feedback Configuration}
\label{app:hyperparameter_settings_condensed_subsection}

Key hyperparameters and the use of the Latent Concept Persistence (LCP) mitigation module varied across experiments, as summarized in Table~\ref{tab:hyperparameter_settings_summary_condensed}. The LCP module, when active, performs a second-stage inference if visual concepts targeted for erasure are detected in the first-stage output. All experiments use Stable Diffusion v1.4~\cite{rombach2022high} with the DDIM sampler (50 steps).

\begin{table}[ht!]
    \centering
    \caption{Summary of key hyperparameter settings and LCP visual feedback configuration across experiments.}
    \label{tab:hyperparameter_settings_summary_condensed}
    \resizebox{\textwidth}{!}{%
    \begin{tabular}{lccccc}
        \toprule
        Experiment Task & $\beta$ (Decision Threshold) & $\gamma$ (Steepness) & $\tau$ (Activation Threshold) & Visual Feedback (LCP) & $\lambda_{\text{vis}}$ (if LCP active) \\
        \midrule
        CIFAR-10 Object Erasure & -0.12 & 0.02 (Global) & 0.5 (Global) & Yes (AOD~\cite{wu2024agentic}) & 1.0 \\
        I2P Explicit Content Removal & -0.06 & 0.02 (Global) & 0.5 (Global) & Yes (NudeNet~\cite{bedapudinudenet}) & 1.0 \\
        Artistic Style Erasure & -0.30 & 0.02 (Global) & 0.5 (Global) & No & N/A \\
        Multi-Concept Celebrity Erasure & -0.28 & 0.02 (Global) & 0.5 (Global) & No & N/A \\
        % MODIFICATION START: Add a new row for the adversarial attack experiment
        Adversarial Robustness & -0.06 & 0.02 (Global) & 0.5 (Global) & No & N/A \\
        % MODIFICATION END
        \bottomrule
    \end{tabular}%
    }
\end{table}

\subsection{Visual Feedback Implementation Details}
\label{app:visual_feedback_details_condensed}
The visual feedback (LCP mitigation) module was employed for CIFAR-10 object erasure and I2P explicit content removal. If a targeted concept was detected above a predefined threshold in the initial generation, feedback reinforced the erasure in a second pass.

For the I2P task, NudeNet~\cite{bedapudinudenet} identifies exposed body parts (e.g., \texttt{BUTTOCKS\_EXPOSED}, \texttt{FEMALE\_BREAST\_EXPOSED}). If the maximum detection score for any such element exceeded the threshold (e.g., 0.6), the erasure strength for all four targeted abstract concepts ("nude", "naked", "erotic", "sexual") was simultaneously increased in the second-stage generation, amplified by $\lambda_{\text{vis}}$.

\subsection{Dataset Details} % <<< 新增的小节
\label{app:dataset_details_condensed}
The datasets used for our experiments were sourced as follows:
\begin{itemize}[leftmargin=1.5em]
    \item \textbf{CIFAR-10 Object Erasure:} We utilized the simple prompts and paraphrased prompts from the CIFAR-10 task setup of Receler~\cite{huang2024receler}.
    \item \textbf{I2P Explicit Content Removal:} This task used the standard I2P (Imagen Prompt Dataset)~\cite{schramowski2023safe}, a common benchmark for evaluating safety in text-to-image models.
    \item \textbf{Artistic Style Erasure:} The list of artists targeted for style erasure was adopted from the experimental setup of MACE~\cite{lu2024mace}, facilitating direct comparison.
    \item \textbf{Multi-Concept Celebrity Erasure:} Similarly, the set of celebrities for the multi-concept erasure task was also sourced from MACE~\cite{lu2024mace} to ensure fair comparability with prior work.
\end{itemize}
Using established dataset configurations where possible aids in reproducibility and allows for more direct comparisons with existing methods.

%这里整合所有后续补充的消融实验
%%%%%%%%%%%%%%%%%%%%%%%%%

\section{Ablation Study of Visual Feedback}
\label{app:ablation}

\label{app:visual_feedback_celebrity} 
\label{app:lambda} 

We conduct an ablation study to evaluate the impact of the Visual Feedback Adjustment module (Section~\ref{subsec:lcp_mitigation_final_v3}). Table~\ref{tab:ablation_visual_feedback} shows the performance of Semantic Surgery with and without this component on the CIFAR-10 object erasure task (averaged over all classes) and the I2P explicit content removal task.

\begin{table}[ht!]
\centering
\caption{Ablation study of the Visual Feedback Adjustment module. `Visual Feedback`: $\checkmark$ indicates the module is used, $\times$ indicates it is not. For CIFAR-10, metrics are averages over 10 classes. $H_a$ for CIFAR-10 corresponds to the $H$ score defined in Sec 4.2. `Total` for I2P refers to the total count of nude images generated.}

\begin{tabular}{ccrrrrr}
\toprule
% --- Header Row 1 ---
% 'config' and 'use visual feedback' span 3 header rows.
% 'task' spans 5 columns.
\multirow{3}{*}{Config} & \multirow{3}{*}{Visual Feedback} & \multicolumn{5}{c}{Task} \\
\cmidrule(lr){3-7} % Rule under 'task', spanning columns 3 to 7

% --- Header Row 2 ---
% Placeholders for multirow cells in col 1 & 2.
% 'cifar10' spans 4 columns.
% 'i2p' is a single column header.
 & & \multicolumn{4}{c}{CIFAR10} & \multicolumn{1}{c}{I2P} \\
\cmidrule(lr){3-6} \cmidrule(lr){7-7} % Rule under 'cifar10' (cols 3-6) and 'i2p' (col 7)

% --- Header Row 3 ---
% Placeholders for multirow cells in col 1 & 2.
% Individual sub-headers, centered using \multicolumn{1}{c}{...}
 & & \multicolumn{1}{c}{$Acc_E$} & \multicolumn{1}{c}{$Acc_R$} & \multicolumn{1}{c}{$Acc_L$} & \multicolumn{1}{c}{$H_a$} & \multicolumn{1}{c}{Total} \\
\midrule

% --- Data Row 1 ---
1 & $\times$   & 4   & 6.4 & 87.38 & 92.18 & 47 \\

% --- Data Row 2 ---
2 & $\checkmark$ & 1.5 & 2.0 & 85.56 & 93.58 &  1 \\
\bottomrule
\end{tabular}
\label{tab:ablation_visual_feedback} % Changed label to be unique
\end{table}

The results demonstrate the significant benefit of the visual feedback loop. On CIFAR-10, incorporating visual feedback improves erasure efficacy ($Acc_E$ from 4.0 to 1.5) and robustness ($Acc_R$ from 6.4 to 2.0) with only a marginal decrease in locality ($Acc_L$ from 87.38 to 85.56), leading to an overall improvement in the $H$ score from 92.18 to 93.58. This slight drop in locality is primarily attributable to imperfections in the visual detector used for feedback. Specifically, we observed that the AOD detector occasionally misclassifies related but distinct concepts, for instance, identifying a "truck" as an "automobile." Such false positives incorrectly trigger a second-stage erasure surgery, which can inadvertently affect the generation of non-target concepts and thus slightly reduce the measured locality. This observation underscores the importance of employing a high-precision detector within the feedback loop to maximize its benefits while minimizing non-local effects.

The impact is even more pronounced for explicit content removal (I2P). Without visual feedback, 47 nude images are generated. With visual feedback, this number plummets to just 1, highlighting the module's critical role in addressing Latent Concept Persistence (LCP) for sensitive concepts that may not be fully neutralized by semantic manipulation alone. The LCP scaling factor $\lambda_{\text{vis}}$ (referred to as $\lambda$ in the main text if Eq.~\ref{eq:rho_joint_star_v3} was intended for this) was set to 1.0 for these experiments, balancing aggressive LCP correction with fidelity. For celebrity and style erasure tasks, the visual feedback loop was not employed due to minimal observed LCP risk (celebrity) or lack of a reliable detector (style), as noted in Section~\ref{sec:experiments}.

% MODIFICATION START: Add inference time analysis for LCP module

\paragraph{Inference Time Analysis.}
To quantify the computational overhead of the LCP visual feedback loop, we conducted a detailed timing analysis. The feedback loop is an on-demand, two-pass process; a second generation is triggered only if the detector identifies a persistent concept. Table~\ref{tab:inference_time_analysis} benchmarks our method's inference time per image against baselines.

\begin{table}[ht!]
\centering
\caption{Inference time analysis per image (50 steps) on a single NVIDIA RTX4090 GPU. The average LCP time for Semantic Surgery on the I2P task demonstrates its practical efficiency.}
\label{tab:inference_time_analysis}
\begin{tabular}{lc}
\toprule
Scenario & Time per Image (seconds) \\
\midrule
1. Baseline SDv1.4 & 3.11 s \\
\midrule
\multicolumn{2}{l}{\textit{Inference-Time Methods}} \\
2. SLD & 4.07 s \\
3. SAFREE & 3.96 s \\
4. \textbf{Ours (LCP disabled)} & \textbf{3.21 s} \\
\midrule
\multicolumn{2}{l}{\textit{Semantic Surgery with LCP}} \\
5. Ours (LCP worst-case, always 2-pass) & 6.43 s \\
6. \textbf{Ours (LCP average on I2P task)} & \textbf{4.09 s} \\
\bottomrule
\end{tabular}
\end{table}

The analysis reveals three key points: (1) The core Semantic Surgery framework (LCP disabled) adds negligible overhead (3.21s vs. 3.11s for the baseline). (2) The "doubled time" scenario is a worst-case, not the norm. (3) In a practical, high-security application like the I2P task, the average inference time (4.09s) is comparable to other training-free methods like SLD and SAFREE, while providing vastly superior erasure completeness (as shown in main paper, Table~\ref{tab:comparison2}). This highlights that the LCP module offers a highly effective and efficient trade-off for critical safety needs.

% MODIFICATION END

\section{Qualitative Analysis of Co-Occurrence Encoding}
\label{app:cooccurrence_analysis_condensed}
Our main paper (Sec. 3.1) introduces Co-Occurrence Encoding (Eq. 6) for robust multi-concept erasure. Figure~\ref{fig:cooccurrence_vs_naive_qualitative_condensed_supp} qualitatively compares it against a "Naive Approach" (summing individual erasure vectors) for erasing "dog" and "cat" from "dog and cat playing together."

Co-Occurrence Encoding successfully removes both target animals while preserving the "playing together" action, often substituting them with plausible alternatives like children, thus maintaining the scene's narrative. In contrast, the Naive Approach significantly degrades image quality and semantic coherence, leading to muddled imagery. This demonstrates Co-Occurrence Encoding's advantage in neutralizing targets while protecting contextual integrity and image quality, unlike the naive method's tendency to over-erase.

\begin{figure}[ht!]
    \centering
    \includegraphics[
    trim=60mm 5mm 100mm 10mm,
    clip,
    width=0.99\textwidth
    ]{./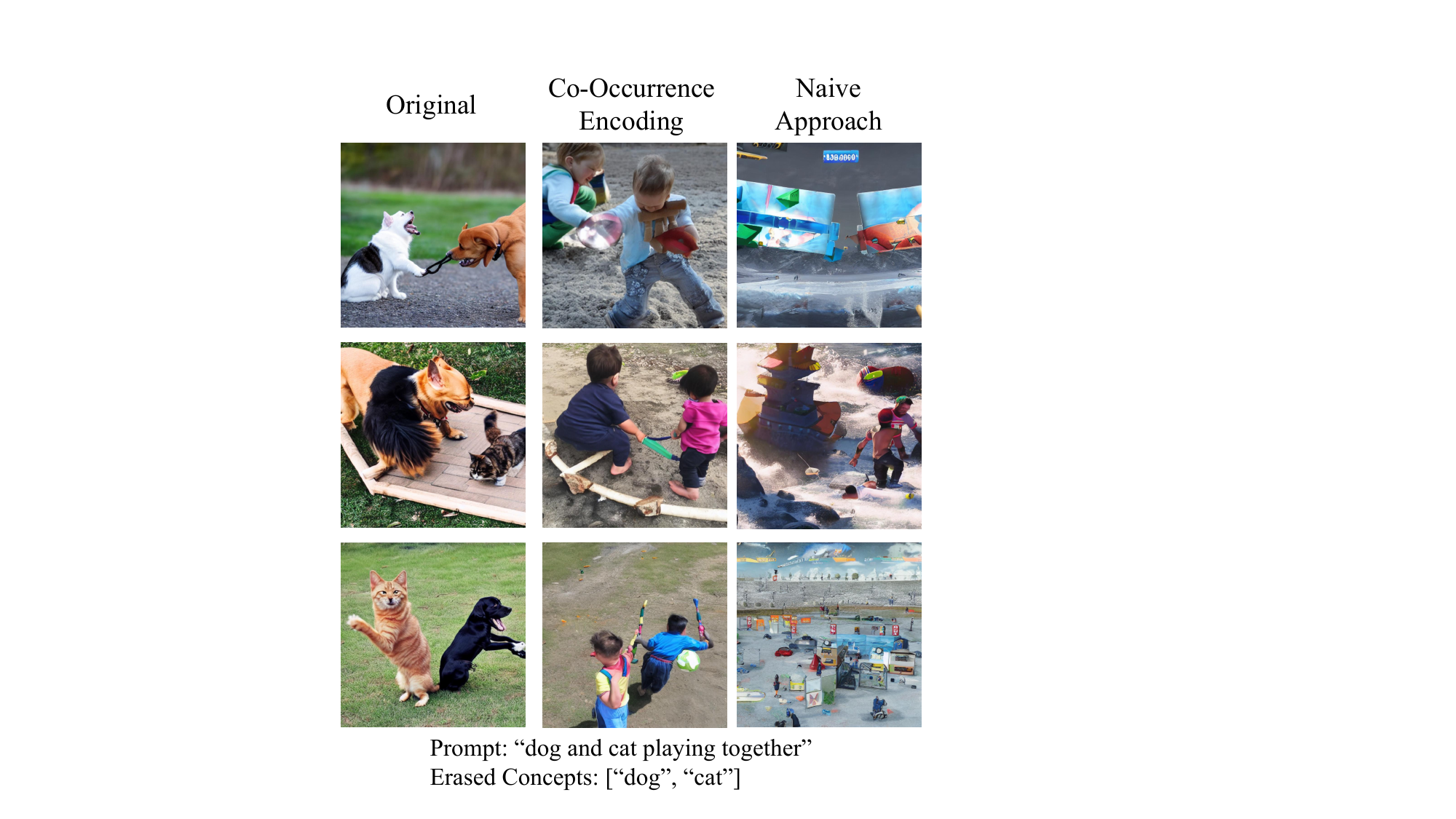} % Your specified path
    \caption{Qualitative comparison for multi-concept erasure ("dog", "cat"). Our Co-Occurrence Encoding (center) preserves semantics (e.g., "playing together" with children) while the Naive Approach (right) degrades image quality compared to the Original (left).}
    \label{fig:cooccurrence_vs_naive_qualitative_condensed_supp}
\end{figure}

\section{Experimental Analysis of Hyperparameter Sensitivity}
\label{app:hyperparameter_sensitivity_condensed}

\paragraph{Practical Parameter Estimation.}
Theorem~\ref{thm:sigmoid_calibration_high_confidence} provides the theoretical basis for calibration using $\alpha_c$. Practical application requires determining the parameters $\beta$ and $\gamma$ for the estimator $\hat{\rho}(\alpha_c) = \sigma_{\text{sigmoid}}((\alpha_c - \beta)/\gamma)$.
\begin{itemize}[leftmargin=1.5em]
    \item \textbf{Decision Threshold ($\beta$):} This parameter corresponds to the threshold $\beta$ described in Assumption~\ref{assump:alpha_separation_final}. It is empirically determined by analyzing the distributions of $\alpha_c$ values for concept-present and concept-absent prompts to find a value that effectively separates them (see Appendix~\ref{app:empirical_evidence_alphac_final_detailed} and Figure~\ref{fig:appendix_alphac_distribution_example_final_new}). This $\beta$ optimally centers the sigmoid's decision boundary based on the observed data.
    \item \textbf{Sensitivity Tuning ($\gamma$):} The parameter $\gamma$ controls the steepness of the sigmoid function around $\beta$. While Theorem~\ref{thm:sigmoid_calibration_high_confidence} provides a theoretical construction for $\gamma$ (Eq.~\eqref{eq:gamma_choice_final}) to achieve a specific error bound $\delta_{err}$ given an observed separation margin $\epsilon$, in practice, $\gamma$ is often fine-tuned empirically. This empirical tuning balances the sharpness of classification (smaller $\gamma$ for more decisive output near $\beta$) with robustness to variations in $\alpha_c$ near the decision boundary and the desired level of confidence in the output $\hat{\rho}$. It typically involves evaluating performance on a validation set.
\end{itemize}

We analyzed sensitivity to hyperparameters $\gamma$ (sigmoid steepness, main paper, Eq. 12) and $\beta$ (concept presence threshold, main paper, Sec. 3.2) on CIFAR-10 object erasure (average over 10 classes, no visual feedback for this test). Defaults: $\gamma=0.02$ (global), task-dependent $\beta$ (e.g., $\approx -0.12$ for CIFAR-10). For $\gamma$ sensitivity, $\beta$ was fixed at a near optimal value (e.g., -0.06).

\paragraph{Impact of $\gamma$ (Sigmoid Steepness):}
Figure~\ref{fig:gamma_sensitivity_curves_condensed_supp} shows metrics ($Acc_E, Acc_R, Acc_L, H_c$) as $\gamma$ varies (log scale: 0.02 to 1.0). Stability across this range indicates low sensitivity to $\gamma$, justifying our global choice of $\gamma=0.02$.

\begin{figure}[ht!]
    \centering
    \includegraphics[width=0.9\textwidth]{./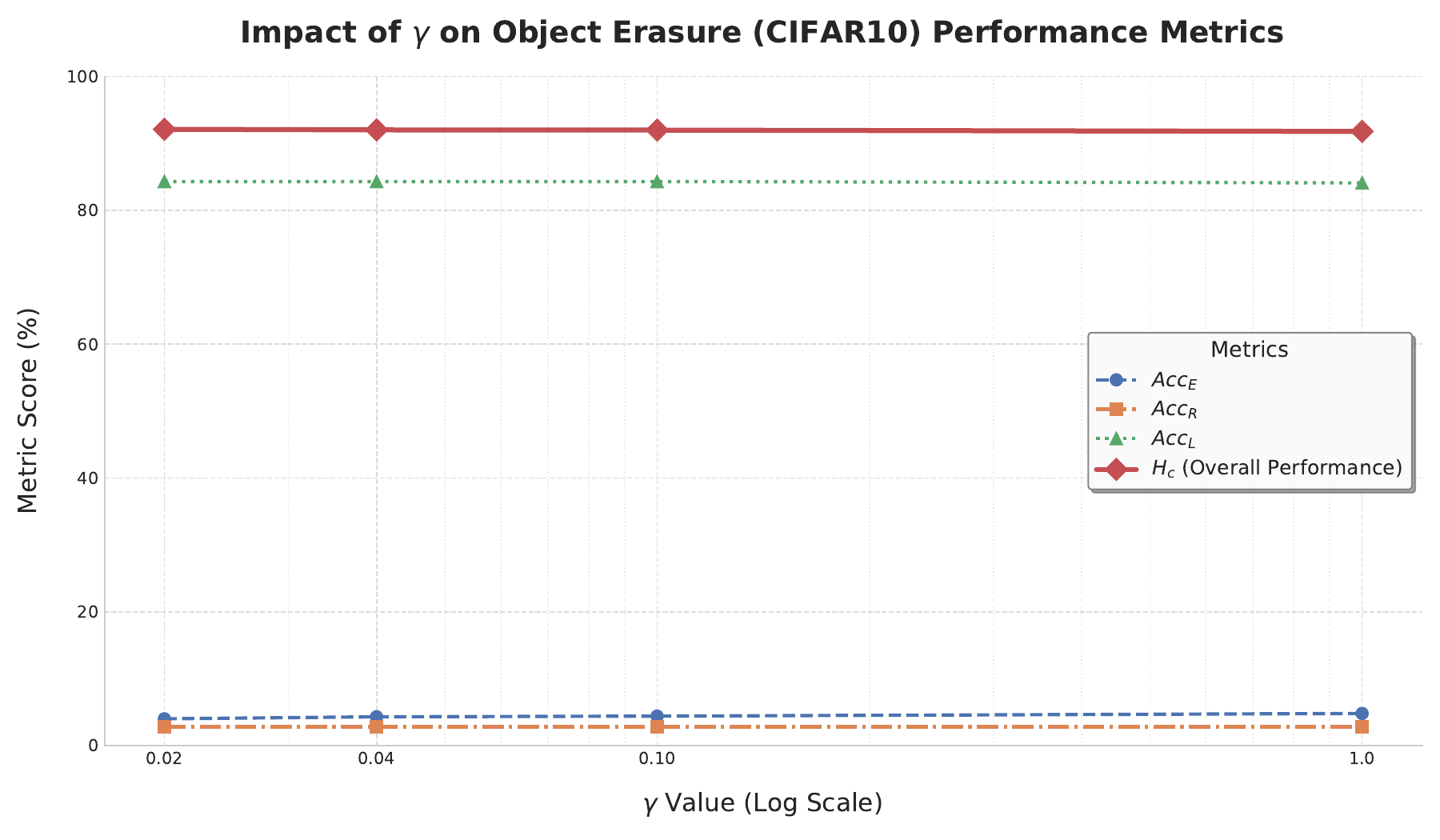} % Your specified path
    \caption{Impact of $\gamma$ (Sigmoid Steepness, log scale) on Object Erasure (CIFAR-10) Performance Metrics. Metrics show high stability across tested $\gamma$ values.}
    \label{fig:gamma_sensitivity_curves_condensed_supp}
\end{figure}

\paragraph{Impact of $\beta$ (Concept Presence Threshold):}
Figure~\ref{fig:beta_sensitivity_curves_condensed_supp} shows $\beta$'s impact. $\beta$ is crucial for erasure activation. $H_c$ indicates an optimal $\beta$ range (around -0.12 to -0.06 here) balancing effective erasure ($Acc_E \approx 0$) with high $Acc_R$ and $Acc_L$. This supports selecting $\beta$ based on $\alpha_c$ distributions (main paper, Sec. 3.2).

\begin{figure}[ht!]
    \centering
    \includegraphics[width=0.9\textwidth]{./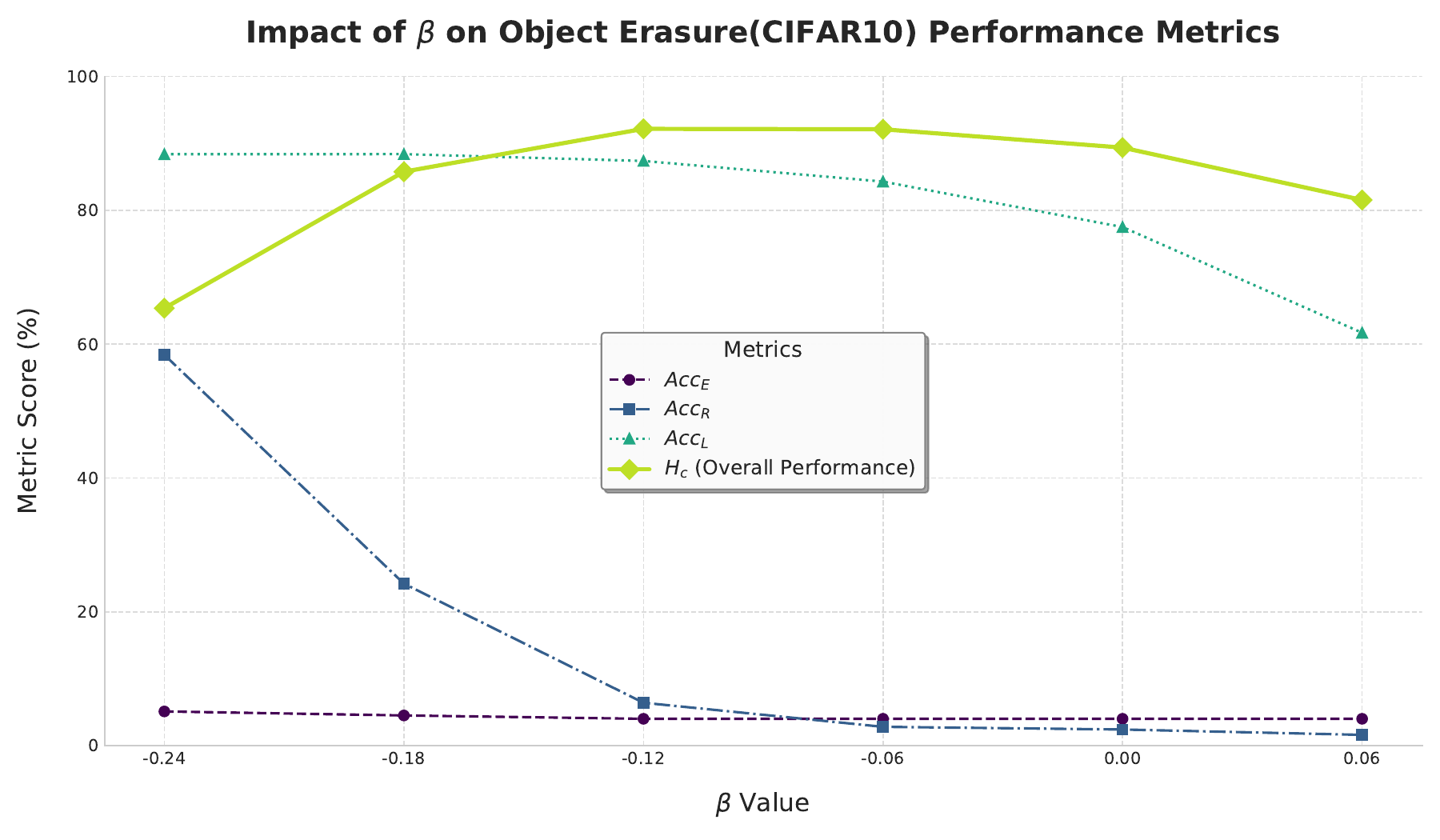} % Your specified path
    \caption{Impact of $\beta$ (Concept Presence Threshold) on Object Erasure (CIFAR-10) Performance Metrics. An optimal $\beta$ range balances erasure effectiveness with semantic preservation.}
    \label{fig:beta_sensitivity_curves_condensed_supp}
\end{figure}

These analyses confirm that while $\beta$ requires careful selection, robustness to $\gamma$ variations enhances practicality.

% MODIFICATION START: Add ablation study for the threshold tau

\paragraph{Impact of $\tau$ (Concept Activation Threshold):}
The threshold $\tau$ determines the minimum confidence score $\hat{\rho}_i$ required to activate the erasure for a concept $c_i$. To evaluate its impact, we conducted an ablation study on the CIFAR-10 object erasure task (averaged over 10 classes, without visual feedback), with $\beta$ and $\gamma$ fixed at their default values. Table~\ref{tab:tau_ablation} presents the results.

\begin{table}[ht!]
\centering
\caption{Ablation study for the concept activation threshold $\tau$ on CIFAR-10 object erasure. Performance is stable across a wide range of $\tau$ values.}
\label{tab:tau_ablation}
\begin{tabular}{ccccc}
\toprule
$\boldsymbol{\tau}$ & $\boldsymbol{Acc_E \downarrow}$ & $\boldsymbol{Acc_R \downarrow}$ & $\boldsymbol{Acc_L \uparrow}$ & $\boldsymbol{H\text{-score} \uparrow}$ \\
\midrule
0.95 & 4.5 & 24.2 & 88.40 & 85.77 \\
\textbf{0.5 (Default)} & \textbf{4.0} & \textbf{6.4} & \textbf{87.38} & \textbf{92.18} \\
0.05 & 4.0 & 3.5 & 84.90 & 92.14 \\
\bottomrule
\end{tabular}
\end{table}

The results show that the model's performance, particularly the overall H-score, is highly stable for $\tau$ within a wide range of $[0.05, 0.5]$. A very high threshold (e.g., 0.95) can degrade robustness ($Acc_R$) as it may fail to activate erasure for weaker semantic cues, while a very low threshold (e.g., 0.05) can slightly harm locality ($Acc_L$) by being overly sensitive. Our chosen default of $\tau=0.5$ provides a strong balance, and the overall stability indicates that the method is not overly sensitive to this hyperparameter.

% MODIFICATION END

\section{Theoretical Framework and Empirical Support for Semantic Surgery}
\label{app:semantic_surgery_appendix_final_v2}

\subsection{Geometric Interpretation of Ideal Concept Removal (Contextual Background)}
\label{app:ideal_removal_geometry_final_context_v2}
This section outlines the geometric relationship for an idealized concept removal, as expressed in Theorem~\ref{thm:projection} and its Corollary. While our primary calibration mechanism (Theorem~\ref{thm:sigmoid_calibration_high_confidence}) relies on the empirical discriminability of $\alpha_c$ values rather than directly inverting these equations, this geometric view provides context for why $\alpha_c$ can serve as an indicator of concept presence.

\paragraph{Theorem~\ref{thm:projection} (Concept Presence Projection).}
For input embedding $e_{\text{input}}$ and concept direction $\Delta e_{\text{erase}}$, the presence intensity $\rho$ (as defined in Eq.~\eqref{eq:single_removal0}) satisfies:
\begin{equation}
    \rho = \frac{\langle e_{\text{input}}, \Delta e_{\text{erase}} \rangle - \langle e_{\text{input}}', \Delta e_{\text{erase}} \rangle}{\|\Delta e_{\text{erase}}\|^2}
    \label{eq:app_thm1_proof_recap}
\end{equation}
where $e_{\text{input}}'$ represents the ideal sanitized embedding $e_{\text{input}} - \rho \Delta e_{\text{erase}}$.

\begin{proof}
From Eq.~\eqref{eq:single_removal0} in the main text, $e'_{\text{input}} = e_{\text{input}} - \rho \Delta e_{\text{erase}}$.
Rearranging gives $\rho \Delta e_{\text{erase}} = e_{\text{input}} - e'_{\text{input}}$.
Taking the inner product of both sides with $\Delta e_{\text{erase}}$:
$\langle \rho \Delta e_{\text{erase}}, \Delta e_{\text{erase}} \rangle = \langle e_{\text{input}} - e'_{\text{input}}, \Delta e_{\text{erase}} \rangle$.
Since $\rho$ is a scalar, $\rho \langle \Delta e_{\text{erase}}, \Delta e_{\text{erase}} \rangle = \langle e_{\text{input}}, \Delta e_{\text{erase}} \rangle - \langle e'_{\text{input}}, \Delta e_{\text{erase}} \rangle$.
Given $\langle \Delta e_{\text{erase}}, \Delta e_{\text{erase}} \rangle = \|\Delta e_{\text{erase}}\|^2$, and assuming $\Delta e_{\text{erase}} \neq 0$, we solve for $\rho$, yielding Eq.~\eqref{eq:app_thm1_proof_recap}.
\end{proof}

\paragraph{Corollary (Angular Formulation).}
For $\ell_2$-normalized encoders where it can be assumed that $\|e_{\text{input}}\| \approx \|e_{\text{input}}'\|$ (e.g., if $\rho \Delta e_{\text{erase}}$ is small or re-normalization is applied), let $k = \|e_{\text{input}}\|/\|\Delta e_{\text{erase}}\|$. Defining $\alpha_c = \cos(e_{\text{input}}, \Delta e_{\text{erase}})$ and $\alpha' = \cos(e_{\text{input}}', \Delta e_{\text{erase}})$, then:
\begin{equation}
    \rho \approx k(\alpha_c - \alpha')
    \label{eq:app_corollary_proof_recap}
\end{equation}

\begin{proof}
From Theorem~\ref{thm:projection}, $\rho = (\langle e_{\text{input}}, \Delta e_{\text{erase}} \rangle - \langle e'_{\text{input}}, \Delta e_{\text{erase}} \rangle) / \|\Delta e_{\text{erase}}\|^2$.
Using $\langle u, v \rangle = \|u\| \|v\| \cos(u,v)$:
$\rho = (\|e_{\text{input}}\| \|\Delta e_{\text{erase}}\| \alpha_c - \|e'_{\text{input}}\| \|\Delta e_{\text{erase}}\| \alpha') / \|\Delta e_{\text{erase}}\|^2
      = (\|e_{\text{input}}\| \alpha_c - \|e'_{\text{input}}\| \alpha') / \|\Delta e_{\text{erase}}\|$.
If $\|e_{\text{input}}\| \approx \|e'_{\text{input}}\|$, then $\rho \approx \|e_{\text{input}}\| (\alpha_c - \alpha') / \|\Delta e_{\text{erase}}\|$.
Defining $k = \|e_{\text{input}}\|/\|\Delta e_{\text{erase}}\|$ gives Eq.~\eqref{eq:app_corollary_proof_recap}.
\end{proof}

\textit{Relevance to Calibration:} These results illustrate that, ideally, a higher $\alpha_c$ (when $e_{\text{input}}$ contains the concept, so $\rho > 0$ and $\alpha'$ is for a sanitized state) compared to a lower $\alpha_c$ (when $e_{\text{input}}$ itself is like a sanitized state, so $\rho \approx 0$ and $\alpha_c \approx \alpha'$) is indicative of concept presence. Assumption~\ref{assump:alpha_separation_final} directly captures this empirically by observing the separation of $\alpha_c$ values based on whether the input prompt implies concept presence or absence.

\subsection{Empirical Support for $\alpha_c$-Separability (Assumption~\ref{assump:alpha_separation_final})}
\label{app:empirical_evidence_alphac_final_detailed} % Updated label for clarity

Assumption~\ref{assump:alpha_separation_final} (Statistical $\alpha_c$-Separability), which posits that $\alpha_c(p) = \cos(\phi(p), \Delta e_{\text{erase}})$ values for prompts containing a target concept are well-separated from those for prompts not containing it, is central to our calibration method. We provide strong empirical support for this assumption through a systematic prompt generation and evaluation process.

\paragraph{Systematic Prompt Generation.}
To rigorously test the separability, we designed a `RobustPromptGenerator` (details of its configuration and generation logic are available in our supplementary code). This generator takes a configuration for an "anchor concept" (e.g., an object like "chair", a modifier like "red", a style like "Van Gogh", or sensitive content like "nude") and systematically generates three categories of prompts, each with $N=100$ unique instances:
\begin{itemize}[leftmargin=1.5em]
    \item \textbf{Related Prompts ($P_{\text{present}}$):} These prompts directly include the anchor concept (e.g., "a wooden \textit{chair} in studio photography"). Non-anchor components like scenes, other modifiers, and styles are varied randomly from predefined pools to ensure diversity.
    \item \textbf{Variant Prompts ($P_{\text{variants}}$):} These prompts replace the anchor concept with one of its pre-defined synonyms or close semantic variants (e.g., "an ergonomic \textit{seat} in 3D render" if "seat" is a variant of "chair"). Other components are varied similarly to related prompts. This group helps assess the robustness of $\alpha_c$ to linguistic variations of the target concept.
    \item \textbf{Unrelated Prompts ($P_{\text{absent}}$):} These prompts replace the anchor concept with a pre-defined unrelated concept from a similar category (e.g., replacing "chair" with "lamp" if both are objects, or "red" with "blue" if both are colors), while attempting to keep other contextual components (scenes, styles, non-anchor modifiers/objects) consistent with those used for the related prompts where applicable, or varied from similar pools. This creates challenging negative samples that are contextually similar but semantically distinct regarding the anchor concept.
\end{itemize}
For each generated prompt $p$, we compute its embedding $\phi(p)$ and then $\alpha_c(p) = \cos(\phi(p), \Delta e_{\text{anchor}})$, where $\Delta e_{\text{anchor}}$ is the pre-computed semantic direction for the specific anchor concept.

\paragraph{Observed $\alpha_c$ Distributions and Separability.}
The distributions of $\alpha_c$ values obtained from these prompt sets consistently demonstrate strong separability. Figure~\ref{fig:appendix_alphac_distribution_example_final_new} illustrates typical outcomes for four diverse anchor concepts: "Object (Chair)", "Explicit Content (Nude)", "Style (Van Gogh)", and "Modifier (Red)".

\begin{figure}[h!]
    \centering
    \includegraphics[
    trim=1mm 10mm 1mm 1mm,  % 裁剪左/右5mm，下/上10mm
    clip,                     % 启用裁剪
     width=0.95\textwidth       % 缩放到页面宽度的80%
  ]{./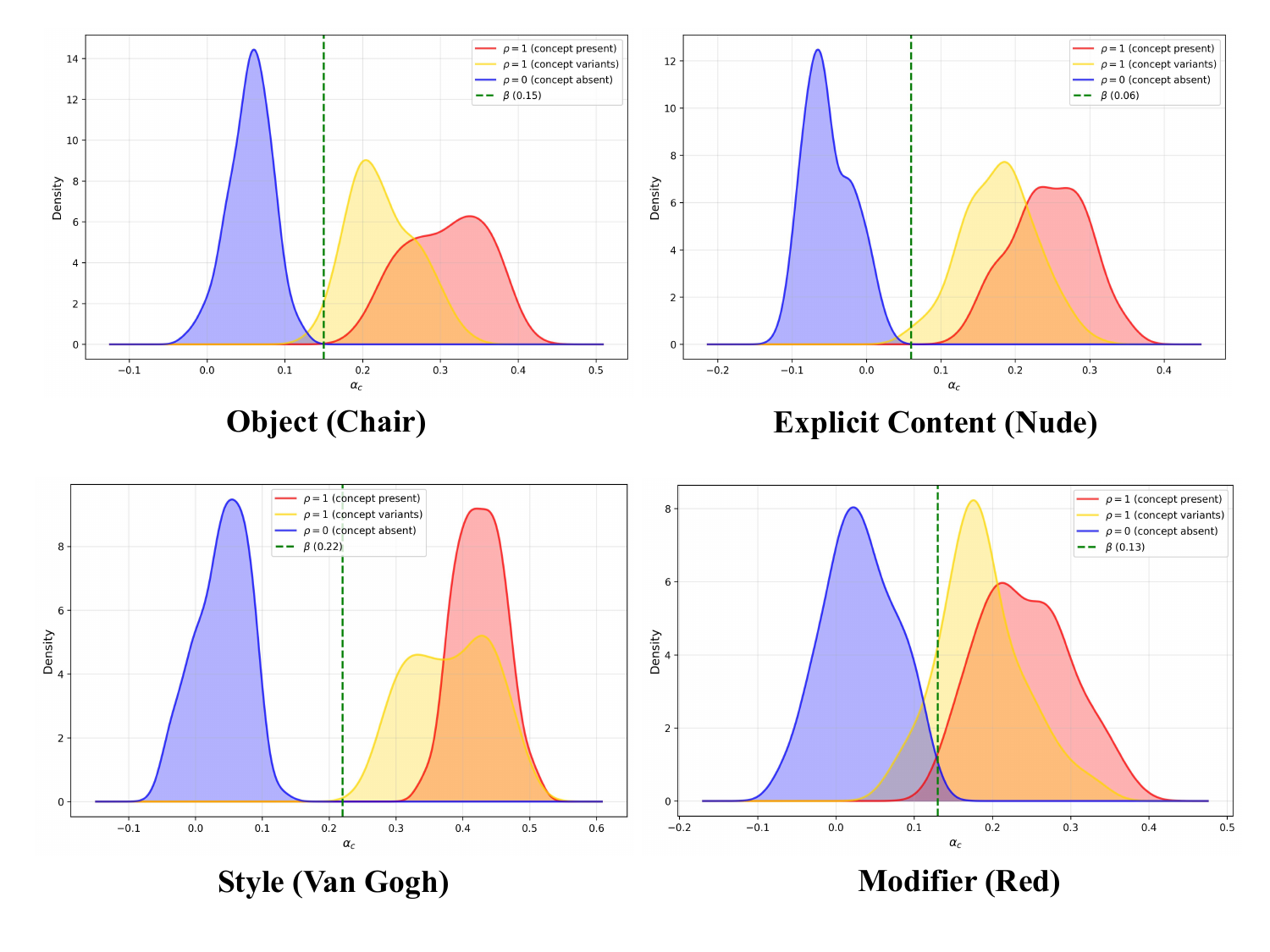} % Replace with your actual figure
    \caption{Illustrative $\alpha_c$ distributions for different anchor concepts, demonstrating empirical separability.
    For each subplot:
    The \textbf{red distribution} ($\rho=1$, concept present) corresponds to $\alpha_c$ values from $P_{\text{present}}$ (direct inclusion of the anchor concept).
    The \textbf{yellow distribution} ($\rho=1$, concept variants) corresponds to $\alpha_c$ values from $P_{\text{variants}}$.
    The \textbf{blue distribution} ($\rho=0$, concept absent) corresponds to $\alpha_c$ values from $P_{\text{absent}}$ (unrelated prompts where the anchor concept is replaced).
    The \textbf{green dashed line} indicates an empirically chosen threshold $\beta$ that effectively separates the "concept absent" (blue) distribution from the "concept present" (red and yellow) distributions.
    The clear separation between the blue distributions and the red/yellow distributions across diverse concept types strongly supports Assumption~\ref{assump:alpha_separation_final}. The "concept variants" (yellow) generally align well with the "concept present" (red) distributions, indicating robustness to linguistic variations.}
    \label{fig:appendix_alphac_distribution_example_final_new}
\end{figure}

As shown in Figure~\ref{fig:appendix_alphac_distribution_example_final_new}:
\begin{itemize}[leftmargin=1.5em]
    \item The $\alpha_c$ values for $P_{\text{absent}}$ (blue distributions, $\rho=0$) consistently cluster in a lower range of $\alpha_c$ values.
    \item The $\alpha_c$ values for $P_{\text{present}}$ (red distributions, $\rho=1$) consistently cluster in a significantly higher range.
    \item The $\alpha_c$ values for $P_{\text{variants}}$ (yellow distributions, also considered $\rho=1$ for the purpose of concept presence) generally overlap substantially with the $P_{\text{present}}$ distributions, indicating that our $\Delta e_{\text{anchor}}$ captures the core semantics despite linguistic variations.
    \item Crucially, for each anchor concept, there is a clear separation or minimal overlap between the $P_{\text{absent}}$ (blue) distribution and the combined $P_{\text{present}}/P_{\text{variants}}$ (red/yellow) distributions.
\end{itemize}
This observed separability allows for the robust determination of a threshold $\beta$ and a margin $\epsilon$ as defined in Assumption~\ref{assump:alpha_separation_final}. For instance, the green dashed lines in Figure~\ref{fig:appendix_alphac_distribution_example_final_new} represent empirically chosen $\beta$ values. The margin $\epsilon$ can then be identified as the distance from $\beta$ to the edge of the high-density regions of the separated distributions, and the tail probabilities $\delta_{\text{sep}}$ are observed to be very small. This systematic empirical validation across diverse concept types and prompt structures provides strong backing for our theoretical framework.

\subsection{Proof of Theorem~\ref{thm:sigmoid_calibration_high_confidence} (Sigmoid Calibration)}
\label{app:proof_thm_sigmoid_calibration_final_v2}

\paragraph{Theorem~\ref{thm:sigmoid_calibration_high_confidence}.} Under Assumption~\ref{assump:alpha_separation_final}, for any target error bound $\delta_{err} \in (0,1/2)$, choosing $\gamma = \epsilon / \text{logit}(1-\delta_{err})$ ensures $\hat{\rho}(\alpha_c) = \text{sigmoid}((\alpha_c - \beta)/\gamma)$ satisfies:
If $\alpha_c \sim \mathcal{D}_1$ and $\alpha_c \geq \beta + \epsilon$: $|\hat{\rho}(\alpha_c) - 1| \leq \delta_{err}$.
If $\alpha_c \sim \mathcal{D}_0$ and $\alpha_c \leq \beta - \epsilon$: $|\hat{\rho}(\alpha_c) - 0| \leq \delta_{err}$.

\begin{proof}
Let $\sigma(x) = 1/(1+e^{-x})$ be the sigmoid function. The logit function is $\text{logit}(p) = \ln(p/(1-p))$.
We want $\sigma(X) \ge 1-\delta_{err}$ for positive classification, which implies $X \ge \text{logit}(1-\delta_{err})$.
We want $\sigma(X) \le \delta_{err}$ for negative classification, which implies $X \le \text{logit}(\delta_{err})$.
Note that $\text{logit}(\delta_{err}) = -\text{logit}(1-\delta_{err})$.

The parameter $\gamma$ is chosen as $\gamma = \frac{\epsilon}{\text{logit}(1-\delta_{err})}$. Since $\epsilon > 0$ and $\text{logit}(1-\delta_{err}) > 0$ for $\delta_{err} \in (0,1/2)$, we have $\gamma > 0$.

\textbf{Case 1:} Concept is present ($\alpha_c \sim \mathcal{D}_1$) and $\alpha_c \geq \beta + \epsilon$.
The input to the sigmoid is $X_{\text{pres}} = (\alpha_c - \beta)/\gamma$.
Since $\alpha_c - \beta \geq \epsilon$, we have $X_{\text{pres}} \geq \epsilon/\gamma$.
Substituting the chosen $\gamma$:
$$ X_{\text{pres}} \geq \frac{\epsilon}{\epsilon / \text{logit}(1-\delta_{err})} = \text{logit}(1-\delta_{err}) $$
As $\sigma(\cdot)$ is monotonically increasing:
$$ \hat{\rho}(\alpha_c) = \sigma(X_{\text{pres}}) \geq \sigma(\text{logit}(1-\delta_{err})) = 1-\delta_{err} $$
Thus, $1 - \hat{\rho}(\alpha_c) \leq \delta_{err}$. Since $\hat{\rho}(\alpha_c) \leq 1$, this implies $|\hat{\rho}(\alpha_c) - 1| \leq \delta_{err}$. This occurs with probability $\geq 1-\delta_{\mathrm{sep}}$ for $\alpha_c \sim \mathcal{D}_1$.

\textbf{Case 2:} Concept is absent ($\alpha_c \sim \mathcal{D}_0$) and $\alpha_c \leq \beta - \epsilon$.
The input to the sigmoid is $X_{\text{abs}} = (\alpha_c - \beta)/\gamma$.
Since $\alpha_c - \beta \leq -\epsilon$, we have $X_{\text{abs}} \leq -\epsilon/\gamma$.
Substituting the chosen $\gamma$:
$$ X_{\text{abs}} \leq \frac{-\epsilon}{\epsilon / \text{logit}(1-\delta_{err})} = -\text{logit}(1-\delta_{err}) = \text{logit}(\delta_{err}) $$
As $\sigma(\cdot)$ is monotonically increasing:
$$ \hat{\rho}(\alpha_c) = \sigma(X_{\text{abs}}) \leq \sigma(\text{logit}(\delta_{err})) = \delta_{err} $$
Thus, $|\hat{\rho}(\alpha_c) - 0| \leq \delta_{err}$. This occurs with probability $\geq 1-\delta_{\mathrm{sep}}$ for $\alpha_c \sim \mathcal{D}_0$.

This proves that for inputs falling into the high-confidence regions (i.e., outside $(\beta-\epsilon, \beta+\epsilon)$), the estimator $\hat{\rho}(\alpha_c)$ is $\delta_{err}$-close to the ideal binary value. Assumption~\ref{assump:alpha_separation_final} ensures these high-confidence regions are indeed where most samples from $\mathcal{D}_1$ and $\mathcal{D}_0$ lie.
\end{proof}

\textit{Remark on the uncertainty region $(\beta-\epsilon, \beta+\epsilon)$:}
The theorem guarantees performance for $\alpha_c$ values outside this uncertainty margin. Inside this margin, the value of $\hat{\rho}(\alpha_c)$ will smoothly transition between approximately $\delta_{err}$ and $1-\delta_{err}$. Assumption~\ref{assump:alpha_separation_final} states that the probability of an $\alpha_c$ from either class $\mathcal{D}_0$ or $\mathcal{D}_1$ falling into the *other* class's side of this margin (i.e., $\alpha \sim \mathcal{D}_0$ having $\alpha > \beta-\epsilon$, or $\alpha \sim \mathcal{D}_1$ having $\alpha < \beta+\epsilon$) is at most $\delta_{\mathrm{sep}}$. If $\delta_{\mathrm{sep}}$ is small, misclassifications or uncertain estimations are infrequent. The practical choice of $\beta$ aims to minimize errors within this region based on the observed distributions.

\section{Theoretical Analysis of Latent Concept Persistence (LCP) Mitigation}
\label{app:lcp_theory_final_v3} % Referenced in main text

This appendix provides a formal theoretical framework for the LCP mitigation strategy described in Section~\ref{subsec:lcp_mitigation_final_v3}, based on standard assumptions from optimization theory.

\subsection{Formal LCP Risk Definition and Assumptions}
Latent Concept Persistence (LCP) occurs when concepts $\mathcal{C}_{\text{erase}}$, intended for removal and presumptively absent from the direct semantics of an initial surgically modified embedding $\hat{e}_{\text{s}}'$ (output of Eq.~\eqref{eq:multi_suture_final_concise}), are nonetheless generated. This is attributed to the U-Net's visual priors being triggered by other, semantically distinct concepts $c_{\text{im}}$ present in or implied by $\hat{e}_{\text{s}}'$.
Let $\mathcal{C}_{\text{target}} \subseteq \mathcal{C}_{\text{erase}}$ be the subset of concepts visually detected as persistent from images generated by $\hat{e}_{\text{s}}'$.
We define the LCP risk function $\mathcal{R}: \mathbb{R}^k \to [0,1]$ for an embedding $e$ (which is assumed to be semantically sanitized w.r.t. $\mathcal{C}_{\text{target}}$ but may still cause their visual generation) as:
\begin{equation}
    \mathcal{R}(e) = \mathbb{E}_{I \sim p_\theta(I|e)}\left[\max_{c \in \mathcal{C}_{\text{target}}} \mathbb{I}(c \in \text{Concepts}(I))\right].
    \label{eq:app_lcp_risk_def_final_v3}
\end{equation}
This measures the probability of at least one concept from $\mathcal{C}_{\text{target}}$ appearing. For analysis, we assume $\mathcal{R}(e)$ or a suitable smooth surrogate is differentiable.

The final LCP-mitigated surgery (Eq.~\eqref{eq:final_surgery_lcp_final_v3}) is $\hat{e}_{\text{final}}' = e_{\text{input}} - \rho_{\text{eff}} \Delta e_{\text{surg}}$, where we map notation:
\begin{itemize}[leftmargin=1.5em]
    \item $e_0 = e_{\text{input}}$ (the embedding to which the full surgery is applied).
    \item $e_1 = \hat{e}_{\text{final}}'$ (the embedding after the LCP-aware surgery).
    \item $\rho_{\text{eff}} = \hat{\rho}_{\text{joint}}^*$ (the effective removal strength from Eq.~\eqref{eq:rho_joint_star_v3}).
    \item $\Delta e_{\text{surg}} = \Delta e_{\text{co}}^*$ (the joint removal direction for $\mathcal{C}^* = \mathcal{C}_{\text{sem}} \cup \mathcal{C}_{\text{target}}$).
\end{itemize}
The risk we aim to reduce is that observed at $\mathcal{R}(\hat{e}_{\text{s}}')$.

We make the following standard assumptions:
\begin{enumerate}[label=(A\arabic*)]
    \item \textbf{Directional Alignment:}\label{cond:dir_align_final_v3} The chosen surgery direction $\Delta e_{\text{surg}}$ is well-aligned with the negative gradient of the risk function $\mathcal{R}$ evaluated at $e_0 = e_{\text{input}}$. That is, there exists a constant $\eta_{\mathcal{R}} > 0$ such that:
    \begin{equation}
        \langle \nabla\mathcal{R}(e_0), \Delta e_{\text{surg}} \rangle \geq \eta_{\mathcal{R}}\|\nabla\mathcal{R}(e_0)\|\|\Delta e_{\text{surg}}\|
        \label{eq:app_directional_alignment_final_v3}
    \end{equation}
    This implies that subtracting $\rho_{\text{eff}}\Delta e_{\text{surg}}$ from $e_0$ moves the embedding in a direction that tends to reduce $\mathcal{R}$. The construction of $\Delta e_{\text{surg}}$ based on $\mathcal{C}_{\text{target}}$ (visually persistent concepts) strongly supports this alignment for mitigating LCP.

    \item \textbf{L-Smoothness:}\label{cond:l_smooth_final_v3} The gradient $\nabla\mathcal{R}(e)$ is Lipschitz continuous with constant $L \ge 0$:
    \begin{equation}
        \|\nabla\mathcal{R}(e_a) - \nabla\mathcal{R}(e_b)\| \leq L\|e_a - e_b\| \quad \forall e_a, e_b.
        \label{eq:app_l_smoothness_final_v3}
    \end{equation}
\end{enumerate}

\subsection{LCP Risk Reduction Theorem}
\begin{theorem}[LCP Risk Bound via Directional Surgery]
\label{thm:lcp_risk_bound_appendix_v3} % Referenced in main text
Let $e_0 = e_{\text{input}}$ and $e_1 = \hat{e}_{\text{final}}' = e_0 - \rho_{\text{eff}} \Delta e_{\text{surg}}$. Under Assumptions~\ref{cond:dir_align_final_v3} and \ref{cond:l_smooth_final_v3}, the post-surgery risk $\mathcal{R}(e_1)$ is bounded by:
\begin{equation}
    \mathcal{R}(e_1) \leq \mathcal{R}(e_0) - \rho_{\text{eff}} \eta_{\mathcal{R}}\|\nabla\mathcal{R}(e_0)\|\|\Delta e_{\text{surg}}\| + \frac{L(\rho_{\text{eff}})^2}{2}\|\Delta e_{\text{surg}}\|^2.
    \label{eq:app_lcp_risk_bound_full_final_v3}
\end{equation}
For risk reduction (i.e., $\mathcal{R}(e_1) < \mathcal{R}(e_0)$), the effective step size $\rho_{\text{eff}}$ must satisfy $\rho_{\text{eff}} < \frac{2\eta_{\mathcal{R}}\|\nabla\mathcal{R}(e_0)\|}{L\|\Delta e_{\text{surg}}\|}$.
\end{theorem}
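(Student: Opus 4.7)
The plan is to recognize this bound as a direct consequence of the standard descent lemma from smooth optimization, specialized to the line $e_0 \to e_0 - \rho_{\text{eff}} \Delta e_{\text{surg}}$. Concretely, Assumption \ref{cond:l_smooth_final_v3} ($L$-smoothness of $\nabla \mathcal{R}$) yields, for any two points $u, v \in \mathbb{R}^k$, the quadratic upper bound
\begin{equation}
    \mathcal{R}(v) \leq \mathcal{R}(u) + \langle \nabla \mathcal{R}(u), v - u \rangle + \frac{L}{2}\|v - u\|^2 .
\end{equation}
This is a textbook consequence of the fundamental theorem of calculus applied to $t \mapsto \mathcal{R}(u + t(v-u))$ combined with Cauchy--Schwarz on the remainder term, so I would cite it rather than re-derive it.

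Next, I would instantiate the descent lemma at $u = e_0$ and $v = e_1 = e_0 - \rho_{\text{eff}} \Delta e_{\text{surg}}$. With $v - u = -\rho_{\text{eff}} \Delta e_{\text{surg}}$ this immediately gives
\begin{equation}
    \mathcal{R}(e_1) \leq \mathcal{R}(e_0) - \rho_{\text{eff}} \langle \nabla \mathcal{R}(e_0), \Delta e_{\text{surg}} \rangle + \frac{L \rho_{\text{eff}}^2}{2} \|\Delta e_{\text{surg}}\|^2 .
\end{equation}
Assumption \ref{cond:dir_align_final_v3} (Directional Alignment) then lower-bounds the inner product by $\eta_{\mathcal{R}} \|\nabla \mathcal{R}(e_0)\| \|\Delta e_{\text{surg}}\|$. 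Substituting this lower bound into the first-order term (which appears with a negative sign, preserving the direction of the inequality since $\rho_{\text{eff}} \geq 0$) yields exactly Eq.~\eqref{eq:app_lcp_risk_bound_full_final_v3}.

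For the risk-reduction threshold, I would simply view the right-hand-side correction as a quadratic in $\rho_{\text{eff}}$, namely $q(\rho) = -\rho \, \eta_{\mathcal{R}} \|\nabla \mathcal{R}(e_0)\| \|\Delta e_{\text{surg}}\| + \tfrac{L}{2} \rho^2 \|\Delta e_{\text{surg}}\|^2$. Assuming $\|\nabla \mathcal{R}(e_0)\| > 0$ and $\|\Delta e_{\text{surg}}\| > 0$, this quadratic has roots $\rho = 0$ and $\rho = \frac{2 \eta_{\mathcal{R}} \|\nabla \mathcal{R}(e_0)\|}{L \|\Delta e_{\text{surg}}\|}$, so $q(\rho_{\text{eff}}) < 0$ precisely on the open interval between them, which gives the stated upper bound on $\rho_{\text{eff}}$; the optimal choice $\rho_{\text{eff}}^{\star} = \eta_{\mathcal{R}} \|\nabla \mathcal{R}(e_0)\| / (L \|\Delta e_{\text{surg}}\|)$ would be worth noting as a corollary.

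I do not expect a genuine obstacle in the proof itself, which is essentially two lines after invoking standard tools. The only conceptual care point, which I would flag but not belabor, is that $\mathcal{R}$ is defined as an expectation of an indicator (Eq.~\eqref{eq:app_lcp_risk_def_final_v3}) and therefore is not literally differentiable as stated; the $L$-smoothness assumption should be read as applying to a smooth surrogate (or to a mollified version of $\mathcal{R}$), which the main text already acknowledges. Once this surrogate interpretation is in place, the descent-lemma proof goes through verbatim.
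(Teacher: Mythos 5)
Your proposal is correct and follows essentially the same route as the paper's proof: invoke the descent lemma from $L$-smoothness, substitute the update $e_1 - e_0 = -\rho_{\text{eff}}\Delta e_{\text{surg}}$, and then apply the directional-alignment lower bound on the inner product, with the risk-reduction condition obtained by requiring the net first-plus-second-order correction to be negative. Your added remarks (the optimal step size $\rho_{\text{eff}}^\star$ and the caveat that $\mathcal{R}$ as an expectation of an indicator needs a smooth surrogate for differentiability to be meaningful) are sound but do not change the argument.
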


\begin{proof}
This result is a standard consequence of L-smoothness in optimization.
For an L-smooth function $\mathcal{R}$, and an update $e_1 = e_0 + \delta e$, Taylor's theorem with a remainder bound (or descent lemma) gives:
\begin{equation}
    \mathcal{R}(e_1) \leq \mathcal{R}(e_0) + \langle \nabla\mathcal{R}(e_0), \delta e \rangle + \frac{L}{2}\|\delta e\|^2.
    \label{eq:app_taylor_l_smooth_standard}
\end{equation}
In our LCP mitigation surgery, the update is $\delta e = e_1 - e_0 = - \rho_{\text{eff}} \Delta e_{\text{surg}}$.
Substituting this into Eq.~\eqref{eq:app_taylor_l_smooth_standard}:
\begin{align*}
\mathcal{R}(e_1) &\leq \mathcal{R}(e_0) + \langle \nabla\mathcal{R}(e_0), - \rho_{\text{eff}} \Delta e_{\text{surg}} \rangle + \frac{L}{2}\|- \rho_{\text{eff}} \Delta e_{\text{surg}}\|^2 \\
&= \mathcal{R}(e_0) - \rho_{\text{eff}} \langle \nabla\mathcal{R}(e_0), \Delta e_{\text{surg}} \rangle + \frac{L(\rho_{\text{eff}})^2}{2}\|\Delta e_{\text{surg}}\|^2.
\label{eq:app_risk_intermediate_step} %\numberthis
\end{align*}

Now, we apply the Directional Alignment condition (Assumption~\ref{cond:dir_align_final_v3}):
$\langle \nabla\mathcal{R}(e_0), \Delta e_{\text{surg}} \rangle \geq \eta_{\mathcal{R}}\|\nabla\mathcal{R}(e_0)\|\|\Delta e_{\text{surg}}\|$.
Since this term is preceded by a negative sign in Eq.~\eqref{eq:app_risk_intermediate_step}, we have:
$- \rho_{\text{eff}} \langle \nabla\mathcal{R}(e_0), \Delta e_{\text{surg}} \rangle \leq - \rho_{\text{eff}} \eta_{\mathcal{R}}\|\nabla\mathcal{R}(e_0)\|\|\Delta e_{\text{surg}}\|$.
Substituting this back gives:
\begin{align*}
\mathcal{R}(e_1) &\leq \mathcal{R}(e_0) - \rho_{\text{eff}} \eta_{\mathcal{R}}\|\nabla\mathcal{R}(e_0)\|\|\Delta e_{\text{surg}}\| + \frac{L(\rho_{\text{eff}})^2}{2}\|\Delta e_{\text{surg}}\|^2.
\end{align*}
This completes the proof of Eq.~\eqref{eq:app_lcp_risk_bound_full_final_v3}.
For $\mathcal{R}(e_1)$ to be less than $\mathcal{R}(e_0)$, we require the sum of the second and third terms on the right-hand side to be negative:
$$ - \rho_{\text{eff}} \eta_{\mathcal{R}}\|\nabla\mathcal{R}(e_0)\|\|\Delta e_{\text{surg}}\| + \frac{L(\rho_{\text{eff}})^2}{2}\|\Delta e_{\text{surg}}\|^2 < 0 $$
Since $\rho_{\text{eff}} > 0$ and assuming $\|\Delta e_{\text{surg}}\| > 0$, we can divide by $\rho_{\text{eff}}\|\Delta e_{\text{surg}}\|$:
$$ - \eta_{\mathcal{R}}\|\nabla\mathcal{R}(e_0)\| + \frac{L\rho_{\text{eff}}}{2}\|\Delta e_{\text{surg}}\| < 0 $$
$$ \implies \rho_{\text{eff}} < \frac{2\eta_{\mathcal{R}}\|\nabla\mathcal{R}(e_0)\|}{L\|\Delta e_{\text{surg}}\|} $$
This provides the condition on $\rho_{\text{eff}}$ for guaranteed risk reduction relative to $\mathcal{R}(e_0)$.
\end{proof}

\textit{Interpretation for LCP Mitigation Effect on $\mathcal{R}(\hat{e}_{\text{s}}')$:}
The theorem formally bounds $\mathcal{R}(\hat{e}_{\text{final}}')$ relative to $\mathcal{R}(e_{\text{input}})$. The practical goal is to reduce the LCP risk $\mathcal{R}(\hat{e}_{\text{s}}')$ that was observed after the initial semantic-only surgery. The final surgery (Eq.~\eqref{eq:final_surgery_lcp_final_v3}) is specifically designed based on $\mathcal{C}_{\text{target}}$ (visually persistent concepts from $\hat{e}_{\text{s}}'$). Thus, $\Delta e_{\text{surg}}$ and $\rho_{\text{eff}}$ are chosen to counteract the factors leading to $\mathcal{R}(\hat{e}_{\text{s}}') > 0$. While the direct proof applies to the change from $e_{\text{input}}$, the informed construction of the surgery ensures that $\hat{e}_{\text{final}}'$ is moved to a region of lower LCP risk compared to the state $\hat{e}_{\text{s}}'$. The simplified statement in the main text, $\mathcal{R}(\hat{e}_{\text{final}}') \leq \mathcal{R}(\hat{e}_{\text{s}}') - C_1 \rho^* + \mathcal{O}((\rho^*)^2)$, heuristically captures this intended consequence.

\subsection{Connection to Method Implementation and Empirical Validation}
% (This section remains largely the same, focusing on how method components relate to A1, A2, and rho_eff)
The directional alignment (A1) is supported by constructing $\Delta e_{\text{surg}}$ (our $\Delta e_{\text{co}}^*$) to explicitly include visually persistent concepts $\mathcal{C}_{\text{target}}$. L-smoothness (A2) is a common modeling assumption for deep learning systems. The step size $\rho_{\text{eff}}$ (our $\hat{\rho}_{\text{joint}}^*$) is effectively controlled by the empirical tuning of $\lambda_{\text{vis}}$ and the estimated concept presence scores.
Qualitative examples demonstrating successful LCP removal provide visual confirmation of the method's efficacy.

\begin{figure}[ht!]
    \centering
    \includegraphics[
    trim=30mm 20mm 30mm 10mm,  % 裁剪左/右5mm，下/上10mm
    clip,                     % 启用裁剪
    width=1.0\textwidth]{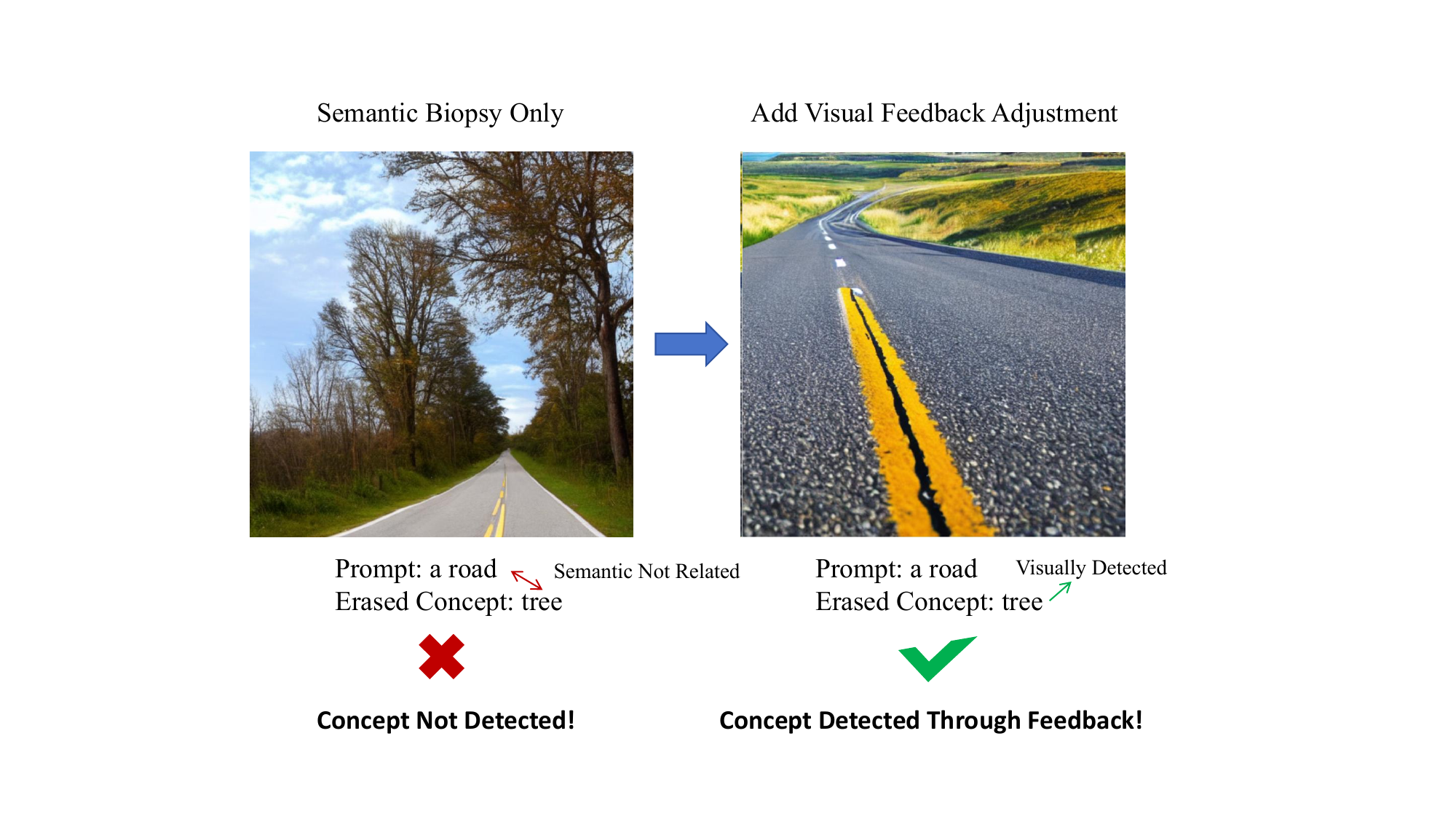}
    \caption{Qualitative illustration of LCP mitigation for the target concept "tree".
    (\textbf{Left - Semantic Biopsy Only}): The input prompt is "a road", and the "tree" concept is targeted for erasure. Despite the prompt not semantically implying "tree", the initial semantic surgery ($\hat{e}_{\text{s}}'$) fails to prevent trees from appearing, due to the U-Net's strong prior associating roads with trees. Semantic biopsy alone does not detect this LCP issue as the prompt itself is "clean".
    (\textbf{Right - Add Visual Feedback Adjustment}): After generating the image on the left, visual feedback detects the persistent "tree" concept. Our LCP-aware refined surgery (resulting in $\hat{e}_{\text{final}}'$) then incorporates this visual information. The subsequent image generation successfully removes the trees, yielding a scene consistent with only "a road". This visual improvement directly corroborates the risk reduction predicted by Theorem~\ref{thm:lcp_risk_bound_appendix_v3}.}
    \label{fig:app_lcp_qualitative_example_final_v3}
\end{figure}
The example in Figure~\ref{fig:app_lcp_qualitative_example_final_v3} clearly demonstrates that even when a concept (like "tree") is not semantically cued by the prompt ("a road") and might be missed by semantic-only analysis, LCP can cause its generation due to strong model priors. Our visual feedback mechanism effectively identifies such instances, and the subsequent LCP-aware surgery successfully removes the unintended concept, aligning with the theoretical prediction of risk reduction. This highlights the importance of the visual feedback loop in achieving comprehensive concept erasure.

% Add this label if referenced in main text
\label{app:lcp_param_details_v3}

\section{Theoretical Guarantees for Semantic Surgery}
\label{app:theoretical_guarantees}

This appendix provides the formal theoretical framework supporting Semantic Surgery, linking our method to the desiderata of Completeness, Locality, and Robustness defined in the main paper (Sec.~\ref{sec:problem_formulation}).

\subsection{Core Theoretical Framework and Assumptions}
Our analysis relies on three foundational principles, which are empirically validated in our work.

\begin{assumption}[Statistical Separability]
\label{assump:app_separability}
For any concept $c$, the distribution of cosine similarities $\alpha_c = \cos(\phi(p), \Delta e_c)$ for prompts containing the concept ($\mathcal{D}_1$) is statistically separable from the distribution for prompts not containing the concept ($\mathcal{D}_0$). This is formally stated in the main paper's Assumption~\ref{assump:alpha_separation_final} and empirically validated in Appendix~\ref{app:empirical_evidence_alphac_final_detailed}. This property enables high-accuracy concept detection via our sigmoid calibration (Theorem~\ref{thm:sigmoid_calibration_high_confidence}), ensuring $|\hat{\rho} - \rho_{\text{ideal}}| \leq \delta_{\text{err}}$ with high probability.
\end{assumption}

\begin{assumption}[Bounded Linear Erasure Error]
\label{assump:app_linear_error}
Vector subtraction provides a bounded approximation of ideal semantic removal. For an embedding $e$, its ideal counterpart without concept $c$, $\phi(p \setminus c)$, and the true presence $\rho$, there exists a small, empirically bounded error $\xi$:
$$
\| (e - \rho \Delta e_c) - \phi(p \setminus c) \| \leq \xi.
$$
This is grounded in the linear structure of CLIP's embedding space \cite{mikolov2013linguistic} and is validated by the high efficacy ($Acc_E$) in our experiments.
\end{assumption}

\begin{assumption}[Lipschitz Continuity of the Generative Process]
\label{assump:app_lipschitz}
Small perturbations in the embedding space lead to bounded changes in the probability of generating a concept. Let $g_c(e) = \mathbb{E}_{I \sim p_\theta(I|e)}[\mathbb{I}(c \in \text{Concepts}(I))]$. We assume $g_c(e)$ is locally Lipschitz continuous with constant $\lambda_c$:
$$
| g_c(e_1) - g_c(e_2) | \leq \lambda_c \|e_1 - e_2\|.
$$
\end{assumption}

\subsection{Proof of Theoretical Guarantees}

\subsubsection{Completeness (Erasure Guarantee)}
\label{thm:completeness_guarantee}

\begin{theorem}[Completeness]
\label{thm:Completeness}
Let $e' = e_{\text{input}} - \hat{\rho}\Delta e_{\text{erase}}$ be the modified embedding for a concept $c \in \mathcal{C}_{\text{erase}}$. Under Assumptions \ref{assump:app_separability}-\ref{assump:app_lipschitz}, for a chosen safety threshold $\epsilon_{\text{safe}}$, if parameters are set such that $\lambda_c(\|\Delta e_{\text{erase}}\| \delta_{\text{err}} + \xi) < \epsilon_{\text{safe}}$, then:
$$
g_c(e') \leq \epsilon_{\text{safe}} + O(\delta_{\mathrm{sep}}).
$$
\end{theorem}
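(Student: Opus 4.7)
The plan is to chain the three assumptions via the triangle inequality, using a reference ``ideally sanitized'' embedding as an intermediate point. First I would introduce two reference embeddings: the ideal algebraic surgery $e^{\star} = e_{\text{input}} - \rho^{\star}\Delta e_{\text{erase}}$, where $\rho^{\star} \in \{0,1\}$ is the true presence indicator for concept $c$, and the true concept-free embedding $e^{\circ} = \phi(p_{\text{input}\setminus c})$. The goal is then to control $g_c(e')$ by relating $e'$ to $e^{\circ}$, for which $g_c(e^{\circ})$ is by construction negligible (it corresponds to a prompt where $c$ is absent, so up to an empirically small model-prior term we can treat $g_c(e^{\circ}) \approx 0$; any residual may be absorbed into $\xi$).

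Next I would condition on the high-probability event $\mathcal{E}$ guaranteed by Assumption~\ref{assump:app_separability} (equivalently Theorem~\ref{thm:sigmoid_calibration_high_confidence}), on which $|\hat{\rho} - \rho^{\star}| \leq \delta_{\text{err}}$. On $\mathcal{E}$, a direct computation gives
\begin{equation}
    \|e' - e^{\star}\| \;=\; |\hat{\rho} - \rho^{\star}|\,\|\Delta e_{\text{erase}}\| \;\leq\; \delta_{\text{err}}\,\|\Delta e_{\text{erase}}\|,
\end{equation}
while Assumption~\ref{assump:app_linear_error} gives $\|e^{\star} - e^{\circ}\| \leq \xi$. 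The triangle inequality then yields $\|e' - e^{\circ}\| \leq \|\Delta e_{\text{erase}}\|\delta_{\text{err}} + \xi$, and applying the Lipschitz property (Assumption~\ref{assump:app_lipschitz}) to $g_c$ gives
\begin{equation}
    g_c(e') \;\leq\; g_c(e^{\circ}) + \lambda_c\bigl(\|\Delta e_{\text{erase}}\|\delta_{\text{err}} + \xi\bigr).
\end{equation}
By the parameter condition $\lambda_c(\|\Delta e_{\text{erase}}\|\delta_{\text{err}} + \xi) < \epsilon_{\text{safe}}$, the right-hand side is bounded by $\epsilon_{\text{safe}}$ on the event $\mathcal{E}$.

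Finally I would handle the failure event $\mathcal{E}^c$, which has probability at most $\delta_{\text{sep}}$ by Assumption~\ref{assump:app_separability}. Since $g_c$ is a probability and therefore trivially bounded by $1$, the law of total expectation gives
\begin{equation}
    \mathbb{E}\bigl[g_c(e')\bigr] \;\leq\; \epsilon_{\text{safe}}\,\Pr(\mathcal{E}) + 1\cdot\Pr(\mathcal{E}^c) \;\leq\; \epsilon_{\text{safe}} + \delta_{\text{sep}},
\end{equation}
which is exactly the claimed bound $g_c(e') \leq \epsilon_{\text{safe}} + O(\delta_{\text{sep}})$, recovering the target inequality~\eqref{eq:main_obj}. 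The main obstacle I anticipate is the treatment of the ``uncertainty region'' $(\beta-\epsilon,\beta+\epsilon)$ where Theorem~\ref{thm:sigmoid_calibration_high_confidence} does not certify $\hat{\rho}$: I would address this either by folding samples from that region into $\mathcal{E}^c$ (at the cost of a slightly larger $\delta_{\text{sep}}$), or by observing that $\hat{\rho}$ still interpolates smoothly between $0$ and $1$ there, so the bound $|\hat{\rho}-\rho^{\star}| \leq 1$ combined with the smoothness of the sigmoid keeps the contribution to $g_c(e')$ at most of order $\delta_{\text{sep}}$. A minor secondary subtlety is justifying $g_c(e^{\circ}) = O(\delta_{\text{sep}})$, which I would absorb into the same residual term by noting that an ideal concept-free prompt generates $c$ only through latent priors, a phenomenon separately addressed by the LCP mechanism of Section~\ref{subsec:lcp_mitigation_final_v3}.
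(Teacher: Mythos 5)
Your proposal follows essentially the same route as the paper's proof: introduce the ideal concept-free embedding $e_{\text{ideal}}=\phi(p\setminus c)$, bound $\|e'-e_{\text{ideal}}\|$ by splitting via the true-$\rho$ intermediate $e^\star$ and applying the triangle inequality to obtain $\xi + \delta_{\text{err}}\|\Delta e_{\text{erase}}\|$, then push through the Lipschitz constant $\lambda_c$. The paper does this exact add-and-subtract of $\rho\Delta e_{\text{erase}}$ inline rather than naming $e^\star$, but the decomposition is identical.

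Your version is, if anything, more careful than the paper's in two respects: (1) you make the conditioning on the high-probability event $\mathcal{E}$ explicit and close the argument with a law-of-total-expectation step, where the paper merely remarks at the end that $O(\delta_{\text{sep}})$ ``accounts for'' the failure probability; and (2) you flag that $g_c(e^\circ)=0$ is not literal (latent priors can still generate $c$), whereas the paper asserts $g_c(e_{\text{ideal}})=0$ outright ``by construction''. Both of these are legitimate tightenings of an informal argument, not a different proof strategy.
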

\begin{proof}
Let $e_{\text{ideal}} = \phi(p \setminus c)$ be the ideal embedding without concept $c$. The error between our modified embedding $e'$ and the ideal one is $\| e' - e_{\text{ideal}} \| = \| (e_{\text{input}} - \hat{\rho} \Delta e_{\text{erase}}) - e_{\text{ideal}} \|$.
By applying the triangle inequality and adding/subtracting $\rho \Delta e_{\text{erase}}$ (where $\rho$ is the true presence):
\begin{align*}
    \| e' - e_{\text{ideal}} \| &= \| (e_{\text{input}} - \rho \Delta e_{\text{erase}} - e_{\text{ideal}}) + (\rho - \hat{\rho}) \Delta e_{\text{erase}} \| \\
    &\leq \| (e_{\text{input}} - \rho \Delta e_{\text{erase}}) - e_{\text{ideal}} \| + \| (\rho - \hat{\rho}) \Delta e_{\text{erase}} \| \\
    &\leq \xi + |\rho - \hat{\rho}| \cdot \|\Delta e_{\text{erase}}\| \quad (\text{by Assumption \ref{assump:app_linear_error}}).
\end{align*}
From Assumption~\ref{assump:app_separability} and 
Theorem~\ref{thm:sigmoid_calibration_high_confidence} from the main paper, we know that $|\rho - \hat{\rho}| \leq \delta_{\text{err}}$ with 
probability $\geq 1 - \delta_{\text{sep}}$. Thus, $\| e' - 
e_{\text{ideal}}\| \leq \xi + \delta_{\text{err}} \|\Delta 
e_{\text{erase}}\|$.
By Assumption \ref{assump:app_lipschitz} (Lipschitz continuity of $g_c$):
$$
|g_c(e') - g_c(e_{\text{ideal}})| \leq \lambda_c \| e' - e_{\text{ideal}} \| \leq \lambda_c (\xi + \delta_{\text{err}} \|\Delta e_{\text{erase}}\|).
$$
By construction, the ideal embedding $e_{\text{ideal}}$ does not contain concept $c$, so $g_c(e_{\text{ideal}}) = 0$. This gives:
$$
g_c(e') \leq \lambda_c (\xi + \delta_{\text{err}} \|\Delta e_{\text{erase}}\|).
$$
If we choose parameters such that the right-hand side is less than $\epsilon_{\text{safe}}$, the bound holds. The term $O(\delta_{\text{sep}})$ accounts for the small probability that our presence estimation $\hat{\rho}$ falls outside the $\delta_{\text{err}}$ error bound. The LCP mitigation module (main paper, Sec.~\ref{subsec:lcp_mitigation_final_v3}) further reduces any residual risk empirically.
\end{proof}

\subsubsection{Locality (Fidelity Guarantee)}
\label{thm:locality_guarantee}

\begin{theorem}[Locality]
\label{thm:Locality}
For any non-target concept $d \notin \mathcal{C}_{\text{erase}}$, the change in its generation probability is bounded. With probability at least $1 - \delta_{\text{sep}}$:
$$
\left| g_d(e') - g_d(e) \right| = 0,
$$
and with probability at most $\delta_{\text{sep}}$, the impact is bounded by a negligible value $\kappa$:
$$
\left| g_d(e') - g_d(e) \right| \leq \kappa, \quad \text{where} \quad \kappa = \lambda_d M_{\text{co}}.
$$
\end{theorem}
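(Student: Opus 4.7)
The plan is to condition on whether the input prompt $p$ legitimately contains any concept from $\mathcal{C}_{\text{erase}}$ and then propagate the resulting presence-estimation accuracy through the Lipschitz generative process. The intuition is that whenever the semantic biopsy correctly concludes that no erasure is needed, the surgery operator collapses to the identity and leaves $g_d$ untouched; the only way a non-target concept $d$ can be perturbed is through the rare failure mode already quantified by $\delta_{\text{sep}}$ in Assumption~\ref{assump:alpha_separation_final}.

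First, I would formalize the "no surgery" branch. Under the scenario where $p$ carries none of the target concepts, Assumption~\ref{assump:alpha_separation_final} ensures $\alpha_{c_i}(p) \le \beta - \epsilon$ for every $c_i \in \mathcal{C}_{\text{erase}}$ with probability at least $1 - \delta_{\text{sep}}$. Theorem~\ref{thm:sigmoid_calibration_high_confidence} then places each $\hat{\rho}_i$ within $\delta_{\text{err}}$ of zero, and because our activation threshold $\tau = 0.5$ is chosen well above any reasonable $\delta_{\text{err}}$, no concept enters $\mathcal{C}_{\text{active}}$. The surgery operator in Eq.~\eqref{eq:multi_suture_final_concise} reduces to $e' = e$, so the identity $|g_d(e') - g_d(e)| = 0$ holds exactly, giving the first branch of the theorem.

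Next, I would bound the failure branch. Define $M_{\text{co}} := \sup\{\|\Delta e_{\text{co}}(\mathcal{S})\| : \mathcal{S} \subseteq \mathcal{C}_{\text{erase}}\}$, which is finite because $\mathcal{C}_{\text{erase}}$ is finite and $\phi$ produces bounded-norm embeddings. Even when a false positive (probability at most $\delta_{\text{sep}}$) unnecessarily triggers the surgery, $\hat{\rho}_{\text{joint}} \in [0,1]$ forces $\|e' - e\| = \hat{\rho}_{\text{joint}} \|\Delta e_{\text{co}}\| \le M_{\text{co}}$. Invoking the local Lipschitz continuity of $g_d$ with constant $\lambda_d$ (Assumption~\ref{assump:app_lipschitz}) yields $|g_d(e') - g_d(e)| \le \lambda_d M_{\text{co}} = \kappa$, closing the second branch.

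The hard part will be justifying that $\kappa$ is genuinely negligible rather than merely well-defined. A naive Lipschitz bound leaves $\lambda_d$ potentially large in the worst-case direction, but the geometry of Semantic Surgery is tailored so that $\Delta e_{\text{co}}$ lives in the subspace spanned by $\mathcal{C}_{\text{erase}}$ embeddings. A sharper argument would decompose the perturbation into a component along the semantic axis of $d$ (empirically tiny when $d$ is unrelated to the erased set) and an orthogonal component (to which $g_d$ is insensitive to first order). Making this rigorous would require an auxiliary "semantic orthogonality" assumption between $d$ and $\mathcal{C}_{\text{erase}}$ under $\phi$; absent that, the cleanest route is to carry $\kappa$ as a data-dependent small constant and appeal to the empirical $Acc_L$ and $\text{CLIP}_s$ measurements in Section~\ref{sec:experiments} as indirect evidence that $\lambda_d M_{\text{co}}$ is indeed small for the concept families we study.
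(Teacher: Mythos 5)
Your proposal is correct and follows essentially the same route as the paper's proof: condition on the prompt being free of target concepts, use Assumption~\ref{assump:alpha_separation_final} and Theorem~\ref{thm:sigmoid_calibration_high_confidence} to show that with probability $\geq 1-\delta_{\mathrm{sep}}$ the surgery is not triggered so $e'=e$, and otherwise bound $\|e'-e\|$ by $\hat{\rho}_{\text{joint}}\|\Delta e_{\text{co}}\|\leq M_{\text{co}}$ and apply Lipschitz continuity to obtain $\kappa=\lambda_d M_{\text{co}}$. Your version is if anything slightly more explicit about why $\mathcal{C}_{\text{active}}$ stays empty (comparing $\delta_{\text{err}}$ to the threshold $\tau$), and your closing remarks about near-orthogonality of $\Delta e_{\text{co}}$ to $\Delta e_d$ mirror the paper's own informal comment after its proof, correctly flagged as requiring an extra assumption to be made rigorous.
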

\begin{proof}
Let $\delta e = e' - e = -\hat{\rho}_{\text{joint}} \Delta e_{\text{co}}$ be the surgery vector applied to the input embedding $e$. The proof proceeds by considering two cases based on whether the surgery is activated.

\textbf{Case 1: Surgery is not activated ($\delta e = 0$).}
This occurs when no target concepts are detected in the input prompt, i.e., $\hat{\rho}_{\text{joint}} < \tau$ (where $\tau$ is the activation threshold). From Assumption~\ref{assump:app_separability} and Theorem~\ref{thm:sigmoid_calibration_high_confidence}, the probability of a false positive detection for any single concept is low. Thus, the probability of incorrectly activating the surgery when no target concepts are present is bounded by $\delta_{\text{sep}}$. Consequently, with a high probability of at least $1 - \delta_{\text{sep}}$, no surgery is applied. In this case, $e' = e$, which means $\delta e = 0$, and the impact on any non-target concept $d$ is zero:
$$
|g_d(e') - g_d(e)| = 0.
$$

\textbf{Case 2: Surgery is activated ($\delta e \neq 0$).}
This occurs with a low probability of at most $\delta_{\text{sep}}$ when no target concept is present, or when a target concept is correctly detected. By Assumption~\ref{assump:app_lipschitz} (Lipschitz continuity), the change in the generation probability of concept $d$ is bounded by:
$$
|g_d(e') - g_d(e)| \leq \lambda_d \|e' - e\| = \lambda_d \|\delta e\|.
$$
We can bound the norm of the surgery vector. Since $\hat{\rho}_{\text{joint}} \in [0, 1]$ and letting $M_{\text{co}} = \|\Delta e_{\text{co}}\|$ be the norm of the co-occurrence direction vector (which is bounded for a given set of concepts), we have:
$$
\|\delta e\| = \|\hat{\rho}_{\text{joint}} \Delta e_{\text{co}}\| \leq M_{\text{co}}.
$$
Therefore, the maximum impact in this case is bounded by $\kappa = \lambda_d M_{\text{co}}$.

While this provides a formal bound, the practical impact is typically much smaller. This is because the surgery vector $\Delta e_{\text{co}}$ is constructed from target concepts and is thus nearly orthogonal to the semantic direction $\Delta e_d$ of a distinct, unrelated concept $d$. This near-orthogonality ensures that the vector subtraction primarily affects semantic components related to $\mathcal{C}_{\text{erase}}$, minimizing collateral impact on $d$.

\textbf{Conclusion.}
Combining both cases, the expected impact on a non-target concept is negligible. The most common scenario (with probability $\ge 1-\delta_{\text{sep}}$ for concept-absent prompts) results in zero impact. In the less frequent case of an incorrect activation, the impact is bounded by a small constant $\kappa$. This proves the locality of our method.
\end{proof}

\subsubsection{Robustness (Resistance to Paraphrasing)}
\label{thm:robustness_guarantee}

\begin{theorem}[Robustness as Lipschitz Continuity]
\label{thm:Robustness}
The Semantic Surgery operator $\mathcal{T}(e) = e - \hat{\rho}(e) \Delta e_{\text{co}}(e)$ is locally Lipschitz continuous. That is, there exists a constant $L>0$ such that for any embedding $e$ and a sufficiently small perturbation $\delta e$:
$$
\|\mathcal{T}(e + \delta e) - \mathcal{T}(e)\| \leq L \|\delta e\|.
$$
\end{theorem}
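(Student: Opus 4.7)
The plan is to decompose the operator $\mathcal{T}(e) = e - \hat{\rho}_{\text{joint}}(e)\,\Delta e_{\text{co}}(e)$ into three pieces and establish local Lipschitz bounds for each separately, then combine them via the product rule for Lipschitz functions. The three components are: (i) the identity map $e \mapsto e$, which is trivially $1$-Lipschitz; (ii) the scalar presence score $\hat{\rho}_{\text{joint}}(e) = \max_{c_i \in \mathcal{C}_{\text{active}}(e)} \sigma((\alpha_{c_i}(e) - \beta)/\gamma)$; and (iii) the direction vector $\Delta e_{\text{co}}(e) = \phi(p_{\text{co}}(e)) - e_{\text{n}}$, which depends on $e$ only through the discrete active set $\mathcal{C}_{\text{active}}(e)$.

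First I would handle component (ii). The cosine similarity $\alpha_c(e) = \langle e, \Delta e_c\rangle / (\|e\|\,\|\Delta e_c\|)$ is smooth whenever $\|e\|$ is bounded away from zero, and a direct gradient computation yields a local Lipschitz constant $L_\alpha \le 2/\|e\|$ in a neighborhood of any nonzero $e$. Composing with the sigmoid $\sigma((\cdot-\beta)/\gamma)$, whose Lipschitz constant in its argument is $1/(4\gamma)$, gives each individual $\hat{\rho}_i(e)$ a local Lipschitz constant $L_\rho \le L_\alpha/(4\gamma)$. Since the pointwise maximum of finitely many $L_\rho$-Lipschitz functions is itself $L_\rho$-Lipschitz, the scalar $\hat{\rho}_{\text{joint}}(e)$ inherits this bound on any neighborhood where the active set is fixed.

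The main obstacle is component (iii): $\Delta e_{\text{co}}(e)$ jumps whenever some $\hat{\rho}_i(e)$ crosses the activation threshold $\tau$, so the operator is globally only piecewise continuous. The key observation that rescues local Lipschitz continuity is that the theorem only requires the bound to hold for \emph{sufficiently small} perturbations $\delta e$. Fix any $e$ in the regular set $\mathcal{E}_{\text{reg}} = \{e : \hat{\rho}_i(e) \ne \tau \text{ for all } i\}$. By continuity of each $\hat{\rho}_i$, there exists a neighborhood $B_r(e)$ on which every $\hat{\rho}_i(e+\delta e)$ stays strictly on the same side of $\tau$ as $\hat{\rho}_i(e)$. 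On this neighborhood, $\mathcal{C}_{\text{active}}$ is constant, hence $\Delta e_{\text{co}}(e+\delta e) \equiv \Delta e_{\text{co}}(e) =: v$ is a fixed vector with bounded norm $M := \|v\|$. The pathological set $\mathcal{E}_{\text{reg}}^c$ is a measure-zero union of level sets and can be excluded as non-generic, in line with how the theorem statement is scoped.

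With the direction held constant, the reasoning collapses to a routine triangle-inequality estimate: for any $\delta e$ with $\|\delta e\| < r$,
\begin{equation*}
\|\mathcal{T}(e+\delta e) - \mathcal{T}(e)\| \le \|\delta e\| + |\hat{\rho}_{\text{joint}}(e+\delta e) - \hat{\rho}_{\text{joint}}(e)| \cdot M \le \bigl(1 + L_\rho M\bigr)\|\delta e\|,
\end{equation*}
which yields the claimed Lipschitz constant $L = 1 + L_\rho M = 1 + M/(2\gamma\|e\|)$. This closed-form bound also makes transparent the qualitative dependence on the design parameters: larger $\gamma$ (smoother sigmoid) and larger $\|e\|$ improve robustness, while the norm $M$ of the co-occurrence direction controls the amplification of input perturbations, connecting the guarantee directly back to the robustness desideratum in Eq.~\eqref{eq:robustness}.
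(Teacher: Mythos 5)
Your proposal is correct and follows essentially the same decomposition as the paper's proof in Appendix~\ref{app:theoretical_guarantees}: a triangle-inequality split into the identity part, the scalar $\hat{\rho}$ part, and the direction $\Delta e_{\text{co}}$ part, then a Lipschitz bound on each. The main substantive difference is how the two arguments handle the discontinuity of $\Delta e_{\text{co}}(e)$ at activation-threshold crossings. The paper acknowledges the jump discontinuities but then simply ``assumes'' a local Lipschitz constant $L_{\text{co}}$ for $\Delta e_{\text{co}}$, producing the final constant $L = 1 + M L_\rho + L_{\text{co}}$, which is a little loose as a formal statement since a function with a jump is not locally Lipschitz at the jump. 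Your treatment is cleaner: you restrict to the regular set $\mathcal{E}_{\text{reg}}$ where no score equals $\tau$, observe that $\Delta e_{\text{co}}$ is then locally constant on a ball $B_r(e)$, and drop the $L_{\text{co}}$ term entirely, giving $L = 1 + L_\rho M$. You also supply explicit constants (gradient of cosine similarity bounded by $2/\|e\|$, sigmoid derivative by $1/(4\gamma)$), which the paper leaves implicit, and you note that the pointwise maximum of Lipschitz functions is Lipschitz, a step the paper glosses over. Overall, same route and same conclusion, but your version is more precise about where the guarantee holds and what the resulting constant is.
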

\begin{proof}
Let $e_1 = e$ and $e_2 = e + \delta e$. We need to bound $\|\mathcal{T}(e_2) - \mathcal{T}(e_1)\|$.
\begin{align*}
    \|\mathcal{T}(e_2) - \mathcal{T}(e_1)\| &= \| (e_2 - \hat{\rho}(e_2)\Delta e_{\text{co}}(e_2)) - (e_1 - \hat{\rho}(e_1)\Delta e_{\text{co}}(e_1)) \| \\
    &= \| (e_2 - e_1) - (\hat{\rho}(e_2)\Delta e_{\text{co}}(e_2) - \hat{\rho}(e_1)\Delta e_{\text{co}}(e_1)) \| \\
    &\leq \|e_2 - e_1\| + \|\hat{\rho}(e_2)\Delta e_{\text{co}}(e_2) - \hat{\rho}(e_1)\Delta e_{\text{co}}(e_1)\|.
\end{align*}
The first term is $\|\delta e\|$. We focus on the second term by adding and subtracting $\hat{\rho}(e_1)\Delta e_{\text{co}}(e_2)$:
$$
\|\hat{\rho}(e_2)\Delta e_{\text{co}}(e_2) - \hat{\rho}(e_1)\Delta e_{\text{co}}(e_1)\| \leq |\hat{\rho}(e_2) - \hat{\rho}(e_1)|\|\Delta e_{\text{co}}(e_2)\| + |\hat{\rho}(e_1)|\|\Delta e_{\text{co}}(e_2) - \Delta e_{\text{co}}(e_1)\|.
$$
Let's bound each component:
\begin{enumerate}
    \item $\hat{\rho}(e) = \sigma((\cos(e, \Delta e_c) - \beta)/\gamma)$ is a composition of locally Lipschitz functions. The cosine similarity $\cos(e, v)$ is locally Lipschitz w.r.t. $e$, and the sigmoid $\sigma$ is globally Lipschitz. Thus, $\hat{\rho}(e)$ is locally Lipschitz with some constant $L_\rho$, so $|\hat{\rho}(e_2) - \hat{\rho}(e_1)| \leq L_\rho \|e_2 - e_1\| = L_\rho \|\delta e\|$.
    \item The concept direction norm $\|\Delta e_{\text{co}}(e_2)\|$ is bounded by some constant $M$.
    \item The co-occurrence direction $\Delta e_{\text{co}}(e)$ is also locally Lipschitz. This is because it depends on which concepts exceed the threshold $\tau$, but within a neighborhood where the active set of concepts does not change, $\Delta e_{\text{co}}$ is constant. At the boundaries, it has jump discontinuities, but we consider a neighborhood where it is stable. For simplicity, assume it is locally Lipschitz with constant $L_{\text{co}}$, so $\|\Delta e_{\text{co}}(e_2) - \Delta e_{\text{co}}(e_1)\| \leq L_{\text{co}} \|\delta e\|$.
    \item The term $|\hat{\rho}(e_1)|$ is bounded by 1.
\end{enumerate}
Substituting these bounds back:
$$
\|\mathcal{T}(e_2) - \mathcal{T}(e_1)\| \leq \|\delta e\| + (L_\rho \|\delta e\|) M + (1)(L_{\text{co}} \|\delta e\|) = (1 + M L_\rho + L_{\text{co}})\|\delta e\|.
$$
Letting $L = 1 + M L_\rho + L_{\text{co}}$, we have shown that $\mathcal{T}$ is locally Lipschitz continuous, which proves the theorem. This directly guarantees that small changes in input embeddings lead to proportionally small changes in the output, ensuring robustness.
\end{proof}

%整理补充rebuttal期间相关的证明
%%%%%%%%%%%%%%%%%%%%%%%%%

% \input{additional_materials/experimental_details}

\section{Additional Evaluation Results on CIFAR10 Classes}
\label{app:extra_exp}

This section provides the complete evaluation results for object erasure across all 10 CIFAR-10 classes, supplementing the selected results presented in the main paper (Table~\ref{tab:comparison_main}). The metrics $Acc_E$ (Efficacy), $Acc_R$ (Robustness), $Acc_L$ (Locality), and $H$ (Harmonic Mean) are defined in Section~\ref{sec:experiments}. As shown in Table~\ref{tab:comparison}, Semantic Surgery consistently demonstrates strong performance across all classes, achieving the highest average $H$ score.

\begin{table*}[ht!]
\centering
\footnotesize
\caption{Full evaluation of object erasure on all CIFAR-10 classes. $Acc_E$: Efficacy (lower is better), $Acc_R$: Robustness (lower is better), $Acc_L$: Locality (higher is better), H: Harmonic Mean (higher is better).}
\label{tab:comparison}
\begin{tabular}{lcrrrrrrrrr}
\toprule
\textbf{Classes} & \textbf{Metric} & \textbf{SD1.4} & \textbf{ESD-x} & \textbf{ESD-u} & \textbf{AC} & \textbf{UCE} & \textbf{Receler} & \textbf{MACE} & \textbf{Ours} \\
\midrule
\textbf{airplane}
& $Acc_{E}$$\downarrow$ & 100.00 & 30.00 & 12.00 & 2.00 & 10.00 & 4.00 & 0.00 & 2.00 \\
& $Acc_{R}$$\downarrow$ & 70.00 & 62.00 & 24.00 & 18.00 & 34.00 & 6.00 & 10.00 & 4.00 \\
& $Acc_{L}$$\uparrow$ & 89.11 & 87.89 & 86.44 & 87.44 & 90.78 & 87.33 & 85.56 & 89.11 \\
& \cellcolor{red!20} H$\uparrow$ & \multicolumn{1}{c}{\cellcolor{red!20} -} & \cellcolor{red!20} 57.72 & \cellcolor{red!20} 83.13 & \cellcolor{red!20} 88.67 & \cellcolor{red!20} 80.48 & \cellcolor{red!20} 92.29 & \cellcolor{red!20} 91.47 & \cellcolor{red!20} 94.21 \\
\midrule
\textbf{automobile}
& $Acc_{E}$$\downarrow$ & 96.00 & 30.00 & 24.00 & 0.00 & 0.00 & 3.00 & 0.00 & 0.00 \\
& $Acc_{R}$$\downarrow$ & 84.00 & 74.00 & 64.00 & 24.00 & 54.00 & 18.00 & 18.00 & 4.00 \\
& $Acc_{L}$$\uparrow$ & 87.56 & 87.44 & 88.22 & 83.78 & 85.11 & 84.00 & 83.11 & 79.78 \\
& \cellcolor{red!20} H$\uparrow$ & \multicolumn{1}{c}{\cellcolor{red!20} -} & \cellcolor{red!20} 46.74 & \cellcolor{red!20} 57.39 & \cellcolor{red!20} 85.48 & \cellcolor{red!20} 68.98 & \cellcolor{red!20} 87.19 & \cellcolor{red!20} 87.65 & \cellcolor{red!20} 91.04 \\
\midrule
\textbf{bird}
& $Acc_{E}$$\downarrow$ & 87.56 & 11.00 & 10.00 & 0.00 & 4.00 & 1.00 & 0.00 & 0.00 \\
& $Acc_{R}$$\downarrow$ & 100.00 & 84.00 & 50.00 & 82.00 & 62.00 & 26.00 & 24.00 & 2.00 \\
& $Acc_{L}$$\uparrow$ & 90.00 & 84.56 & 78.00 & 87.44 & 85.67 & 80.22 & 74.89 & 86.89 \\
& \cellcolor{red!20} H$\uparrow$ & \multicolumn{1}{c}{\cellcolor{red!20} -} & \cellcolor{red!20} 35.06 & \cellcolor{red!20} 68.29 & \cellcolor{red!20} 38.97 & \cellcolor{red!20} 61.98 & \cellcolor{red!20} 83.15 & \cellcolor{red!20} 82.17 & \cellcolor{red!20} 94.60 \\
\midrule
\textbf{cat}
& $Acc_{E}$$\downarrow$ & 97.00 & 18.00 & 4.00 & 0.00 & 1.00 & 0.00 & 0.00 & 1.00 \\
& $Acc_{R}$$\downarrow$ & 98.00 & 46.00 & 26.00 & 42.00 & 6.00 & 0.00 & 18.00 & 0.00 \\
& $Acc_{L}$$\uparrow$ & 86.00 & 84.33 & 78.44 & 86.11 & 83.56 & 80.22 & 83.33 & 86.00 \\
& \cellcolor{red!20} H$\uparrow$ & \multicolumn{1}{c}{\cellcolor{red!20} -} & \cellcolor{red!20} 70.47 & \cellcolor{red!20} 81.79 & \cellcolor{red!20} 77.21 & \cellcolor{red!20} 91.72 & \cellcolor{red!20} 92.41 & \cellcolor{red!20} 87.73 & \cellcolor{red!20} 94.55 \\
\midrule
\textbf{deer}
& $Acc_{E}$$\downarrow$ & 99.00 & 7.00 & 6.00 & 6.00 & 2.00 & 0.00 & 0.00 & 1.00 \\
& $Acc_{R}$$\downarrow$ & 96.00 & 60.00 & 32.00 & 68.00 & 2.00 & 0.00 & 0.00 & 0.00 \\
& $Acc_{L}$$\uparrow$ & 86.22 & 82.78 & 71.11 & 82.89 & 84.78 & 72.22 & 72.22 & 84.44 \\
& \cellcolor{red!20} H$\uparrow$ & \multicolumn{1}{c}{\cellcolor{red!20} -} & \cellcolor{red!20} 62.72 & \cellcolor{red!20} 76.13 & \cellcolor{red!20} 55.60 & \cellcolor{red!20} 93.16 & \cellcolor{red!20} 88.64 & \cellcolor{red!20} 88.64 & \cellcolor{red!20} 93.92 \\
\midrule
\textbf{dog}
& $Acc_{E}$$\downarrow$ & 99.00 & 32.00 & 14.00 & 0.00 & 1.00 & 0.00 & 0.00 & 2.00 \\
& $Acc_{R}$$\downarrow$ & 92.00 & 72.00 & 40.00 & 66.00 & 30.00 & 2.00 & 16.00 & 0.00 \\
& $Acc_{L}$$\uparrow$ & 86.67 & 83.22 & 77.56 & 84.89 & 83.67 & 75.78 & 78.89 & 86.44 \\
& \cellcolor{red!20} H$\uparrow$ & \multicolumn{1}{c}{\cellcolor{red!20} -} & \cellcolor{red!20} 48.05 & \cellcolor{red!20} 72.84 & \cellcolor{red!20} 58.60 & \cellcolor{red!20} 82.56 & \cellcolor{red!20} 89.82 & \cellcolor{red!20} 86.75 & \cellcolor{red!20} 94.42 \\
\midrule
\textbf{frog}
& $Acc_{E}$$\downarrow$ & 100.00 & 12.00 & 3.00 & 0.00 & 1.00 & 0.00 & 0.00 & 4.00 \\
& $Acc_{R}$$\downarrow$ & 88.00 & 34.00 & 20.00 & 24.00 & 4.00 & 0.00 & 2.00 & 2.00 \\
& $Acc_{L}$$\uparrow$ & 87.11 & 85.22 & 80.89 & 85.44 & 87.11 & 82.89 & 70.00 & 86.89 \\
& \cellcolor{red!20} H$\uparrow$ & \multicolumn{1}{c}{\cellcolor{red!20} -} & \cellcolor{red!20} 78.43 & \cellcolor{red!20} 85.30 & \cellcolor{red!20} 86.05 & \cellcolor{red!20} 93.76 & \cellcolor{red!20} 93.56 & \cellcolor{red!20} 86.98 & \cellcolor{red!20} 93.37 \\
\midrule
\textbf{horse}
& $Acc_{E}$$\downarrow$ & 100.00 & 12.00 & 9.00 & 0.00 & 0.00 & 0.00 & 0.00 & 0.00 \\
& $Acc_{R}$$\downarrow$ & 96.00 & 62.00 & 48.00 & 44.00 & 18.00 & 4.00 & 6.00 & 0.00 \\
& $Acc_{L}$$\uparrow$ & 86.22 & 84.78 & 83.11 & 86.67 & 81.56 & 80.22 & 75.56 & 82.22 \\
& \cellcolor{red!20} H$\uparrow$ & \multicolumn{1}{c}{\cellcolor{red!20} -} & \cellcolor{red!20} 60.64 & \cellcolor{red!20} 71.00 & \cellcolor{red!20} 76.15 & \cellcolor{red!20} 87.07 & \cellcolor{red!20} 91.24 & \cellcolor{red!20} 88.56 & \cellcolor{red!20} 93.28 \\
\midrule
\textbf{ship}
& $Acc_{E}$$\downarrow$ & 100.00 & 40.00 & 30.00 & 25.00 & 4.00 & 17.00 & 4.00 & 3.00 \\
& $Acc_{R}$$\downarrow$ & 70.00 & 64.00 & 46.00 & 58.00 & 46.00 & 32.00 & 18.00 & 6.00 \\
& $Acc_{L}$$\uparrow$ & 89.11 & 88.56 & 87.78 & 86.33 & 88.56 & 89.11 & 89.11 & 89.11 \\
& \cellcolor{red!20} H$\uparrow$ & \multicolumn{1}{c}{\cellcolor{red!20} -} & \cellcolor{red!20} 53.82 & \cellcolor{red!20} 67.88 & \cellcolor{red!20} 61.57 & \cellcolor{red!20} 74.58 & \cellcolor{red!20} 79.00 & \cellcolor{red!20} 88.67 & \cellcolor{red!20} 93.26 \\
\midrule
\textbf{truck}
& $Acc_{E}$$\downarrow$ & 100.00 & 30.00 & 13.00 & 0.00 & 0.00 & 0.00 & 0.00 & 2.00 \\
& $Acc_{R}$$\downarrow$ & 88.00 & 74.00 & 44.00 & 50.00 & 26.00 & 12.00 & 26.00 & 2.00 \\
& $Acc_{L}$$\uparrow$ & 87.11 & 86.11 & 87.11 & 84.33 & 84.22 & 83.78 & 78.22 & 84.67 \\
& \cellcolor{red!20} H$\uparrow$ & \multicolumn{1}{c}{\cellcolor{red!20} -} & \cellcolor{red!20} 46.61 & \cellcolor{red!20} 73.47 & \cellcolor{red!20} 71.67 & \cellcolor{red!20} 84.78 & \cellcolor{red!20} 90.09 & \cellcolor{red!20} 82.65 & \cellcolor{red!20} 93.11 \\
\midrule
\textbf{Avg}
& $Acc_{E}$$\downarrow$ & 99.10 & 22.20 & 12.50 & 3.30 & 2.30 & 2.50 & \textbf{0.40} & 1.50 \\
& $Acc_{R}$$\downarrow$ & 87.20 & 63.20 & 39.40 & 47.60 & 28.20 & 10.00 & 13.80 & \textbf{2.00} \\
& $Acc_{L}$$\uparrow$ & 87.33 & 85.49 & 81.87 & 85.53 & 85.50 & 81.58 & 79.09 & \textbf{85.56} \\
& \cellcolor{red!20} H$\uparrow$ & \multicolumn{1}{c}{\cellcolor{red!20} -} & \cellcolor{red!20} 56.03 & \cellcolor{red!20} 73.72 & \cellcolor{red!20} 70.00 & \cellcolor{red!20} 81.90 & \cellcolor{red!20} 88.74 & \cellcolor{red!20} 87.13 & \cellcolor{red!20} \textbf{93.58} \\
\hline
\end{tabular}
\end{table*}

\section{Detailed Analysis of Adversarial Robustness}
\label{app:adversarial_robustness_details}

To address the critical concern of adversarial robustness, we evaluated Semantic Surgery against both black-box and white-box adversarial prompt attacks, which are designed to circumvent erasure mechanisms and regenerate forbidden concepts~\cite{tsairing, zhang2024generate}. % Note: updated cite key for ringabell

\paragraph{Experimental Setup.}
We conducted two sets of experiments. \textbf{(1) Black-Box Attack:} We used the Ring-A-Bell (RAB) benchmark~\cite{tsairing}, a model-agnostic attack that generates adversarial prompts through linguistic manipulation. To ensure statistical significance, we expanded the test set to 380 adversarial prompts targeting object erasure. We compared our method against both training-free (SLD, SAFREE) and parameter-modifying (MACE, Receler) baselines. \textbf{(2) White-Box Attack:} We adapted the UnlearnDiffAtk framework~\cite{zhang2024generate}, an optimization-based attack that directly manipulates embeddings in a model-aware manner, to test our method's resilience against a gradient-based search for vulnerabilities. For both setups, we measure the Attack Success Rate (ASR), where lower is better. Our core method (Semantic Surgery without LCP) was used for these tests.

\paragraph{Detailed Analysis.}
As shown in Table~\ref{tab:adversarial_robustness} in the main paper, Semantic Surgery exhibits state-of-the-art adversarial robustness.
Against the large-scale black-box RAB attack, our method achieves an ASR of just \textbf{1.05\%}, a 3.7x reduction compared to the strongest baseline, MACE (3.95\%). A one-sided Fisher's Exact Test on this 380-sample set confirms this difference is highly statistically significant (p=0.0089).

Even more notably, our method achieved a \textbf{0.0\% ASR} against the white-box UnlearnDiffAtk. We hypothesize this exceptional resilience stems from our method's core "binary gate" mechanism. The Semantic Biopsy step (Sec.~\ref{sec:semantic_biopsy}) creates a sharp decision boundary based on cosine similarity ($\alpha_c$). An adversarial attack faces a difficult optimization challenge: it must craft an embedding that contains the target concept visually but remains on the "concept-absent" side of the decision threshold $\beta$ semantically, a contradictory objective that proved insurmountable for the attacks tested.

Interestingly, we observed that while the white-box attack failed to regenerate the concept, it often produced images with significant quality degradation. This reveals a secondary benefit: our framework acts as a \textbf{built-in threat detection system}. An attacker's attempt to find a bypass consistently triggers a high concept presence score ($\hat{\rho}$), even if the final image is corrupted. By monitoring high-$\hat{\rho}$ flags for erased concepts at the pre-generation stage, a system can proactively detect and log adversarial probing attempts without needing to analyze the final visual output.

% \section{Qualitative Results}
% \label{app:qualitative_results}
% \label{app:style_examples}

% \begin{figure}[ht!]
%     \centering
%     \includegraphics[
%     trim=20mm 10mm 10mm 10mm,  % 裁剪左/下5mm，右/上10mm
%     clip,                     % 启用裁剪
%      width=1.0\textwidth       % 缩放到页面宽度的80%
%   ]{./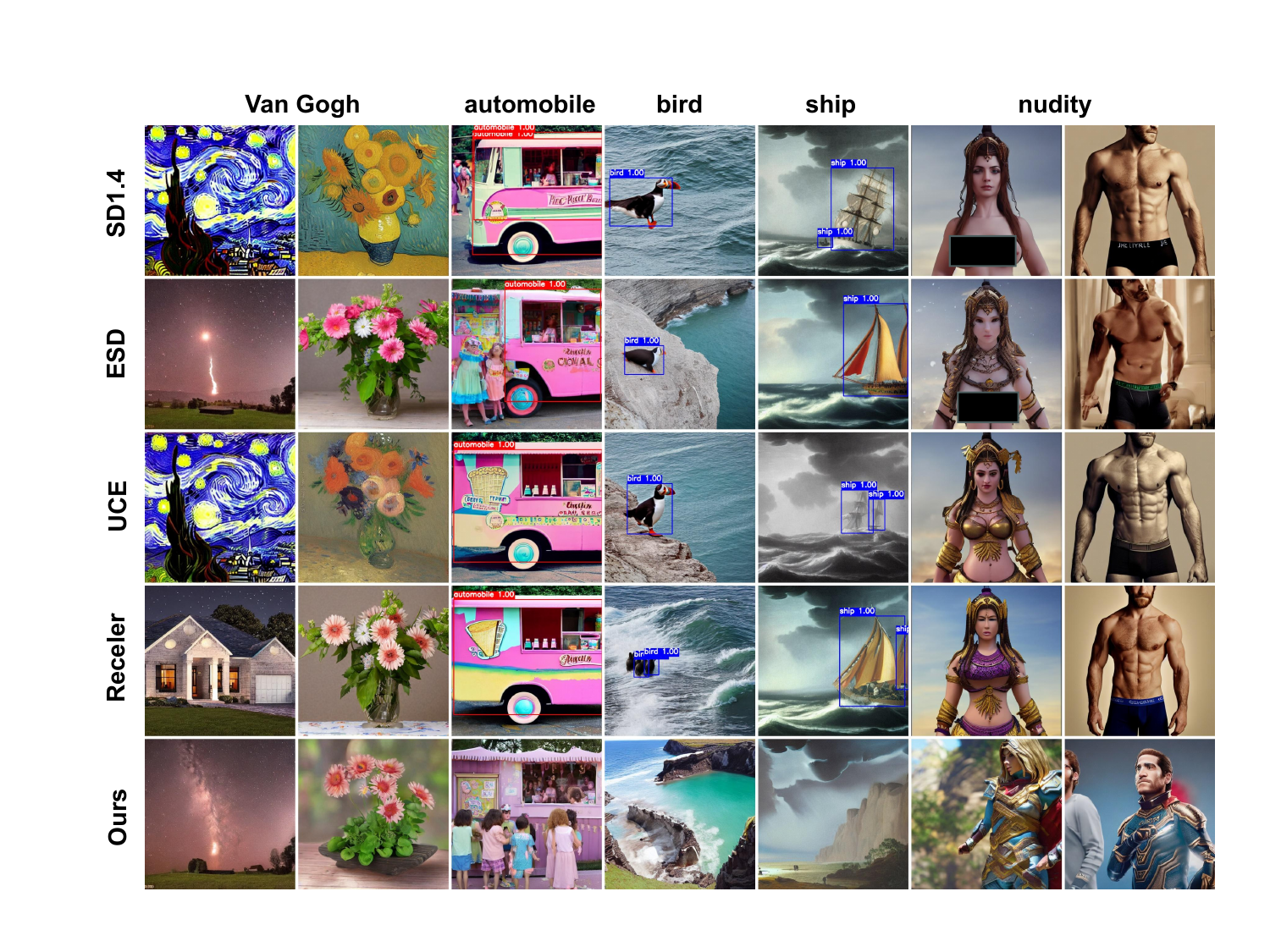} % Replace with your actual figure
%     \caption{Qualitative comparison across different tasks.}
%     \label{fig:appendix_g1}
% \end{figure}

\begin{figure}[ht!]
    \centering
    \includegraphics[
    trim=0mm 80mm 0mm 150mm,  % 裁剪左/右5mm，下/上10mm
    clip,                     % 启用裁剪
     width=0.95\textwidth       % 缩放到页面宽度的80%
  ]{./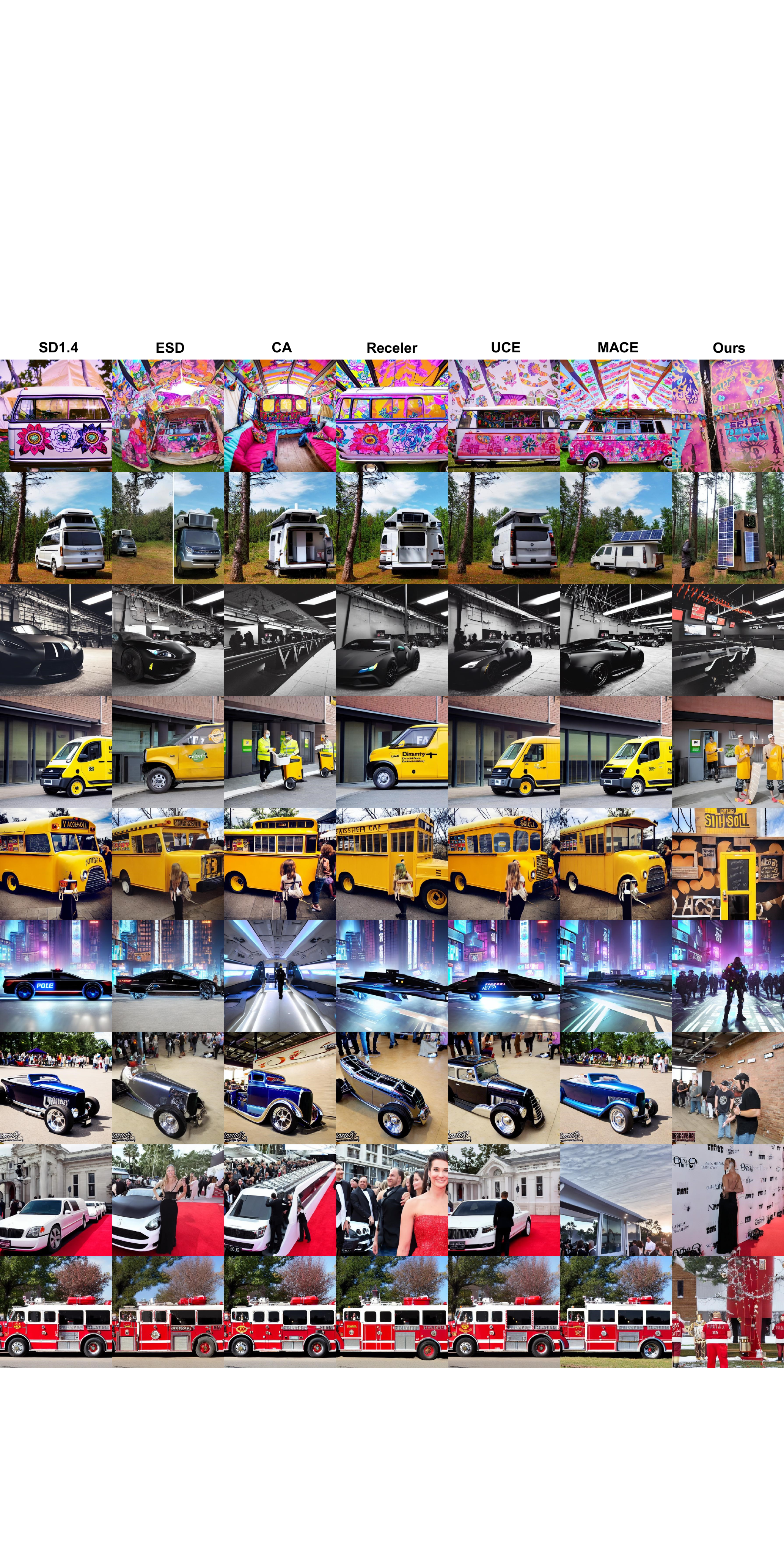} % Replace with your actual figure
    \caption{Qualitative comparison of Completeness On CIFAR10.(erased automobile)}
    \label{fig:appendix_rob}
\end{figure}

% \begin{figure}[ht!]
%     \centering
%     \includegraphics[
%     trim=0mm 0mm 0mm 0mm,  % 裁剪左/右5mm，下/上10mm
%     clip,                     % 启用裁剪
%      width=0.95\textwidth       % 缩放到页面宽度的80%
%   ]{./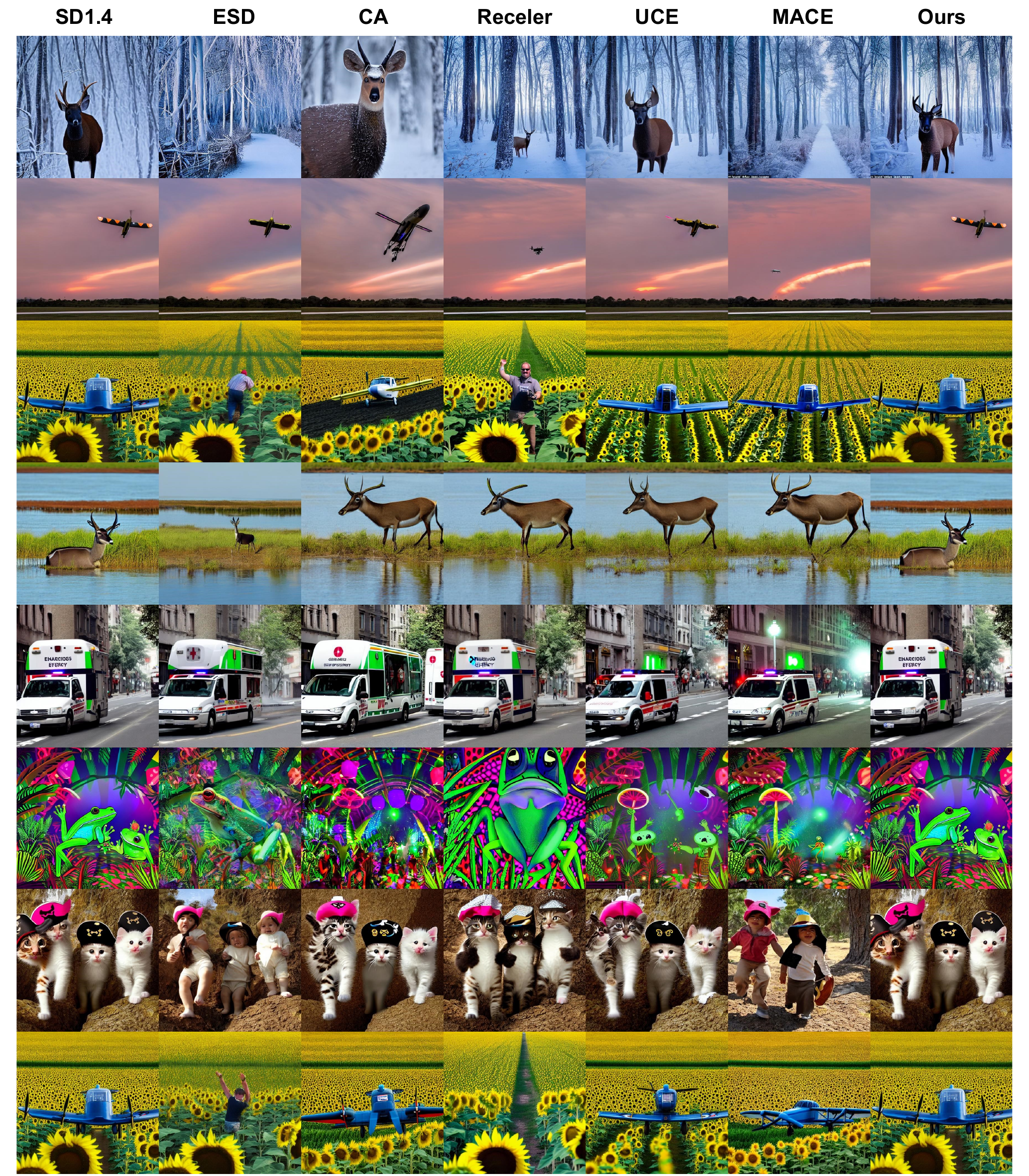} % Replace with your actual figure
%     \caption{Qualitative comparison of Locality On CIFAR10 .}
%     \label{fig:appendix_locality}
% \end{figure}

% \begin{figure}[ht!]
%     \centering
%     \includegraphics[
%     trim=1mm 50mm 1mm 1mm,  % 裁剪左/右5mm，下/上10mm
%     clip,                     % 启用裁剪
%      width=0.95\textwidth       % 缩放到页面宽度的80%
%   ]{./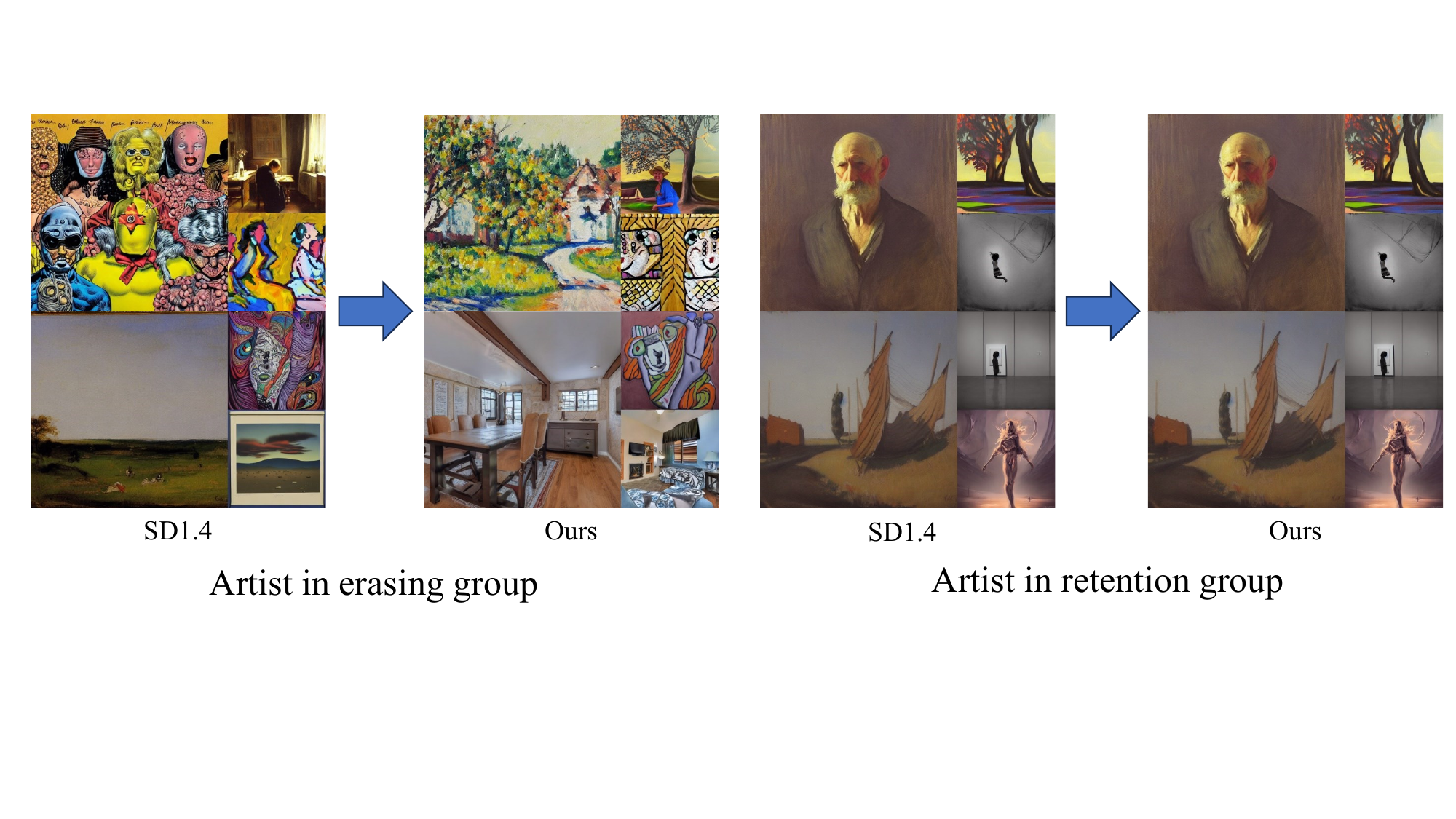} % Replace with your actual figure
%     \caption{Qualitative comparison on Artist Removal.}
%     \label{fig:appendix_artist}
% \end{figure}

% \begin{figure}[ht!]
%     \centering
%     \includegraphics[
%     trim=1mm 55mm 1mm 1mm,  % 裁剪左/右5mm，下/上10mm
%     clip,                     % 启用裁剪
%      width=0.95\textwidth       % 缩放到页面宽度的80%
%   ]{./res/cel_n.pdf} % Replace with your actual figure
%     \caption{Qualitative comparison on Celebrity Removal.}
%     \label{fig:appendix_cel}
% \end{figure}

\section{Qualitative Results}
\label{app:qualitative_results}
\label{app:style_examples} % Added this label as it was referenced in the main text

This section provides qualitative examples illustrating the performance of Semantic Surgery.
Figure~\ref{fig:appendix_g1_qualitative} shows side-by-side comparisons for various erasure tasks including object (automobile, bird, ship), explicit content (Nude, showing a safe alternative), style (Van Gogh). These examples demonstrate the method's ability to effectively remove the target concept while preserving the overall image structure and non-target elements.
Figure~\ref{fig:appendix_locality_qualitative} and Figure~\ref{fig:appendix_rob} specifically highlights the locality and the completeness of our method on CIFAR-10 object erasure. The samples are randomly chosen from the erasure group and the remaining group. As our method successfully remove the targeted concepts while preserving most of the unrelated semantics remaining in the image, our method succeed in eliminating the targeted concepts with accurate semantic surgery. 
Figure~\ref{fig:appendix_artist_qualitative} provides examples of artistic style erasure . The generated images successfully remove the characteristic stylistic features of the target artist while retaining the subject matter described in the prompt.
Figure~\ref{fig:appendix_cel_qualitative} demonstrates celebrity erasure. Images generated for erased celebrities do not depict the individuals, while images for retained celebrities are generated correctly.
These qualitative results visually corroborate the quantitative findings presented in the main paper, showcasing the effectiveness and precision of Semantic Surgery.

\begin{figure}[ht!]
    \centering
    \includegraphics[trim=20mm 10mm 10mm 10mm, clip, width=1.0\textwidth]{} % PATH TO YOUR ACTUAL IMAGE
    \caption{Qualitative comparison across different concept erasure tasks.}
    \label{fig:appendix_g1_qualitative} % Changed label
\end{figure}

\begin{figure}[ht!]
    \centering
    \includegraphics[trim=0mm 0mm 0mm 0mm, clip, width=0.95\textwidth]{} % PATH TO YOUR ACTUAL IMAGE
    \caption{Qualitative comparison of locality on CIFAR-10.}
    \label{fig:appendix_locality_qualitative} % Changed label
\end{figure}

\begin{figure}[ht!]
    \centering
    \includegraphics[trim=1mm 50mm 1mm 1mm, clip, width=0.95\textwidth]{} % PATH TO YOUR ACTUAL IMAGE
    \caption{Qualitative comparison on Artist Removal.}
    \label{fig:appendix_artist_qualitative} % Changed label
\end{figure}

\begin{figure}[ht!]
    \centering
    \includegraphics[trim=1mm 50mm 1mm 1mm, clip, width=0.95\textwidth]{./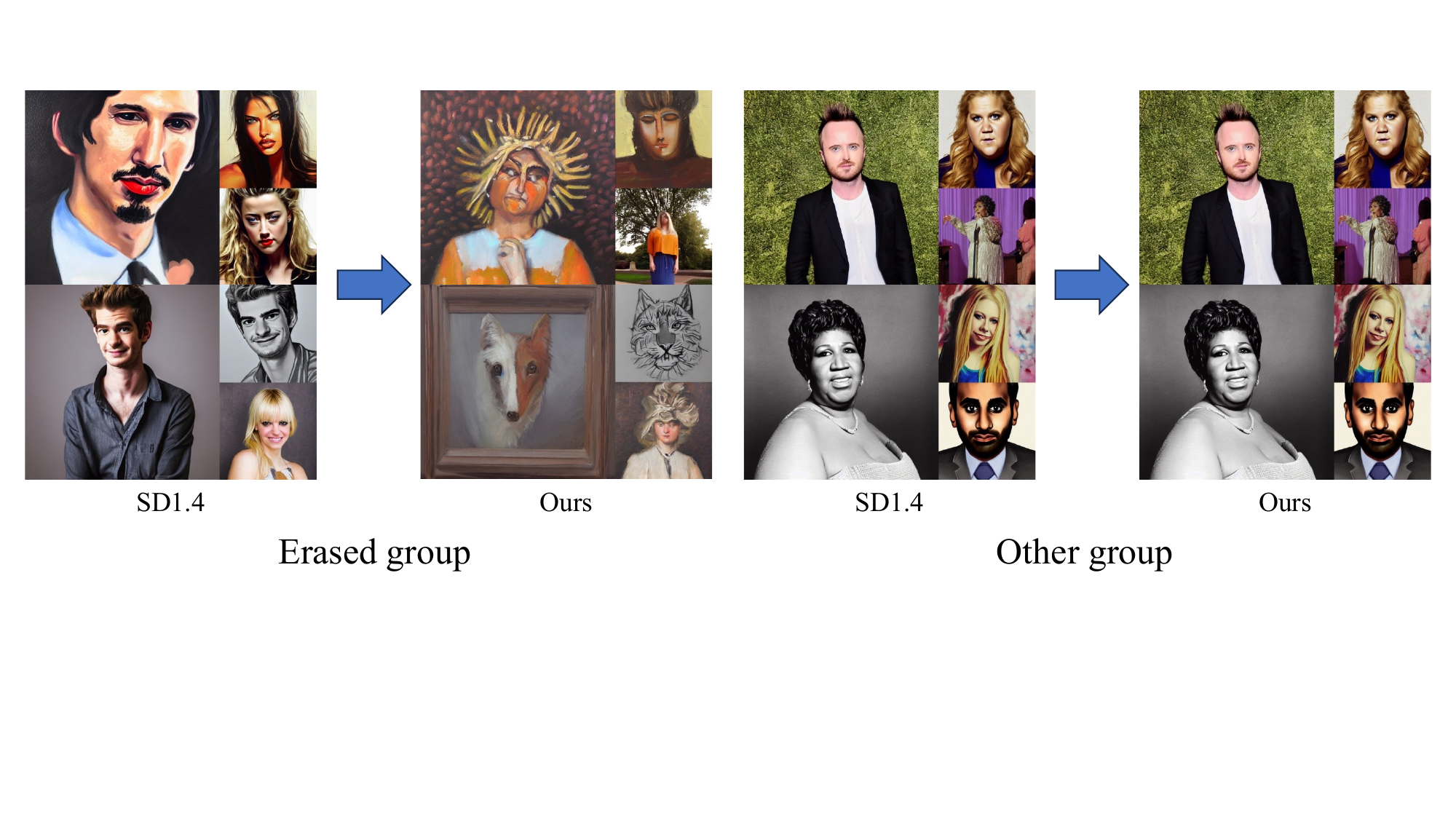} % PATH TO YOUR ACTUAL IMAGE
    \caption{Qualitative comparison on Celebrity Removal. The target celebrity is erased from the generated image, while non-target celebrities or general scenes are generated correctly.}
    \label{fig:appendix_cel_qualitative} % Changed label
\end{figure}

\section{Limitations and Future Work}
\label{app:limitations}

While Semantic Surgery demonstrates strong performance, it has several limitations that open avenues for future work:

\begin{itemize}[leftmargin=1.5em]

    \item \textbf{Dependence on Text Encoder's Linear Structure:} 
    The efficacy of our approach is fundamentally conditioned on the text encoder (e.g., CLIP) exhibiting a near-linear structure where concepts are semantically separable. This assumption, while holding for many common concepts, may be less effective for: (a) future, potentially non-linear encoder architectures, or (b) highly entangled, abstract, or metaphorical concepts (e.g., "a feeling of nostalgia") where a clean vector representation is ill-defined.

    \item \textbf{Sources of Non-Local Effects:}
    Although our method shows strong locality, it is not entirely free from non-local effects (as seen in the $Acc_L$ metric). We identify three primary sources for this:
    \begin{itemize}[leftmargin=1.5em]
        \item \textit{Imperfect Semantic Thresholding:} The decision boundary defined by $\beta$ may not be perfectly sharp for every prompt-concept pair. In ambiguous cases, our method might perform a partial semantic operation, inadvertently affecting related but untargeted semantics.
        \item \textit{Visual Detector Errors (in LCP):} This is a significant factor. For the optional LCP module, false positives from the vision detector can trigger an unnecessary second-stage erasure, harming locality. For example, when erasing "automobile," the AOD detector sometimes misclassifies "trucks" as "automobiles," causing our LCP module to incorrectly trigger a stronger erasure on prompts that should have generated a "truck." This highlights the critical dependence of the LCP's locality on the detector's precision.
        \item \textit{Inherent Model Stochasticity:} Some locality degradation is attributable to the inherent stochasticity of the base generative model itself, which can fail to generate concepts perfectly aligned with the prompt even without any intervention.
    \end{itemize}

    \item \textbf{Detector Dependency for LCP Module:}
    The performance of our optional LCP module is inherently tied to the availability and accuracy of an external visual detector. The lack of reliable detectors for abstract concepts (e.g., complex artistic styles) limits the module's applicability to more concrete, visually verifiable concepts. Furthermore, the detector's performance creates a trade-off: false negatives can lead to incomplete erasure, while false positives can harm locality by triggering incorrect feedback, as discussed above.

    \item \textbf{Scalability to Newer Models:}
    Our experiments were conducted on Stable Diffusion v1.4 to ensure a fair and direct comparison with the extensive list of prior works in concept erasure~\cite{gandikota2023erasing, lu2024mace, huang2024receler}, which almost exclusively use it as a standard benchmark. Validating the performance and adapting the calibration of Semantic Surgery for newer, more powerful models like SDXL or SD3 is a crucial next step for future work.
    
\end{itemize}

Future work could focus on developing adaptive hyperparameter tuning mechanisms, exploring LCP mitigation strategies that are less reliant on external detectors, and investigating methods to further improve concept disentanglement in the embedding space for even more precise multi-concept erasure.

\section{Broader Impact}
\label{app:broader_impact}

Semantic Surgery offers a promising approach for enhancing the safety and controllability of text-to-image diffusion models.

\textbf{Potential Positive Societal Impacts:}
\begin{itemize}[leftmargin=1.5em]
    \item \textbf{Harmful Content Reduction:} The primary positive impact is the ability to effectively remove undesirable concepts such as explicit content (as demonstrated by near-perfect nudity removal), hate speech related symbols (if properly defined as concepts), or violent imagery. This can make generative AI tools safer for wider public use and reduce the proliferation of harmful synthetic media.
    \item \textbf{Copyright and Intellectual Property Protection:} The method can be used to erase copyrighted artistic styles, characters, or specific celebrity likenesses, helping to mitigate infringement concerns associated with generative models. This could foster more ethical use of AI in creative industries.
    \item \textbf{Bias Mitigation:} If biases (e.g., stereotypical associations) can be identified and represented as semantic concepts, Semantic Surgery could potentially be used to neutralize these biases in generated images, leading to fairer and more equitable AI systems. For example, removing a concept that a model over-associates with a particular demographic.
    \item \textbf{Enhanced User Control and Customization:} Beyond safety, the ability to precisely remove concepts gives users finer-grained control over image generation, allowing for more creative and specific outputs by excluding unwanted elements.
    \item \textbf{Reduced Need for Costly Retraining:} As a training-free method, it offers a more agile and cost-effective way to update safety protocols or adapt to new content restrictions compared to retraining large diffusion models.
\end{itemize}

\textbf{Potential Negative Societal Impacts and Misuse:}
\begin{itemize}[leftmargin=1.5em]
    \item \textbf{Over-Erasure and Censorship Concerns:} If not carefully calibrated or if applied too broadly, concept erasure techniques could lead to over-erasure, stifling creative expression or unintentionally removing benign content that is semantically close to a target concept. Defining what constitutes an "undesirable" concept can be subjective and culturally dependent, raising concerns about who decides what is erased and the potential for censorship.
    \item \textbf{Adversarial Attacks and Circumvention:} Like any safety mechanism, Semantic Surgery might be susceptible to adversarial attacks designed to bypass its erasure capabilities. Malicious actors could attempt to craft prompts or manipulate embeddings in ways that circumvent the concept detection or neutralization process.
    \item \textbf{False Sense of Security:} While effective, no erasure method is likely to be 100\% foolproof against all possible prompts or concept variations. Over-reliance on such tools without acknowledging their limitations could lead to a false sense of security regarding the safety of generative models.
    \item \textbf{Impact on Artistic Expression or Fair Use:} While useful for IP protection, aggressive erasure of artistic styles could also limit transformative uses or artistic exploration that might fall under fair use or be part of legitimate artistic critique or parody.
    \item \textbf{Arms Race in Generative AI Safety:} The development of erasure techniques might contribute to an "arms race" where methods to generate problematic content and methods to block it continually evolve, requiring ongoing research and adaptation.
\end{itemize}

\textbf{Mitigation Strategies:}
To mitigate potential negative impacts, several strategies can be considered:
\begin{itemize}[leftmargin=1.5em]
    \item \textbf{Transparency and User Control:} Users should be aware when concept erasure is being applied and, where appropriate, have some control over its application or intensity.
    \item \textbf{Careful Policy and Guideline Development:} The definition of concepts to be erased should be guided by clear, ethical, and transparent policies, ideally developed with community input.
    \item \textbf{Robustness Testing and Red Teaming:} Continuously test the system against adversarial attacks and diverse prompts to identify and address vulnerabilities.
    \item \textbf{Combining with Other Safety Measures:} Semantic Surgery should be seen as one layer in a multi-faceted approach to AI safety, complemented by dataset filtering, model alignment techniques, and post-hoc detection.
    \item \textbf{Research into Explainability:} Further research into why certain concepts are detected or missed can help improve the precision and fairness of the erasure process.
\end{itemize}
Overall, Semantic Surgery is a tool with significant potential for positive impact, but like all powerful technologies, its deployment requires careful consideration of ethical implications and potential misuse.

% new file

% \input{additional_materials/qualitative_analysis_of_co_occurrence_encoding}

% \input{additional_materials/visual_feedback_implementation_details}

% \appendix

% \section{Technical Appendices and Supplementary Material}
% Technical appendices with additional results, figures, graphs and proofs may be submitted with the paper submission before the full submission deadline (see above), or as a separate PDF in the ZIP file below before the supplementary material deadline. There is no page limit for the technical appendices.
\clearpage
%%%%%%%%%%%%%%%%%%%%%%%%%%%%%%%%%%%%%%%%%%%%%%%%%%%%%%%%%%%%

\end{document}